\newcites{supp}{Supplementary References}
\newcommand{\cmark}{\ding{51}}%
\newcommand{\xmark}{\ding{55}}%
\newtheorem*{rep@theorem}{\rep@title}
\newcommand{\newreptheorem}[2]{%
\newenvironment{rep#1}[1]{%
 \def\rep@title{#2 \ref{##1}}%
 \begin{rep@theorem}}%
 {\end{rep@theorem}}}
\DeclareMathOperator*{\argmin}{arg\,min}
\newcommand{\norm}[1]{\left\lVert#1\right\rVert}
\newcommand{\snorm}[1]{\lVert#1\rVert}
\newcommand{\pder}[2]{\frac{\partial#2}{\partial #1}}
\newcommand{\spder}[2]{\partial_#1#2}
\newcommand{\der}[2]{\frac{\mathrm{d}#2}{\mathrm{d} #1}}
\newcommand{\sder}[2]{\mathrm{d}_#1#2}
\newcommand{\Id}{\mathrm{Id}}
\newcommand{\evalat}[2]{\left. #1 \right\rvert_{#2}}
\newtheorem{theorem}{Theorem}
\newtheorem{assumption}{Assumption}
\newtheorem{lemma}{Lemma}
\newtheorem{remark}{Remark}
\newtheorem{corollary}{Corollary}
\newcommand{\Lt}{\mathcal{L}}
\newcommand{\Li}{L^\mathrm{learn}}
\newcommand{\Lo}{L^\mathrm{eval}}
\newcommand{\phisb}{\phi_\beta^*}
\newcommand{\phisbt}{\phi_{\theta, \beta}^*}
\newcommand{\phisz}{\phi_0^*}
\newcommand{\phiszt}{\phi_{\theta, 0}^*}
\newcommand{\phist}{\phi_{\theta}^*}
\newcommand{\phisi}{\phi_\infty^*}
\newcommand{\phihb}{\hat{\phi}_\beta}
\newcommand{\phihz}{\hat{\phi}_0}
\newcolumntype{L}[1]{>{\raggedright\let\newline\\\arraybackslash\hspace{0pt}}m{#1}}
\title{A contrastive rule for meta-learning}
\author{%
  Nicolas Zucchet\thanks{Equal contribution; arbitrary ordering.}\\
  Department of Computer Science\\
  ETH Z\"{u}rich\\
  \texttt{nzucchet@inf.ethz.ch}\\
  \And
  Simon Schug$^*$\\
  Institute of Neuroinformatics\\
  University of Z\"{u}rich \& ETH Z\"{u}rich\\
  \texttt{sschug@ethz.ch}\\
  \AND
  Johannes von Oswald$^*$\\
  Department of Computer Science\\
  ETH Z\"{u}rich\\
  \texttt{voswaldj@ethz.ch} \\
  \And
  Dominic Zhao\\
  Institute of Neuroinformatics\\
  University of Z\"{u}rich \& ETH Z\"{u}rich\\
  \texttt{dozhao@ethz.ch} \\
  \AND
  João Sacramento\\
  Institute of Neuroinformatics\\
  University of Z\"{u}rich \& ETH Z\"{u}rich\\
  \texttt{rjoao@ethz.ch}\\
}
\let\oldmaketitle\maketitle
\renewcommand{\maketitle}{\oldmaketitle\setcounter{footnote}{0}}
\begin{document}

\maketitle

\begin{abstract}
Humans and other animals are capable of improving their learning performance as they solve related tasks from a given problem domain, to the point of being able to learn from extremely limited data. While synaptic plasticity is generically thought to underlie learning in the brain, the precise neural and synaptic mechanisms by which learning processes improve through experience are not well understood. Here, we present a general-purpose, biologically-plausible meta-learning rule which estimates gradients with respect to the parameters of an underlying learning algorithm by simply running it twice. Our rule may be understood as a generalization of contrastive Hebbian learning to meta-learning and notably, it neither requires computing second derivatives nor going backwards in time, two characteristic features of previous gradient-based methods that are hard to conceive in physical neural circuits. We demonstrate the generality of our rule by applying it to two distinct models: a complex synapse with internal states which consolidate task-shared information, and a dual-system architecture in which a primary network is rapidly modulated by another one to learn the specifics of each task. For both models, our meta-learning rule matches or outperforms reference algorithms on a wide range of benchmark problems, while only using information presumed to be locally available at neurons and synapses. We corroborate these findings with a theoretical analysis of the gradient estimation error incurred by our rule.\footnote{Code available at \url{https://github.com/smonsays/contrastive-meta-learning}}
\end{abstract}

\section{Introduction}

The seminal study of \citet{harlow_formation_1949} established that humans and non-human primates can become better at learning when presented with a series of learning tasks which share a certain common structure. To achieve this, the brain must extract and encode whichever aspects are common within a problem domain, in such a way that future learning performance is improved. This capacity, which we refer to as meta-learning, confers great evolutionary advantage to an organism over another that must face new tasks starting from \emph{tabula rasa}. The neural and synaptic basis of this higher-order form of learning is largely unknown and theories are notably scarce \citep{brea_does_2016}. The present work focuses on developing one such theory.

Formally, we define learning as the optimization of a data-dependent objective function with respect to learnable parameters, following the prevalent view in machine learning \citep{richards_deep_2019}. Meta-learning can be straightforwardly accommodated for in this framework by first specifying a learning algorithm through a set of meta-parameters, and then measuring post-learning performance through a meta-objective function \citep{schmidhuber_evolutionary_1987,bengio_learning_1990,chalmers_evolution_1991,thrun_learning_1998,hochreiter_learning_2001}. Formulated as such, meta-learning corresponds to a hierarchical optimization problem, where lower-level parameters are optimized to learn the specifics of each task, and meta-parameters are adapted over tasks to improve overall learning performance.

An essential question in this framework is how to optimize meta-parameters. In current deep learning practice, meta-parameters are almost always learned by backpropagation-through-learning, an instance of backpropagation-through-time \citep{werbos_backpropagation_1990}. While a number of biologically-plausible designs \citep{whittington_theories_2019,richards_dendritic_2019,roelfsema_control_2018,richards_deep_2019,lillicrap_backpropagation_2020} have been developed for the standard error backpropagation algorithm for feedforward neural networks  \citep{werbos_beyond_1974,rumelhart_learning_1986}, backpropagation-through-learning suffers from a number of issues which appear to be fundamentally difficult to overcome in biological circuits. For example, when learning involves optimizing synaptic connection weights -- as it is presumed to be the case in the brain -- implementing backpropagation-through-learning would entail backtracking through a sequence of synaptic changes in reverse-time order, while carrying out operations which would require knowledge of all synaptic weights to be available at a single synapse. This is clearly at odds with what is currently known about synaptic plasticity. Thus, calculating meta-parameter gradients by backpropagation is both computationally expensive, and hard to reconcile with biological constraints.

Here we present a meta-learning rule for adapting meta-parameters which does not exhibit such issues. Instead of backpropagating through a learning process, our rule estimates meta-parameter gradients by running the underlying learning algorithm twice: learning a task is followed by a second run to solve an augmented learning problem which includes the meta-objective. Our rule has a number of appealing properties:
\begin{enumerate*}[label=(\arabic*)]
\item it runs forward in time, making the learning rule causal;
\item implementing it only requires temporarily buffering one intermediate state;
\item it does not evaluate second derivatives, thus avoiding accessing information that is non-local to a parameter; and
\item it approximates meta-gradients as accurately as needed.
\end{enumerate*}
Furthermore, our rule is generically applicable and it can be used to learn any meta-parameter which influences the meta-objective function.

The local and causal nature of our rule allows us to develop a theory of meta-plastic synapses, which slowly consolidate information over tasks in their internal hidden states or in their synaptic weights. We show through experiments that, when governed by our meta-learning rule, such slow adaptation processes result in improved learning performance in a variety of benchmark problems and network architectures, from deep convolutional to recurrent spiking neural networks, on both supervised and reinforcement learning paradigms. Moreover, we find that our meta-learning rule performs as well or better than reference methods, including backpropagation-through-learning, and we provide a theoretical bound for its meta-gradient estimation error which is confirmed by our experimental findings. Thus, our results demonstrate that gradient-based meta-learning is possible with local learning rules, and suggest ways by which slower synaptic processes in the brain optimize the performance of faster learning processes.

\section{Background and problem setup}
The goal of meta-learning is to improve the performance of a learning algorithm through experience. We begin by formalizing this goal as a mathematical optimization problem and outlining its solution with standard gradient-based methods. The approach we present below underlies a large body of work studying meta-learning in neural networks \citep[e.g.,][]{sutton_adapting_1992,thrun_learning_1998,andrychowicz_learning_2016,finn_model-agnostic_2017}. We also discuss why these standard methods may be deemed unsatisfactory as models of meta-learning in the brain.

\paragraph{Problem setup.} Formally, we wish to optimize the meta-parameters $\theta$ of an algorithm which learns to solve a given task $\tau$ by changing the parameters $\phi$ of a model. Each task is drawn from a distribution $p(\tau)$ representing the problem domain and comes with an associated loss function $L_\tau^\mathrm{learn}(\phi, \theta)$, which depends on some data $D_\tau^\mathrm{learn}$. The goal of learning is to minimize this loss while keeping the meta-parameters $\theta$ fixed; we denote the outcome of learning task $\tau$ by $\phi_{\theta, \tau}^*$. The subscript $\theta$ in $\phi_{\theta,\tau}^*$ is here to emphasize that the solution of a task implicitly depends on the meta-parameters $\theta$ used during learning. Learning performance is then evaluated by measuring again a loss function $L^\mathrm{eval}_\tau(\phi_{\theta,\tau}^*, \theta)$, defined on new evaluation data $D^\mathrm{eval}_\tau$ from the same task. The meta-objective is this evaluation loss, averaged over tasks. Hence, we formalize meta-learning as a bilevel optimization problem, which can be compactly written as follows:
\begin{equation}
\begin{aligned}
  \label{eq:bilevel-meta-learning-problem}
  \min_\theta \; \mathbb{E}_{\tau \sim p(\tau)} \! \left[ L_\tau^\mathrm{eval}(\phi_{\theta,\tau}^*, \theta)\right] \quad
  \;\,\mathrm{s.t.} \; \; \phi_{\theta,\tau}^* \in \mathop{\mathrm{arg\,min}}_\phi L_\tau^\mathrm{learn}(\phi, \theta).
\end{aligned}
\end{equation}

In this paper, we approach problem \eqref{eq:bilevel-meta-learning-problem} with stochastic gradient descent, which uses meta-gradient information to update meta-parameters after learning a task (or a minibatch of tasks) presented by the environment. For a given task $\tau$ we thus need to compute the meta-gradient
\begin{equation}
  \label{eq:meta-gradient-for-task-tau}
  \nabla_{\theta,\tau} := \left(\frac{\mathrm{d}}{\mathrm{d}\theta} L_\tau^\mathrm{eval}(\phi_{\theta,\tau}^*, \theta) \right)^\top.
\end{equation}
The implicit dependence of $\phi_{\theta,\tau}^*$ on the meta-parameters $\theta$ complicates the computation of the meta-gradient; differentiating through the learning algorithm efficiently is a central question in gradient-based meta-learning. We next review two major known ways of doing so.

\paragraph{Review of backpropagation-through-learning.} A common strategy followed in previous work \citep[cf.][]{hospedales_meta-learning_2020} is to replace the solution $\phi_{\theta,\tau}^*$ to a learning task by the result $\phi_{\theta,\tau,T}$ obtained after applying a differentiable learning algorithm for $T$ time steps, not necessarily until convergence. One advantage of this formulation is that the computational graph for $\phi_{\theta,\tau,T}$ is explicitly available. Thus, backpropagation can be invoked to compute the meta-gradient $\nabla_{\theta,\tau}$, yielding what we refer to as backpropagation-through-learning. This approach is hardly biologically-plausible, as it requires storing and revisiting the parameter trajectory $\{\phi_t\}_{t=1}^T$ backwards in time, from $t=T$ to $t=0$. Moreover, when the learning algorithm which produces $\phi_{\theta,\tau,T}$ is itself gradient-based, as it typically is in deep learning, differentiating through learning gives rise to second derivatives. These second-order terms involve cross-parameter dependencies that are difficult to resolve with local processes.

\paragraph{Review of implicit differentiation.} An alternative line of methods \citep{bengio_gradient-based_2000,pedregosa_hyperparameter_2016,rajeswaran_meta-learning_2019,lorraine_optimizing_2020} approaches problem \eqref{eq:bilevel-meta-learning-problem} through the implicit function theorem \citep{dontchev_implicit_2009}. This theorem provides conditions under which the meta-gradient $\nabla_{\theta,\tau}$ is well-defined, while also providing a formula for it. Over backpropagation-through-learning, this approach has the advantages that it does not require storing parameter trajectories $\{\phi_t\}_{t=1}^T$, and that it is agnostic to which algorithm is used to learn a task. However, the meta-gradient formula provided by the implicit function theorem is difficult to evaluate directly for neural network models, as it includes the inverse learning loss Hessian. This makes it hard to design biologically-plausible meta-learning algorithms based directly on the implicit meta-gradient expression. We refer to Section~\ref{sec_app:review_implicit} for more details and an expanded discussion on this class of meta-learning methods.

\section{Contrastive meta-learning}
Here we present a new meta-learning rule which is generically applicable to meta-learning problems of the form \eqref{eq:bilevel-meta-learning-problem}. Our rule is gradient-following, and therefore scalable to neural network problems involving high-dimensional meta-parameters, while being simpler to conceive in biological neural circuits than the standard gradient-based methods reviewed in the previous section.

To derive our meta-learning rule we first introduce an auxiliary objective function which mixes the two levels of the bilevel optimization problem \eqref{eq:bilevel-meta-learning-problem}:
\begin{equation}
  \label{eq:augmented-loss}
  \mathcal{L}_\tau(\phi, \theta, \beta) = L_\tau^\mathrm{learn}(\phi,\theta) + \beta L_\tau^\mathrm{eval}(\phi,\theta).
\end{equation}
We refer to $\mathcal{L}_\tau(\phi,\theta,\beta)$ as the augmented loss function. This auxiliary loss depends on a new scalar parameter $\beta \in \mathbb{R}$, which we call the nudging strength. Positive values of $\beta$ nudge learning towards the meta-objective associated with task $\tau$. Thus, we can define a family of auxiliary learning problems through the augmented loss $\mathcal{L}_\tau$ by varying the nudging strength $\beta$ away from zero. We denote the solutions to these auxiliary learning problems by 
\begin{equation}
  \phi^*_{\theta,\beta,\tau} \in \mathop{\mathrm{arg\,min}}_\phi \mathcal{L}_\tau(\phi, \theta, \beta),
\end{equation}
and we use $\hat{\phi}_{\theta,\beta,\tau}$ to distinguish approximate model parameters found in practice with some learning algorithm from the true minimizers $\phi^*_{\theta,\beta,\tau}$. Note that for the special case of $\beta=0$, we recover a solution $\phi^*_{\theta,0,\tau}$ of the original learning task defined by $L_\tau^\mathrm{learn}(\phi,\theta)$.

Our contrastive meta-learning rule prescribes the following change to the meta-parameters $\theta$ after encountering learning task $\tau$:
\begin{equation}
  \label{eq:CML-delta-theta}
  \Delta_{\theta,\tau} := - \frac{1}{\beta}\left(\frac{\partial \mathcal{L}_\tau}{\partial \theta} (\hat{\phi}_{\theta,\beta,\tau}, \theta, \beta) - \frac{\partial\mathcal{L}_\tau}{\partial \theta}(\hat{\phi}_{\theta,0,\tau}, \theta, 0)\right)^\top.
\end{equation}
This rule contrasts information over two model parameter settings, $\hat{\phi}_{\theta,0,\tau}$ and $\hat{\phi}_{\theta,\beta,\tau}$; it may be understood as a generalization to meta-learning of a classical recurrent neural network learning algorithm known as contrastive Hebbian learning \citep{peterson_mean_1987,movellan_contrastive_1991,baldi_contrastive_1991,oreilly_biologically_1996,scellier_equilibrium_2017}. Intuitively, as we compute the solution to the augmented learning problem with $\beta>0$, we nudge our learning algorithm towards a parameter setting $\hat{\phi}_{\theta,\beta,\tau}$ that would have been better in terms of the meta-objective --- that we wish our algorithm had actually reached, without needing the meta-objective to influence the learning process.

Our rule implements meta-learning by gradient descent when the learning solutions $\hat{\phi}_{\theta,0,\tau}$ and $\hat{\phi}_{\theta,\beta,\tau}$ are exact and as $\beta \to 0$. This important property can be shown by invoking the equilibrium propagation theorem \citep{scellier_equilibrium_2017,scellier_deep_2021} discovered and proved by Scellier and Bengio; we restate this result and present the technical conditions for applying it to meta-learning in Section~\ref{sec_app:equilibrium_propagation}. Critically, $\Delta_{\theta,\tau}$ estimates the meta-gradient $\nabla_{\theta,\tau}$ using only partial derivative information and without ever directly calculating the total derivative in \eqref{eq:meta-gradient-for-task-tau}. Depending on the model, partial derivatives of the augmented loss $\mathcal{L}_\tau$ may be easy to calculate analytically and implement, or they may require dedicated neural circuits for their evaluation; we return to this point in the next section.

We recall that the two points $\hat{\phi}_{\theta,0,\tau}$ and $\hat{\phi}_{\theta,\beta,\tau}$ which appear in \eqref{eq:CML-delta-theta} respectively correspond to approximate solutions of the original and the augmented learning problems. Thus, the information required to implement our rule can be collected causally by invoking the learning algorithm for a second time, after the actual task has been learned, while buffering information across the two runs. In contrast to backpropagation-through-learning, this process runs forward in time, it only requires keeping a single intermediate state in short-term memory, and it is entirely agnostic to which underlying learning algorithm is used. Moreover, as we will show in the theoretical results, its precision can be varied; the same rule can produce both coarse- and fine-grained meta-gradient estimates as needed, by varying the amount of resources spent in learning and by controlling the nudging strength $\beta$.

\section{Models}
In the previous section, our contrastive meta-learning rule was presented in its general form. We now describe two concrete neural models that provide complementary views on how meta-learning could be conceived in the brain. We study the specific meta-learning rules arising from the application of the update \eqref{eq:CML-delta-theta} to each case and discuss their implementation with biological neural circuitry.

\subsection{Synaptic consolidation as meta-learning}
\label{sect:synaptic-model}
We first use our general contrastive meta-learning rule \eqref{eq:CML-delta-theta} to derive meta-plasticity rules for a complex synapse model which has been featured in prior meta-learning \citep{rajeswaran_meta-learning_2019,chen_modular_2020} and continual learning \citep{zenke_continual_2017,kirkpatrick_overcoming_2017} work. Biological synapses are complex devices which comprise components that adapt at multiple time scales. Beyond changes induced by standard long-term potentiation and depression protocols lasting minutes to several hours, synapses exhibit activity-dependent plasticity at much longer time scales \citep{abraham_metaplasticity_2008,fusi_cascade_2005,ziegler_synaptic_2015}. While previous work has focused on characterizing memory retention in more realistic synapse models, here we study how such slow synaptic consolidation processes may support fast future learning through our contrastive meta-learning rule.

In the model we consider, besides a synaptic weight $\phi$ which influences postsynaptic activity, each synapse has an internal consolidated state $\omega$ towards which the weight is attracted whenever the synapse changes. We further allow the attraction strength $\lambda$ to vary over synapses; its reciprocal $\lambda^{-1}$ plays a role similar to a learning rate. For this model the meta-parameters are therefore $\theta=\{\lambda, \omega\}$. We model the interaction between these three components through a quadratic function, which is added to the task-specific learning loss $l^\mathrm{learn}_\tau(\phi)$:
\begin{equation}
\label{eq:synaptic-model-learning-loss}
  L^\mathrm{learn}_\tau(\phi,\theta) = l^\mathrm{learn}_\tau(\phi) + \frac{1}{2}\sum_{i=1}^{|\phi|}\lambda_i(\omega_i - \phi_i)^2.
\end{equation}
In machine learning terms, we regularize the learning loss with a quadratic regularizer. On the other hand, the evaluation loss function  $L_\tau^\mathrm{eval}(\phi)$ depends only on the synaptic weights $\phi$ such that the meta-parameters $\theta$ only influence learning, not prediction.

The partial derivatives which appear in our contrastive meta-learning rule \eqref{eq:CML-delta-theta} can be analytically obtained for this synaptic model. A calculation yields the meta-plasticity rules
\begin{equation}
\label{eq:synaptic-model-updates}
  \Delta_{\omega,\tau} = \frac{\lambda}{\beta} \left(\hat{\phi}_{\theta,\beta,\tau} - \hat{\phi}_{\theta,0,\tau}\right) \quad\text{and}\quad \Delta_{\lambda,\tau} = \frac{1}{2\beta}\left[(\hat{\phi}_{\theta,0,\tau} - \omega)^2 - (\hat{\phi}_{\theta,\beta,\tau} - \omega)^2 \right],
\end{equation}
where all operations are carried out elementwise. Contrastive meta-learning thus offers a principled way to slowly (over learning tasks) consolidate information in the internal states of complex synapses to improve future learning performance. Critically, it leads to meta-plasticity rules that are entirely local to a synapse and are independent of the method used to learn. Our meta-plasticity rules can thus be flexibly applied to improve the performance of any learning algorithm, including a host of biologically-plausible learning rules, from precise neuron-specific error backpropagation circuits \citep[][]{whittington_approximation_2017,payeur_burst-dependent_2021} to stochastic perturbation reinforcement rules \citep{xie_learning_2004}. The only requirement our theory makes is that learning corresponds to the optimization of an objective.

\subsection{Learning by top-down modulation}
\label{sect:modulation}
The second model that we consider is inspired by the modulatory role that is attributed to top-down inputs from higher- to lower-order brain areas. Such modulatory inputs often feature in neural theories of attention and contextual processing \citep{miller_integrative_2001,rikhye_toward_2018,titley_toward_2017}. Here, we explore the possibility that they subserve fast learning of new tasks. We incorporate this insight into a simple meta-learning model, where learning a task $\tau$ corresponds to finding the right pattern of task-specific modulation $\phi_{\theta,\tau}^*$, and meta-learning corresponds to changing synaptic weights $\theta$. Unlike in the complex synapse model presented in the previous section, here we interpret the task-specific parameters $\phi_\tau$ as patterns of neural activity, not synaptic weights. This implies that, if meta-learning succeeds, it becomes possible to learn new tasks on the fast neural time scale without evoking synaptic plasticity.

More concretely, we take as modulatory inputs a multiplicative gain $g$ and an adaptive threshold $b$ per neuron, as done in previous work \citep{zintgraf_fast_2019,perez_film_2018}. Rapid (input-dependent) multiplicative and additive modulation of the sensitivity of the neural input-output response curve $\sigma(x)$ is typically observed in cortical neurons \citep{ferguson_mechanisms_2020}. There exist a number of biophysical mechanisms which allow top-down inputs to modulate $\sigma(x)$ \citep[e.g.,][]{larkum_top-down_2004}. Assuming a simple linear-threshold neuron model with weights $\theta$, this yields the response $\sigma(x) = g(\theta \cdot x - b)_+$ to some input $x$, where $(\cdot)_+$ denotes the positive-part operation. In this model, there are only few learnable parameters $\phi = \{g,b\}$, as they scale with the number of neurons and not with the number of synaptic connections.

We apply contrastive meta-learning to this model by changing synaptic weights $\theta$ according to our rule \eqref{eq:CML-delta-theta}. For this model, partial derivatives of the augmented loss function correspond to the usual derivatives with respect to model parameters that are routinely evaluated to learn deep neural networks; our rule simply asks to compute them twice. We therefore build upon existing theories of learning by backpropagation-of-error in the brain and assume that some mechanism for neuron-specific spatial error backpropagation is available, for example via prediction error neural subpopulations \citep{whittington_approximation_2017} or dendritic error representations \citep{sacramento_dendritic_2018,richards_dendritic_2019,payeur_burst-dependent_2021}, or by invoking equilibrium propagation again \citep{scellier_equilibrium_2017}.

\section{Theoretical and experimental analyses}
In the following, we theoretically analyze the approximation error incurred by our contrastive meta-learning rule before empirically testing it on a suite of meta-learning problems. The objective of our experiments is twofold. First, we aim to confirm our theoretical results and demonstrate the performance of contrastive meta-learning on standard machine learning benchmarks. Second, we want to illustrate the generality of our approach by applying it to various supervised and reinforcement meta-learning problems as well as to a more biologically realistic neuron and plasticity model.

\subsection{Theoretical analysis of the meta-gradient approximation error}

The contrastive meta-learning rule \eqref{eq:CML-delta-theta} only provides an approximation to the meta-gradient. This approximation can be improved by refining the two learning solutions $\hat{\phi}_{\theta, 0, \tau}$ and $\hat{\phi}_{\theta, \beta, \tau}$ through additional computation or by using a better learning algorithm, and by decreasing the nudging strength $\beta$, as prescribed by the equilibrium propagation theorem. In Theorem~\ref{thm:bound_ep_estimate}, we theoretically analyze how the meta-gradient estimate \eqref{eq:CML-delta-theta} benefits from such improvements (see Fig.~\ref{fig:theory-vs-CIFAR}A for a visualization of the result, Section~\ref{sec_app:theory} for a proof and empirical verification of our theoretical results). We find that the refinement of the learning solutions must be coupled to a decrease in $\beta$: too small $\beta$ greatly detracts from the quality of the meta-gradient estimate when the solutions are not improved, while better approximations are inefficient if $\beta$ is not decreased accordingly.
\begin{theorem}[Informal]
    \label{thm:bound_ep_estimate}
    Let $\beta > 0$ and $\delta$ be such that $\lVert \hat{\phi}_{\theta, 0, \tau}-\phi_{\theta, 0, \tau}^* \rVert \leq \delta$ and $\lVert \hat{\phi}_{\theta, \beta, \tau}-\phi_{\theta, \beta, \tau}^* \rVert \leq \delta$. Then, under regularity and convexity assumptions, there exists a constant $C$ such that
    \begin{equation*}
        \lVert {-\Delta_{\theta, \tau} - \nabla_{\theta, \tau}} \rVert \leq C \left (\frac{1+\beta}{\beta}\,\delta + \frac{\beta}{1+\beta} \right) =: \mathcal{B}(\delta, \beta).
    \end{equation*}
\end{theorem}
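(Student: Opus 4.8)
The plan is to decompose the total error into the bias that would remain even with exact inner solutions and the additional error caused by the inexact ones. Write $\hat{G} := -\Delta_{\theta,\tau}$ for the estimate in \eqref{eq:CML-delta-theta}, and let $G^*$ denote the same expression with $\hat{\phi}_{\theta,0,\tau}$ and $\hat{\phi}_{\theta,\beta,\tau}$ replaced by the exact minimizers $\phi^*_{\theta,0,\tau}$ and $\phi^*_{\theta,\beta,\tau}$. The triangle inequality gives $\lVert -\Delta_{\theta,\tau} - \nabla_{\theta,\tau}\rVert \le \lVert \hat{G} - G^*\rVert + \lVert G^* - \nabla_{\theta,\tau}\rVert$, and I would bound the first term by $C\,\frac{1+\beta}{\beta}\,\delta$ and the second by $C\,\frac{\beta}{1+\beta}$, then add and enlarge $C$.

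For the inexact-solution term, the key observation is that $\phi \mapsto \frac{\partial\mathcal{L}_\tau}{\partial\theta}(\phi,\theta,\beta) = \frac{\partial L_\tau^\mathrm{learn}}{\partial\theta}(\phi,\theta) + \beta\,\frac{\partial L_\tau^\mathrm{eval}}{\partial\theta}(\phi,\theta)$ is Lipschitz with constant at most $K_1 + \beta K_2$, where $K_1$ and $K_2$ bound the mixed Hessians $\partial^2_{\phi\theta} L_\tau^\mathrm{learn}$ and $\partial^2_{\phi\theta} L_\tau^\mathrm{eval}$ (a regularity assumption). Applying this once at nudging strength $\beta$ and once at $0$, together with $\lVert \hat{\phi}_{\theta,\beta,\tau} - \phi^*_{\theta,\beta,\tau}\rVert \le \delta$ and $\lVert \hat{\phi}_{\theta,0,\tau} - \phi^*_{\theta,0,\tau}\rVert \le \delta$, yields $\lVert \hat{G} - G^*\rVert \le \frac{1}{\beta}\big((K_1 + \beta K_2)\delta + K_1\delta\big) \le C\,\frac{1+\beta}{\beta}\,\delta$. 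This step is routine.

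The finite-nudging bias is the crux. A clean way to organize it is the substitution $\gamma := \beta/(1+\beta)\in(0,1)$: since rescaling $\mathcal{L}_\tau$ by $1/(1+\beta)$ does not change its argmin, $\phi^*_{\theta,\beta,\tau} = \psi^*_\gamma := \argmin_\phi\big[(1-\gamma)L_\tau^\mathrm{learn}(\phi,\theta) + \gamma L_\tau^\mathrm{eval}(\phi,\theta)\big]$, so the inner solution traces a curve $\psi^*_\gamma$ over the \emph{compact} interval $\gamma\in[0,1]$, running from the learning optimum to the evaluation optimum. Under the convexity assumptions the Hessian $(1-\gamma)\partial^2_\phi L_\tau^\mathrm{learn} + \gamma\,\partial^2_\phi L_\tau^\mathrm{eval}$ stays uniformly bounded below, so by the implicit function theorem $\psi^*_\gamma$ is uniformly $C^2$ on $[0,1]$; equivalently $\lVert \mathrm{d}\phi^*_{\theta,\beta,\tau}/\mathrm{d}\beta\rVert = O\!\big(1/(1+\beta)\big)$, a fact that can also be read off directly from the stationarity identity $\beta\,\partial_\phi L_\tau^\mathrm{eval}(\phi^*_{\theta,\beta,\tau},\theta) = -\partial_\phi L_\tau^\mathrm{learn}(\phi^*_{\theta,\beta,\tau},\theta)$. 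Writing $G^* = \frac{g(\beta)-g(0)}{\beta} + h(\beta)$ with $g(\beta) := \partial_\theta L_\tau^\mathrm{learn}(\phi^*_{\theta,\beta,\tau},\theta)^\top$ and $h(\beta) := \partial_\theta L_\tau^\mathrm{eval}(\phi^*_{\theta,\beta,\tau},\theta)^\top$, and recalling from the equilibrium propagation theorem (Section~\ref{sec_app:equilibrium_propagation}) that $\nabla_{\theta,\tau} = g'(0) + h(0)$, the difference $G^* - \nabla_{\theta,\tau} = \big(\tfrac{g(\beta)-g(0)}{\beta} - g'(0)\big) + \big(h(\beta) - h(0)\big)$ is controlled by a second-order Taylor expansion of $g$ and a first-order expansion of $h$; performing these in the $\gamma$ variable and invoking the uniform $C^2$ bounds gives $\lVert G^* - \nabla_{\theta,\tau}\rVert \le C\gamma = C\,\frac{\beta}{1+\beta}$.

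I expect the main obstacle to be exactly this $\beta$-dependence bookkeeping in the third step: one must check that the convexity hypothesis is strong enough that the relevant Hessians stay uniformly invertible as $\beta\to\infty$ (equivalently $\gamma\to 1$), and then propagate that through the implicit function theorem, the chain rule, and the change of variables so that \emph{every} constant in the Taylor remainders is independent of $\beta$ — which is what upgrades a naive $O(\beta)$ (or $O(\log(1+\beta))$) estimate into the saturating $O\!\big(\beta/(1+\beta)\big)$ bound. The remaining regularity assumptions (bounded mixed Hessians and third derivatives of the losses, existence of the evaluation optimum) enter only to make the Lipschitz estimates and Taylor remainders finite and are otherwise mechanical.
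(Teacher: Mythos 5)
Your proposal is correct and reaches the theorem under the same assumptions the paper uses (strong convexity and smoothness of both losses, Lipschitz Hessians and mixed derivatives), and it shares the paper's top-level skeleton: a triangle-inequality split into a solution-approximation error, bounded via Lipschitz continuity of $\phi\mapsto\partial_\theta\mathcal{L}_\tau(\phi,\theta,\beta)$ exactly as in the paper, and a finite-nudging bias handled by Taylor expansion plus implicit-function-theorem control of the solution path. Where you genuinely diverge is in how the bias term's $\beta$-dependence is tamed. The paper stays in the $\beta$ variable: it Taylor-expands $\beta\mapsto\partial_\theta\mathcal{L}_\tau(\phi^*_{\theta,\beta,\tau},\theta,\beta)$ with integral remainder and then proves, through a chain of lemmas, explicit decay rates $\lVert \mathrm{d}\phi^*_{\theta,\beta,\tau}/\mathrm{d}\beta\rVert = O((1+\beta)^{-2})$ and $\lVert \mathrm{d}^2\phi^*_{\theta,\beta,\tau}/\mathrm{d}\beta^2\rVert = O((1+\beta)^{-3})$ (using the $(1+\beta)\mu$-strong convexity of the augmented loss and the fixed-point identity to bound $\partial_\phi L^\mathrm{eval}$ by $O((1+\beta)^{-1})$), finally integrating $\int_0^\beta(\beta-t)(1+t)^{-2}\mathrm{d}t$ to obtain the saturating $\beta/(1+\beta)$ factor. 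You instead compactify via $\gamma=\beta/(1+\beta)$, observe that $\phi^*_{\theta,\beta,\tau}$ is the minimizer of the convex combination $(1-\gamma)L^\mathrm{learn}+\gamma L^\mathrm{eval}$, split the exact-solution estimate as $\frac{g(\beta)-g(0)}{\beta}+h(\beta)$, and perform the Taylor estimates in $\gamma$ on the compact interval $[0,1]$, where uniform $C^2$ bounds suffice and the $(1+\beta)$-power bookkeeping is absorbed into the change of variables; amusingly, the paper uses this very reparametrization, but only to prove uniform boundedness of the solution path (its Lemma~\ref{lem:bound_phisb}), not for the bias estimate itself. Your route buys a shorter, more transparent derivation of the saturating bound (the caveat you flag—uniform invertibility of the Hessian as $\gamma\to1$—is immediate since both losses are assumed $\mu$-strongly convex, so the combination Hessian is bounded below by $\mu$ uniformly); the paper's route buys explicit, separately quotable decay rates for the path derivatives (and the distance bounds of its Remark~\ref{rmk:dist_phisb}) that it reuses elsewhere. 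One small slip that does not affect your argument: uniform $C^1$ regularity in $\gamma$ translates, via $\mathrm{d}\gamma/\mathrm{d}\beta=(1+\beta)^{-2}$, into $\lVert \mathrm{d}\phi^*_{\theta,\beta,\tau}/\mathrm{d}\beta\rVert = O((1+\beta)^{-2})$, not $O((1+\beta)^{-1})$ as you state "equivalently"; the weaker rate would not suffice had you run the Taylor expansion in $\beta$, but since your expansions are carried out in $\gamma$ this is immaterial.
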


\subsection{Contrastive meta-learning is a high-performance meta-optimization algorithm}
\label{sect:hyperparam_opt}
As a first set of experiments, we study a supervised
meta-optimization problem based on the entire CIFAR-10 image dataset \citep{krizhevsky_learning_2009}. In these experiments the goal is to meta-learn a set of hyperparameters (meta-parameters) such that generalization performance improves. This problem is a common testbed for assessing the ability of a meta-learning algorithm to optimize a given meta-objective \citep{lorraine_optimizing_2020}; it can be thought of as a limiting case of full meta-learning, as there are learnable meta-parameters, but only one task. As the meta-objective we take the cross-entropy loss $l$ evaluated on a held-out dataset $D^\mathrm{eval}$: $L^\mathrm{eval}(\phi) = \frac{1}{|D^\mathrm{eval}|} \, \sum_{(x,y)\in D^\mathrm{eval}} l(x,y,\phi)$, where $x$ is an image input and $y$ its label. We equip a convolutional deep neural network with our synaptic model~\eqref{eq:synaptic-model-learning-loss}, meta-learning only the per-synapse regularization strength $\lambda$, keeping $\omega$ fixed at zero: $L^\mathrm{learn}(\phi,\lambda) = \frac{1}{|D^\mathrm{learn}|} \, \sum_{(x,y)\in D^\mathrm{learn}} l(x,y,\phi) + \frac{1}{2}\sum_{i=1}^{|\phi|} \lambda_i \phi_i^2$. We learn the weights $\phi$ by stochastic gradient descent paired with backpropagation. Additional details and analyses may be found in Section~\ref{sec_app:supervised_metaopt}.

\begin{wraptable}[13]{r}{0.5\textwidth}
  \vspace{-0.7cm}
  \caption{Meta-learning a per-synapse regularization strength meta-parameter (cf.~Section~\ref{sect:synaptic-model}) on CIFAR-10. Average accuracies (acc.) $\pm$ s.e.m.~over 10 seeds.}
  \label{tab:cifar}
  \centering
  \vspace{0.1cm}
  \begin{tabular}{lll}
      \toprule
      Method  & Evaluation acc. (\%)  & Test acc. (\%)
      \\
      \midrule
      T1-T2 & 64.77$^{\pm 0.40}$ & 62.57$^{\pm 0.31}$\\
      CG & 57.65$^{\pm 1.51}$ & 57.51$^{\pm 0.98}$\\
      RBP & 64.92$^{\pm 1.32}$ & 62.14$^{\pm 0.97}$\\
      CML & 74.43$^{\pm 0.53}$ & 66.94$^{\pm 0.25}$\\  \midrule
      No meta & 60.06$^{\pm 0.37}$ & 60.13$^{\pm0.38}$ \\
      TBPTL & 73.17$^{\pm 0.27}$ & 65.35$^{\pm 0.36}$\\
      \bottomrule
  \end{tabular}
  \end{wraptable}
We benchmark our meta-plasticity rule \eqref{eq:synaptic-model-updates} against implicit gradient-based meta-learning methods, which are considered state-of-the-art for this type of problem \citep{lorraine_optimizing_2020} (see Section~\ref{sec_app:review_implicit} for a review). More concretely, recurrent backpropagation (RBP \citep{almeida_backpropagation_1989,pineda_recurrent_1989}; also known as the Neumann series approximation \citep{liao_reviving_2018,lorraine_optimizing_2020}) and the conjugate gradient method (CG) \citep{foo_efficient_2007,pedregosa_hyperparameter_2016} correspond to two different numerical schemes for calculating the meta-gradient; T1-T2 \citep{luketina_scalable_2016} is an approximate method which neglects complicated terms, thus introducing a non-reducible bias in the meta-gradient estimate. Critically, unlike our contrastive meta-learning rule (CML), this method offers no control over the meta-gradient error.

\begin{figure}[t]
    \centering
    \includegraphics[width=4.5in]{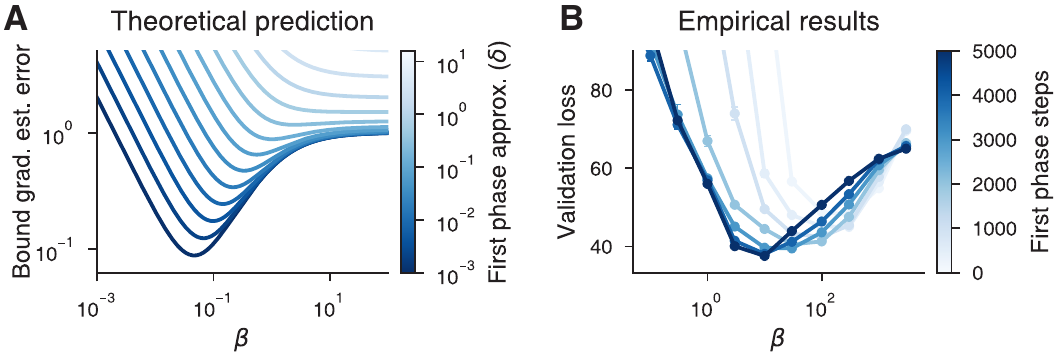}
    \vspace{-0.3cm}
    \caption{(A) Visualization of the theoretical bound $\mathcal{B}$ on the meta-gradient estimation error from Theorem~\ref{thm:bound_ep_estimate} as a function of the nudging strength $\beta$. Better approximations of the solutions (smaller $\delta$) improve the quality of the meta-gradient, as they enable using smaller values of $\beta$. (B) Confirmation of the qualitative findings of the theory on deep learning experiments. We show results for a hyperparameter meta-learning problem, where a per-synapse regularization strength is meta-learned (cf.~Section~\ref{sect:synaptic-model}) on CIFAR-10 with rule~\eqref{eq:synaptic-model-updates}. The validation loss is a proxy for the quality of the gradient and the number of steps in the first phase is a proxy for $-\log \delta$.\label{fig:theory-vs-CIFAR}}
\end{figure}

We find that our meta-learning rule outperforms all three baseline implicit differentiation methods in terms of both evaluation-set and actual generalization (test-set) performance, cf.~Tab.~\ref{tab:cifar}.
As a side result, we confirm the instability of CG in deep learning reported in ref.~\citep{liao_reviving_2018,shaban_truncated_2019}. We note that the hyperparameters of all four methods were independently and carefully set (cf.~Section~\ref{sec_app:supervised_metaopt}). These strong results on a modern deep learning benchmark, involving stochastic approximate learning, demonstrate that contrastive meta-learning is a scalable, highly effective meta-optimization algorithm. Moreover, Theorem~\ref{thm:bound_ep_estimate} is in excellent qualitative agreement with our experiments, cf.~Fig.~\ref{fig:theory-vs-CIFAR}.

To further contextualize our findings, we provide results for training the same network without meta-learning, where we performed a conventional hyperparameter search over a scalar regularization strength hyperparameter shared by all synapses. This simple approach yields only a moderate evaluation and test accuracy.

As all methods incur numerical errors when computing the meta-gradient, a comparison to using the analytical solution for the meta-gradient would be desirable. Since this is intractable in this case and running full backpropagation-through-learning requires too much memory, we evaluate truncated backpropagation-through-learning (TBPTL) with the maximal truncation window we can fit on a single graphics processing unit (in our case 200 out of 5000 steps). The resulting evaluation accuracy and test accuracy outperform other implicit gradient-based meta-learning methods but are still surpassed by our method.


\subsection{Contrastive meta-learning enables visual few-shot learning}

The ability to learn new object classes based on only a few examples is a hallmark of human intelligence \citep{lake_human-level_2015} and a prime application of meta-learning. We test whether our contrastive meta-learning rule is able to turn into a few-shot learner a standard visual system, a convolutional deep neural network learned by gradient descent and error backpropagation. Furthermore, we ask how our contrastive meta-learning rule fares against other gradient-based meta-learning algorithms which rely on backpropagation-through-learning and implicit differentiation to compute gradients. To that end, we focus on two widely-studied few-shot image classification problems based on miniImageNet \citep{ravi_optimization_2016} and the Omniglot \citep{lake_one_2011} datasets. To further facilitate comparisons, we reproduce exactly the experimental setup of ref.~\citep{finn_model-agnostic_2017}, which has been adopted in a large number of studies.

Briefly, during meta-learning, $N$-way $K$-shot tasks are created on-the-fly by sampling $N$ classes at random from a fixed pool of classes, and then splitting the data into task-specific learning $D_\tau^\mathrm{learn}$ (with $K$ examples per class for learning) and evaluation $D_\tau^\mathrm{eval}$ sets, used to define the corresponding loss functions $L_\tau^\mathrm{learn}$ and $L_\tau^\mathrm{eval}$. The meta-objective is then simply the task-averaged evaluation loss, measured after learning. The performance of the learning algorithm is tested on new tasks consisting of classes that were not seen during meta-learning. We provide all experimental details in Section~\ref{sec_app:fewshot}.

\begin{wraptable}[12]{r}{0.4\textwidth}
\vspace{-0.62cm}
\caption{One-shot miniImageNet learning. Averages over 5 seeds $\pm$ std.}
\label{tab:miniimagenet}
\centering
\vspace{0.1cm}
\begin{tabular}{ll}
    \toprule
    Method    & Test acc. (\%)
    \\
    \midrule
    MAML \cite{finn_model-agnostic_2017} & 48.70 $^{\pm 1.84}$\\
    FOMAML \cite{finn_model-agnostic_2017}  & 48.07 $^{\pm  1.75}$\\
    Reptile \cite{nichol_first-order_2018} & 49.97 $^{\pm  0.32}$ \\
    \midrule
    iMAML \cite{rajeswaran_meta-learning_2019} & 48.96 $^{\pm 1.84}$\\
    \midrule
    CML (synaptic) &  48.43 $^{\pm 0.43}$ \\
    CML (modulatory)  & 49.80 $^{\pm 0.40}$\\
    \bottomrule
  \end{tabular}
\end{wraptable}
As reference methods, we compare against the well-known model-agnostic meta-learning (MAML) algorithm \citep{finn_model-agnostic_2017}, which relies on backpropagation-through-learning to meta-learn an initial set of weights, starting from which a few gradient steps should succeed; this is conceptually similar to meta-learning the consolidated state $\omega$ of our complex synapses. We also include results obtained with its first-order approximation FOMAML (as well as a closely related algorithm known as Reptile \citep{nichol_first-order_2018}), which, like the T1-T2 algorithm of the previous section, excludes all second-order terms from the meta-gradient estimate to simplify the update, at the expense of introducing a bias. Finally, we compare to the implicit MAML (iMAML) algorithm \citep{rajeswaran_meta-learning_2019}, which corresponds exactly to meta-learning our consolidated synaptic state $\omega$, but with implicit differentiation methods.

\begin{wraptable}[12]{r}{0.58\textwidth}
\vspace{-0.62cm}
\caption{Omniglot character few-shot learning. Test set classification accuracy (\%) averaged over 5 seeds $\pm$ std.}
\label{tab:omniglot}
\centering
\vspace{0.05cm}
\begin{tabular}{lll}
    \toprule
    Method    
              & 20-way 1-shot & 20-way 5-shot
    \\
    \midrule
    MAML \cite{finn_model-agnostic_2017} & 95.8$^{\pm0.3}$ & 98.9$^{\pm0.2}$\\
    FOMAML \cite{finn_model-agnostic_2017}  & 89.4$^{\pm0.5}$ & 97.9$^{\pm0.1}$\\
    Reptile \cite{nichol_first-order_2018} & 89.43$^{\pm0.14}$ & 97.12$^{\pm0.32}$\\
    \midrule
    iMAML \cite{rajeswaran_meta-learning_2019} &94.46$^{\pm0.42}$ &98.69$^{\pm0.1}$\\
    \midrule
    CML (synaptic) & 94.16$^{\pm0.12}$ & 98.06$^{\pm0.26}$\\
    CML (modulatory)  & 94.24$^{\pm0.39}$  & 98.60$^{\pm0.27}$  \\
    \bottomrule
  \end{tabular}
\end{wraptable}
When applied to the problem domain of miniImageNet one-shot learning tasks, the performance of all meta-learning algorithms we consider here is closely clustered together, cf.~Tab.~\ref{tab:miniimagenet}. In particular, meta-learning the consolidated states $\omega$ of our complex synapses with implicit differentiation (iMAML) or our local update \eqref{eq:synaptic-model-updates} leads to comparable performance. Interestingly, we further find that miniImageNet one-shot learning performance is significantly improved when using the modulatory model described in~Section~\ref{sect:modulation}, despite the low dimensionality of the task-specific variable $\phi$. This is in line with other results suggesting that highly efficient visual learning of new categories may be possible without necessarily engaging synaptic plasticity \citep{zintgraf_fast_2019}. On Omniglot (see Section~\ref{sec_app:fewshot} for additional variants), the situation is comparable, except that on its 20-way 1-shot variant, the performance gap between first- and second-order methods widens. In line with our theory, our contrastive meta-learning rule performs close to (second-order) implicit differentiation, showing that despite its simplicity and locality our rule is able to accurately estimate meta-gradients.

\subsection{Contrastive meta-learning enables meta-plasticity in a recurrent spiking network}

For the experiments described on the previous sections we used simple artificial neuron models and backpropagation-of-error to learn. We now move closer to a biological neuron and plasticity model and consider meta-learning in a recurrently-connected neural network of leaky integrate-and-fire neurons with plastic synapses. We study a simple few-shot regression problem \citep{finn_model-agnostic_2017}, where the aim is to quickly learn to approximate sinusoidal functions which differ in their phase and amplitude (for additional details see Section \ref{sec_app:spiking}). For each task, we measure the mean squared error on 10 samples for the learning loss and 10 samples for the evaluation loss. We implement synaptic plasticity using the local e-prop rule \citep{bellec_solution_2020} and use a population of 100 Poisson neurons to encode inputs, see Fig.~\ref{fig:spiking}A. As our contrastive meta-learning rule~\eqref{eq:CML-delta-theta} is agnostic to the specifics of the learning process, we can augment the model with our synaptic consolidation model and apply the meta-plasticity rules derived in~\eqref{eq:synaptic-model-updates}.
Fig.~\ref{fig:spiking}B illustrates how the learning process improves with increasing number of tasks encountered, eventually consolidating a sinusoidal prior that can be quickly adapted to the specifics of a task from few examples, cf.~Fig.~\ref{fig:spiking}C.
\begin{wraptable}[11]{r}{0.52\textwidth}
\caption{Few-shot learning of sinusoidal functions with a recurrent spiking neural network. Avg.~mean squared error (MSE) over 10 seeds $\pm$ s.e.m.}
\label{tab:spiking}
\centering
\begin{tabular}{lll}
\toprule
Method        & Validation MSE    & Test MSE          \\
\midrule
BPTL + BPTT   & 0.17$^{\pm 0.01}$ & 0.41$^{\pm 0.10}$ \\
BPTL + e-prop & 0.52$^{\pm 0.05}$ & 0.72$^{\pm 0.08}$ \\
TBPTL + e-prop & 0.27 $^{\pm 0.07}$ & 0.50 $^{\pm 0.11}$ \\
CML + e-prop  & 0.23$^{\pm 0.04}$ & 0.23$^{\pm 0.04}$ \\
\bottomrule
\end{tabular}
\end{wraptable}

We compare our method to a standard baseline where updates are computed by backpropagating through the synaptic plasticity process (backpropagation-through-learning; BPTL) using surrogate gradients to handle spiking nonlinearities \citep{neftci_surrogate_2019} similar to previous work on spiking neuron meta-learning \citep{bellec_long_2018}. Since full BPTL requires reducing the number of learning steps compared to our method due to memory constraints, we also include TBPTL with the same number of 500 learning steps and a truncation window of 100 steps. In both cases, we find competitive performance for our method, see Tab.~\ref{tab:spiking}.

\begin{figure}[h!]
    \centering
    \includegraphics[width=5.5in]{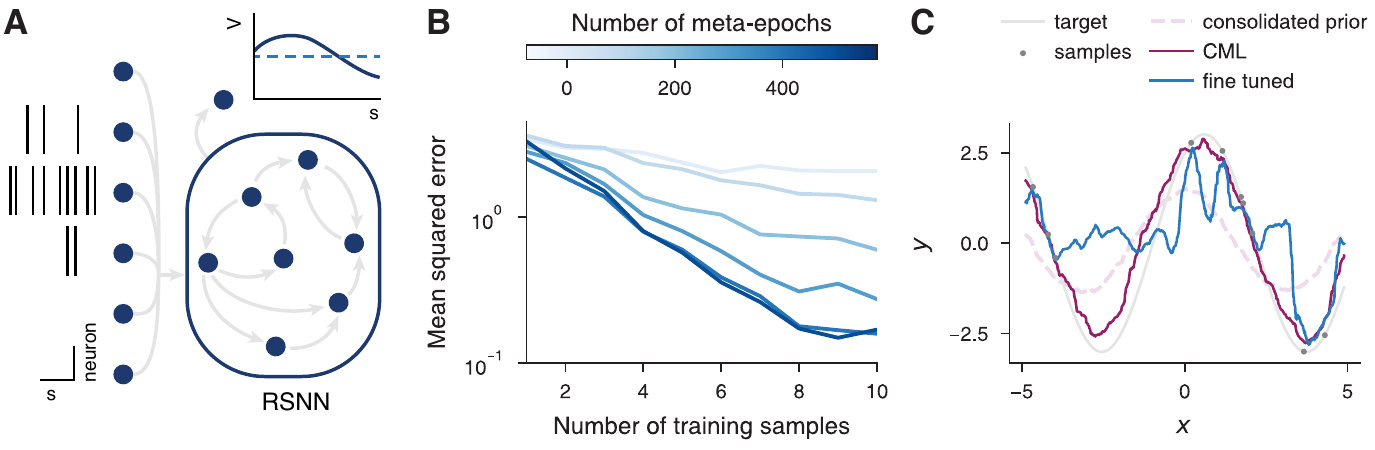}
    \caption{(A) A network of recurrently-connected leaky-integrate and fire neurons is tasked with learning sinusoids on an input encoding of Poisson spike trains. Its prediction is the voltage of the output neuron averaged over time. (B) Learning performance from few examples measured as the mean squared error on evaluation examples during a learning episode improves as more tasks are encountered over the course of meta-learning. (C) Meta-plasticity encodes information on the consolidated synaptic component (dashed) which results in improved learning performance (purple), compared to a naive network learning from scratch (blue).}
    \label{fig:spiking}
    \vspace{-0.5cm}
\end{figure}

\subsection{Contrastive meta-learning improves reward-based learning}

Finally, we demonstrate how contrastive meta-learning can be applied in the challenging setting of reward-based learning, second nature to most animals.
Reward-based learning clearly demonstrates hallmarks of meta-learning as animals are capable of flexibly remapping reward representations when task contingencies change \citep{banerjee_value-guided_2020, samborska_complementary_2021}.
Inspired by this, we aim to meta-learn a value function on a family of reward-based learning tasks that can be quickly adapted to predict the expected reward of the actions available to the agent in a particular task.

Specifically, we consider the wheel bandit problem introduced by \citep{riquelme_deep_2018} with the meta-learning setup previously studied in refs.~\citep{garnelo_neural_2018,ravi_amortized_2019}. On each task, an agent is presented with a sequence of context coordinates randomly drawn from a unit circle for each of which it has to choose among 5 actions to receive a stochastic reward. Hidden to the agent, a task-specific radius $\delta$ tiles the context space into a low- and a high-reward region depending on which the optimal action to take changes (see Section \ref{sec_app:metaRL}).

\begin{wraptable}[11]{r}{0.62\textwidth}
\vspace{-0.6cm}
\caption{Cumulative regret on the wheel bandit problem for different $\delta$. Values normalized by the cumulative regret of a uniformly random agent. Avgs. over 50 seeds $\pm$ s.e.m.}
\label{tab:bandit}
\centering
\vspace{0.15cm}
\begin{tabular}{llll}
\toprule
$\delta$         & 0.5               & 0.9               & 0.99               \\
\midrule
NeuralLinear  \citep{riquelme_deep_2018} & 0.95$^{\pm 0.02}$ & 4.65$^{\pm 0.18}$ & 49.63$^{\pm 2.41}$ \\
\midrule
MAML             & 0.45$^{\pm 0.01}$ & 1.02$^{\pm 0.76}$ & 15.21$^{\pm 1.69}$ \\
CML (synaptic)   & 0.40$^{\pm 0.02}$ & 0.82$^{\pm 0.02}$ & 12.27$^{\pm 1.02}$ \\
CML (modulatory) & 0.42$^{\pm 0.01}$ & 1.83$^{\pm 0.11}$ & 16.46$^{\pm 1.80}$ \\
\bottomrule
\end{tabular}
\end{wraptable}
The goal of meta-learning is to discern the general structure of the low- and high-reward region across tasks whereas the goal of learning becomes to identify the task-specific radius $\delta$ of the current task.
During meta-learning, we randomly sample tasks $\delta \sim \mathcal{U}(0,1)$ and generate a dataset by choosing actions randomly. Data from each task is split into training and evaluation data, effectively creating a sparse regression problem where only the outcome of a randomly chosen action can be observed for a particular context.
After meta-learning, we evaluate the cumulative regret obtained by an agent that chooses his actions greedily with respect to its predicted rewards and adapts its fast parameters on the observed context, action, reward triplets stored in a replay buffer. 

We use both our synaptic consolidation and modulatory network models to meta-learn the value function using our contrastive rule. We compare our two models to MAML and the non-meta-learned baseline, NeuralLinear, from ref.~\citep{riquelme_deep_2018}, which performed among the best in their large-scale comparison. Tab.~\ref{tab:bandit} shows the cumulative regret obtained on different task parametrizations $\delta$ in the online evaluation after meta-learning (extended table in Section~\ref{sec_app:metaRL}). Meta-learning clearly improves upon the non-meta-learned baseline with both our models performing comparably to MAML. This improvement is more pronounced for tasks with larger $\delta$ within which it is more difficult to discover the high-reward region.


\section{Discussion}
We have presented a general-purpose meta-learning rule which allows estimating meta-gradients from local information only, and we have demonstrated its versatility studying two neural models on a range of meta-learning problems. The competitive performance we observed suggests that contrastive meta-learning is a worthy contender to biologically-implausible machine learning algorithms -- especially for problems involving long learning trajectories, as demonstrated by the strong results on supervised meta-optimization.
At its core, our method relies on contrasting the outcome of two different learning episodes. Despite its conceptual simplicity this requires complex synaptic machinery which is able to buffer these outcomes in a way accessible to synaptic consolidation.

According to our top-down modulation model the goal of synaptic plasticity in primary brain areas is \emph{not} to learn a specific task, in contrast to more traditional theories of learning. Instead, we postulate that the goal of synaptic plasticity is to make it possible to learn any given task by modulating the sensitivity of primary-area neurons in a task-dependent manner. This view is consistent with the experimental findings of \citet{fritz_adaptive_2010}, who observed the rapid formation of task-dependent receptive fields in the primary auditory cortex of ferrets, as the animals learned several tasks, presumably due to changes in top-down signals originating in frontal cortex. Together with the strong results of the modulatory model in the challenging setting of visual one-shot learning and recent studies in continual learning problems \citep{masse_alleviating_2018,wen_batchensemble_2020,von_oswald_continual_2020,tsuda_modeling_2020} this shows the practical effectiveness of task-dependent modulation. 
Complementary to the interaction of the frontal cortex with primary cortical areas, the prefrontal cortex might similarly modulate the striatum during reward-based learning. Whereas classical dopamine-based learning posits that reward prediction errors are used subcortically to learn the reward structure of a task, recent work has demonstrated that reward can similarly affect prefrontal representations to quickly infer the current task identity and switch the context provided to the striatum \citep{blanco-pozo_dopamine_2021}.
More broadly, viewing synaptic plasticity as meta-learning is also consistent with recent modeling work casting the prefrontal cortex as a meta-reinforcement learning system \citep{wang_prefrontal_2018}.

Reflecting on how our meta-learning rule can be implemented in the brain, we conjecture that the hippocampal formation plays a central role in coordinating the two phases as well as creating the augmented learning problem.
First, some mechanism must signal that a switch from learning problem to augmented learning problem has occurred, corresponding to the sign switch in our rule \eqref{eq:CML-delta-theta}. We argue that the hippocampus is well positioned for signaling such a switch to cortical synapses. A recent experimental study shows that the hippocampus is at least able to control cortical synaptic consolidation \cite{doron_perirhinal_2020} but further evidence would be needed to support our hypothesis.

Second, we conjecture that the creation of the augmented learning problem at the heart of our meta-gradient estimation algorithm might itself critically rely on the hippocampus. In all our experiments, this second learning problem consisted simply of new data, presented to the learning algorithm to evaluate how well learning went. Transferring additional data into cortical networks, putatively during sleep and wakeful rest, fits well with the role that is classically attributed to the hippocampus in systems consolidation and complementary learning systems theories \citep{mcclelland_why_1995,kumaran_what_2016}. We thus speculate that the hippocampus `prescribes' additional learning problems to the cortex, which serve the purpose of testing its generalization performance. By showing that a second `sleep' learning phase enables meta-learning with simple plasticity rules, our results lend further credit to complementary learning systems theory, as well as to the hypothesis that dreams have evolved to assist generalization \citep{hoel_overfitted_2021}.

Lastly, this view of the cortex as a contrastive meta-learning system aided by the hippocampus may also help elucidate how the brain learns from an endless, non-stationary stream of data. Current artificial neural networks notoriously struggle to strike a balance between learning new knowledge and retaining old one in such continual learning problems, in particular when the data are not independent and identically distributed nor structured into clearly delineated tasks \citep{hadsell_embracing_2020}. Interestingly, recent investigations have shown that meta-learning can greatly improve continual learning performance \citep{javed_meta-learning_2019,riemer_learning_2019,gupta_look-ahead_2020,beaulieu_learning_2020,von_oswald_learning_2021}. While details vary, the essence of these methods is to blend in past (replay) data with new data in a meta-objective function. This amounts to a different instantiation of our bilevel optimization problem \eqref{eq:bilevel-meta-learning-problem}, resulting in an augmented learning problem in which past and present data are intermixed, for which the hippocampus would again appear to be ideally positioned.

\vspace{-0.5cm}
\begin{ack}
\vspace{-0.5cm}
This research was supported by an Ambizione grant (PZ00P3\_186027) from the Swiss National Science Foundation and an ETH Research Grant (ETH-23 21-1) awarded to João Sacramento. Johannes von Oswald is funded by the Swiss Data Science Center (J.v.O. P18-03). We thank Angelika Steger, Benjamin Scellier, Greg Wayne, Abhishek Banerjee, Blake A.~Richards, Nicol Harper, Thomas Akam, Mohamady El-Gaby, Rafal Bogacz, Giacomo Indiveri, Jean-Pascal Pfister, Mark van Rossum, Maciej Wołczyk, Seijin Kobayashi and Alexander Meulemans for discussions and feedback, and Charlotte Frenkel for assistance in our implementation of e-prop.
\end{ack}

\bibliographystyle{unsrtnat}
\bibliography{references}

\begin{thebibliography}{83}
\providecommand{\natexlab}[1]{#1}
\providecommand{\url}[1]{\texttt{#1}}
\expandafter\ifx\csname urlstyle\endcsname\relax
  \providecommand{\doi}[1]{doi: #1}\else
  \providecommand{\doi}{doi: \begingroup \urlstyle{rm}\Url}\fi

\bibitem[Harlow(1949)]{harlow_formation_1949}
Harry~F. Harlow.
\newblock The formation of learning sets.
\newblock \emph{Psychological Review}, 56\penalty0 (1):\penalty0 51, 1949.

\bibitem[Brea and Gerstner(2016)]{brea_does_2016}
Johanni Brea and Wulfram Gerstner.
\newblock Does computational neuroscience need new synaptic learning paradigms?
\newblock \emph{Current Opinion in Behavioral Sciences}, 11:\penalty0 61--66,
  2016.

\bibitem[Richards et~al.(2019)Richards, Lillicrap, Beaudoin, Bengio, Bogacz,
  Christensen, Clopath, Costa, de~Berker, Ganguli, Gillon, Hafner, Kepecs,
  Kriegeskorte, Latham, Lindsay, Miller, Naud, Pack, Poirazi, Roelfsema,
  Sacramento, Saxe, Scellier, Schapiro, Senn, Wayne, Yamins, Zenke, Zylberberg,
  Therien, and Kording]{richards_deep_2019}
Blake~A. Richards, Timothy~P. Lillicrap, Philippe Beaudoin, Yoshua Bengio,
  Rafal Bogacz, Amelia Christensen, Claudia Clopath, Rui~Ponte Costa, Archy
  de~Berker, Surya Ganguli, Colleen~J. Gillon, Danijar Hafner, Adam Kepecs,
  Nikolaus Kriegeskorte, Peter Latham, Grace~W. Lindsay, Kenneth~D. Miller,
  Richard Naud, Christopher~C. Pack, Panayiota Poirazi, Pieter Roelfsema, João
  Sacramento, Andrew Saxe, Benjamin Scellier, Anna~C. Schapiro, Walter Senn,
  Greg Wayne, Daniel Yamins, Friedemann Zenke, Joel Zylberberg, Denis Therien,
  and Konrad~P. Kording.
\newblock A deep learning framework for neuroscience.
\newblock \emph{Nature Neuroscience}, 22\penalty0 (11):\penalty0 1761--1770,
  2019.

\bibitem[Schmidhuber(1987)]{schmidhuber_evolutionary_1987}
Jürgen Schmidhuber.
\newblock \emph{Evolutionary principles in self-referential learning, or on
  learning how to learn: the meta-meta-... hook}.
\newblock Diploma thesis, Institut für Informatik, Technische Universität
  München, 1987.

\bibitem[Bengio et~al.(1990)Bengio, Bengio, and Cloutier]{bengio_learning_1990}
Yoshua Bengio, Samy Bengio, and Jocelyn Cloutier.
\newblock Learning a synaptic learning rule.
\newblock Technical report, Université de Montréal, Département
  d'Informatique et de Recherche opérationnelle, 1990.

\bibitem[Chalmers(1991)]{chalmers_evolution_1991}
David~J. Chalmers.
\newblock The evolution of learning: an experiment in genetic connectionism.
\newblock In David~S. Touretzky, Jeffrey~L. Elman, Terrence~J. Sejnowski, and
  Geoffrey~E. Hinton, editors, \emph{Connectionist {Models}}, pages 81--90.
  Morgan Kaufmann, 1991.

\bibitem[Thrun and Pratt(1998)]{thrun_learning_1998}
Sebastian Thrun and Lorien Pratt.
\newblock \emph{Learning to learn}.
\newblock Springer US, 1998.

\bibitem[Hochreiter et~al.(2001)Hochreiter, Younger, and
  Conwell]{hochreiter_learning_2001}
Sepp Hochreiter, A.~Steven Younger, and Peter~R. Conwell.
\newblock Learning to learn using gradient descent.
\newblock In \emph{International {Conference} on {Artificial} {Neural}
  {Networks}}, Lecture {Notes} in {Computer} {Science}. Springer, 2001.

\bibitem[Werbos(1990)]{werbos_backpropagation_1990}
Paul~J. Werbos.
\newblock Backpropagation through time: what it does and how to do it.
\newblock \emph{Proceedings of the IEEE}, 78\penalty0 (10):\penalty0
  1550--1560, 1990.

\bibitem[Whittington and Bogacz(2019)]{whittington_theories_2019}
James C.~R. Whittington and Rafal Bogacz.
\newblock Theories of error back-propagation in the brain.
\newblock \emph{Trends in Cognitive Sciences}, 23\penalty0 (3):\penalty0
  235--250, 2019.

\bibitem[Richards and Lillicrap(2019)]{richards_dendritic_2019}
Blake~A. Richards and Timothy~P. Lillicrap.
\newblock Dendritic solutions to the credit assignment problem.
\newblock \emph{Current Opinion in Neurobiology}, 54:\penalty0 28--36, 2019.

\bibitem[Roelfsema and Holtmaat(2018)]{roelfsema_control_2018}
Pieter~R. Roelfsema and Anthony Holtmaat.
\newblock Control of synaptic plasticity in deep cortical networks.
\newblock \emph{Nature Reviews Neuroscience}, 19\penalty0 (3):\penalty0
  166--180, 2018.

\bibitem[Lillicrap et~al.(2020)Lillicrap, Santoro, Marris, Akerman, and
  Hinton]{lillicrap_backpropagation_2020}
Timothy~P. Lillicrap, Adam Santoro, Luke Marris, Colin~J. Akerman, and Geoffrey
  Hinton.
\newblock Backpropagation and the brain.
\newblock \emph{Nature Reviews Neuroscience}, 21\penalty0 (6):\penalty0
  335--346, 2020.

\bibitem[Werbos(1974)]{werbos_beyond_1974}
Paul~J. Werbos.
\newblock \emph{Beyond regression: new tools for prediction and analysis in the
  behavioral sciences}.
\newblock Ph.{D}. thesis, Harvard University, 1974.

\bibitem[Rumelhart et~al.(1986)Rumelhart, Hinton, and
  Williams]{rumelhart_learning_1986}
David~E. Rumelhart, Geoffrey~E. Hinton, and Ronald~J. Williams.
\newblock Learning representations by back-propagating errors.
\newblock \emph{Nature}, 323\penalty0 (6088):\penalty0 533--536, 1986.

\bibitem[Sutton(1992)]{sutton_adapting_1992}
Richard~S. Sutton.
\newblock Adapting bias by gradient descent: {An} incremental version of
  delta-bar-delta.
\newblock In \emph{National {Conference} on {Artificial} {Intelligence}}, 1992.

\bibitem[Andrychowicz et~al.(2016)Andrychowicz, Denil, Gomez, Hoffman, Pfau,
  Schaul, Shillingford, and de~Freitas]{andrychowicz_learning_2016}
Marcin Andrychowicz, Misha Denil, Sergio Gomez, Matthew~W. Hoffman, David Pfau,
  Tom Schaul, Brendan Shillingford, and Nando de~Freitas.
\newblock Learning to learn by gradient descent by gradient descent.
\newblock In \emph{Advances in {Neural} {Information} {Processing} {Systems}},
  2016.

\bibitem[Finn et~al.(2017)Finn, Abbeel, and Levine]{finn_model-agnostic_2017}
Chelsea Finn, Pieter Abbeel, and Sergey Levine.
\newblock Model-agnostic meta-learning for fast adaptation of deep networks.
\newblock In \emph{International {Conference} on {Machine} {Learning}}, 2017.

\bibitem[Hospedales et~al.(2020)Hospedales, Antoniou, Micaelli, and
  Storkey]{hospedales_meta-learning_2020}
Timothy Hospedales, Antreas Antoniou, Paul Micaelli, and Amos Storkey.
\newblock Meta-learning in neural networks: a survey.
\newblock \emph{arXiv preprint arXiv:2004.05439}, 2020.

\bibitem[Bengio(2000)]{bengio_gradient-based_2000}
Yoshua Bengio.
\newblock Gradient-based optimization of hyperparameters.
\newblock \emph{Neural Computation}, 12\penalty0 (8):\penalty0 1889--1900,
  2000.

\bibitem[Pedregosa(2016)]{pedregosa_hyperparameter_2016}
Fabian Pedregosa.
\newblock Hyperparameter optimization with approximate gradient.
\newblock In \emph{International {Conference} on {Machine} {Learning}}, 2016.

\bibitem[Rajeswaran et~al.(2019)Rajeswaran, Finn, Kakade, and
  Levine]{rajeswaran_meta-learning_2019}
Aravind Rajeswaran, Chelsea Finn, Sham Kakade, and Sergey Levine.
\newblock Meta-learning with implicit gradients.
\newblock In \emph{Advances in {Neural} {Information} {Processing} {Systems}},
  2019.

\bibitem[Lorraine et~al.(2020)Lorraine, Vicol, and
  Duvenaud]{lorraine_optimizing_2020}
Jonathan Lorraine, Paul Vicol, and David Duvenaud.
\newblock Optimizing millions of hyperparameters by implicit differentiation.
\newblock In \emph{International {Conference} on {Artificial} {Intelligence}
  and {Statistics}}, 2020.

\bibitem[Dontchev and Rockafellar(2009)]{dontchev_implicit_2009}
Asen~L. Dontchev and R.~Tyrrell Rockafellar.
\newblock \emph{Implicit {Functions} and {Solution} {Mappings}}.
\newblock Springer, NY, 2009.

\bibitem[Peterson and Anderson(1987)]{peterson_mean_1987}
Carsten Peterson and James~R. Anderson.
\newblock A mean field theory learning algorithm for neural networks.
\newblock \emph{Complex Systems}, 1:\penalty0 995--1019, 1987.

\bibitem[Movellan(1991)]{movellan_contrastive_1991}
Javier~R. Movellan.
\newblock Contrastive {Hebbian} learning in the continuous {Hopfield} model.
\newblock In \emph{Connectionist {Models}}, pages 10--17. Elsevier, 1991.

\bibitem[Baldi and Pineda(1991)]{baldi_contrastive_1991}
Pierre Baldi and Fernando Pineda.
\newblock Contrastive learning and neural oscillations.
\newblock \emph{Neural Computation}, 3\penalty0 (4):\penalty0 526--545, 1991.

\bibitem[O'Reilly(1996)]{oreilly_biologically_1996}
Randall~C. O'Reilly.
\newblock Biologically plausible error-driven learning using local activation
  differences: {The} generalized recirculation algorithm.
\newblock \emph{Neural Computation}, 8\penalty0 (5):\penalty0 895--938, 1996.

\bibitem[Scellier and Bengio(2017)]{scellier_equilibrium_2017}
Benjamin Scellier and Yoshua Bengio.
\newblock Equilibrium propagation: bridging the gap between energy-based models
  and backpropagation.
\newblock \emph{Frontiers in Computational Neuroscience}, 11, 2017.

\bibitem[Scellier(2021)]{scellier_deep_2021}
Benjamin Scellier.
\newblock \emph{A deep learning theory for neural networks grounded in
  physics}.
\newblock {PhD} {Thesis}, Université de Montréal, 2021.

\bibitem[Chen et~al.(2020)Chen, Friesen, Behbahani, Doucet, Budden, Hoffman,
  and de~Freitas]{chen_modular_2020}
Yutian Chen, Abram~L. Friesen, Feryal Behbahani, Arnaud Doucet, David Budden,
  Matthew~W. Hoffman, and Nando de~Freitas.
\newblock Modular meta-learning with shrinkage.
\newblock In \emph{Advances in {Neural} {Information} {Processing} {Systems}},
  2020.

\bibitem[Zenke et~al.(2017)Zenke, Poole, and Ganguli]{zenke_continual_2017}
Friedemann Zenke, Ben Poole, and Surya Ganguli.
\newblock Continual learning through synaptic intelligence.
\newblock In \emph{International {Conference} on {Machine} {Learning}}, 2017.

\bibitem[Kirkpatrick et~al.(2017)Kirkpatrick, Pascanu, Rabinowitz, Veness,
  Desjardins, Rusu, Milan, Quan, Ramalho, Grabska-Barwinska, Hassabis, Clopath,
  Kumaran, and Hadsell]{kirkpatrick_overcoming_2017}
James Kirkpatrick, Razvan Pascanu, Neil Rabinowitz, Joel Veness, Guillaume
  Desjardins, Andrei~A. Rusu, Kieran Milan, John Quan, Tiago Ramalho, Agnieszka
  Grabska-Barwinska, Demis Hassabis, Claudia Clopath, Dharshan Kumaran, and
  Raia Hadsell.
\newblock Overcoming catastrophic forgetting in neural networks.
\newblock \emph{Proceedings of the National Academy of Sciences of the United
  States of America}, 114\penalty0 (13):\penalty0 3521--3526, 2017.

\bibitem[Abraham(2008)]{abraham_metaplasticity_2008}
Wickliffe~C. Abraham.
\newblock Metaplasticity: tuning synapses and networks for plasticity.
\newblock \emph{Nature Reviews Neuroscience}, 9\penalty0 (5):\penalty0
  387--387, 2008.

\bibitem[Fusi et~al.(2005)Fusi, Drew, and Abbott]{fusi_cascade_2005}
Stefano Fusi, Patrick~J. Drew, and Larry~F. Abbott.
\newblock Cascade models of synaptically stored memories.
\newblock \emph{Neuron}, 45\penalty0 (4):\penalty0 599--611, 2005.

\bibitem[Ziegler et~al.(2015)Ziegler, Zenke, Kastner, and
  Gerstner]{ziegler_synaptic_2015}
Lorric Ziegler, Friedemann Zenke, David~B. Kastner, and Wulfram Gerstner.
\newblock Synaptic consolidation: from synapses to behavioral modeling.
\newblock \emph{Journal of Neuroscience}, 35\penalty0 (3):\penalty0 1319--1334,
  2015.

\bibitem[Whittington and Bogacz(2017)]{whittington_approximation_2017}
James C.~R. Whittington and Rafal Bogacz.
\newblock An approximation of the error backpropagation algorithm in a
  predictive coding network with local {Hebbian} synaptic plasticity.
\newblock \emph{Neural Computation}, 29\penalty0 (5):\penalty0 1229--1262,
  2017.

\bibitem[Payeur et~al.(2021)Payeur, Guerguiev, Zenke, Richards, and
  Naud]{payeur_burst-dependent_2021}
Alexandre Payeur, Jordan Guerguiev, Friedemann Zenke, Blake~A. Richards, and
  Richard Naud.
\newblock Burst-dependent synaptic plasticity can coordinate learning in
  hierarchical circuits.
\newblock \emph{Nature Neuroscience}, 24\penalty0 (7):\penalty0 1010--1019,
  2021.

\bibitem[Xie and Seung(2004)]{xie_learning_2004}
Xiaohui Xie and H.~Sebastian Seung.
\newblock Learning in neural networks by reinforcement of irregular spiking.
\newblock \emph{Physical Review E}, 69\penalty0 (4), 2004.

\bibitem[Miller and Cohen(2001)]{miller_integrative_2001}
Earl~K. Miller and Jonathan~D. Cohen.
\newblock An integrative theory of prefrontal cortex function.
\newblock \emph{Annual Review of Neuroscience}, 24\penalty0 (1):\penalty0
  167--202, 2001.

\bibitem[Rikhye et~al.(2018)Rikhye, Wimmer, and Halassa]{rikhye_toward_2018}
Rajeev~V. Rikhye, Ralf~D. Wimmer, and Michael~M. Halassa.
\newblock Toward an integrative theory of thalamic function.
\newblock \emph{Annual Review of Neuroscience}, 41\penalty0 (1):\penalty0
  163--183, 2018.

\bibitem[Titley et~al.(2017)Titley, Brunel, and Hansel]{titley_toward_2017}
Heather~K. Titley, Nicolas Brunel, and Christian Hansel.
\newblock Toward a neurocentric view of learning.
\newblock \emph{Neuron}, 95\penalty0 (1):\penalty0 19--32, 2017.

\bibitem[Zintgraf et~al.(2019)Zintgraf, Shiarli, Kurin, Hofmann, and
  Whiteson]{zintgraf_fast_2019}
Luisa Zintgraf, Kyriacos Shiarli, Vitaly Kurin, Katja Hofmann, and Shimon
  Whiteson.
\newblock Fast context adaptation via meta-learning.
\newblock In \emph{International {Conference} on {Machine} {Learning}}, 2019.

\bibitem[Perez et~al.(2018)Perez, Strub, Vries, Dumoulin, and
  Courville]{perez_film_2018}
Ethan Perez, Florian Strub, Harm~de Vries, Vincent Dumoulin, and Aaron~C.
  Courville.
\newblock Film: visual reasoning with a general conditioning layer.
\newblock In \emph{Proceedings of the {Thirty}-{Second} {AAAI} {Conference} on
  {Artificial} {Intelligence}}, 2018.

\bibitem[Ferguson and Cardin(2020)]{ferguson_mechanisms_2020}
Katie~A. Ferguson and Jessica~A. Cardin.
\newblock Mechanisms underlying gain modulation in the cortex.
\newblock \emph{Nature Reviews Neuroscience}, 21\penalty0 (2):\penalty0 80--92,
  2020.

\bibitem[Larkum et~al.(2004)Larkum, Senn, and Lüscher]{larkum_top-down_2004}
Matthew~E. Larkum, Walter Senn, and Hans-R. Lüscher.
\newblock Top-down dendritic input increases the gain of layer 5 pyramidal
  neurons.
\newblock \emph{Cerebral Cortex}, 14\penalty0 (10):\penalty0 1059--1070, 2004.

\bibitem[Sacramento et~al.(2018)Sacramento, Costa, Bengio, and
  Senn]{sacramento_dendritic_2018}
João Sacramento, Rui~P. Costa, Yoshua Bengio, and Walter Senn.
\newblock Dendritic cortical microcircuits approximate the backpropagation
  algorithm.
\newblock In \emph{Advances in {Neural} {Information} {Processing} {Systems}},
  2018.

\bibitem[Krizhevsky(2009)]{krizhevsky_learning_2009}
Alex Krizhevsky.
\newblock Learning multiple layers of features from tiny images.
\newblock Technical report, 2009.

\bibitem[Almeida(1989)]{almeida_backpropagation_1989}
Luís~B. Almeida.
\newblock Backpropagation in perceptrons with feedback.
\newblock In Rolf Eckmiller and Christoph v.d. Malsburg, editors, \emph{Neural
  {Computers}}, pages 199--208. Springer Berlin Heidelberg, 1989.

\bibitem[Pineda(1989)]{pineda_recurrent_1989}
Fernando~J. Pineda.
\newblock Recurrent backpropagation and the dynamical approach to adaptive
  neural computation.
\newblock \emph{Neural Computation}, 1\penalty0 (2):\penalty0 161--172, 1989.

\bibitem[Liao et~al.(2018)Liao, Xiong, Fetaya, Zhang, Yoon, Pitkow, Urtasun,
  and Zemel]{liao_reviving_2018}
Renjie Liao, Yuwen Xiong, Ethan Fetaya, Lisa Zhang, KiJung Yoon, Xaq Pitkow,
  Raquel Urtasun, and Richard Zemel.
\newblock Reviving and improving recurrent back-propagation.
\newblock In \emph{International {Conference} on {Machine} {Learning}}, 2018.

\bibitem[Foo et~al.(2007)Foo, Do, and Ng]{foo_efficient_2007}
Chuan-sheng Foo, Chuong~B. Do, and Andrew~Y. Ng.
\newblock Efficient multiple hyperparameter learning for log-linear models.
\newblock In \emph{Advances in {Neural} {Information} {Processing} {Systems}},
  2007.

\bibitem[Luketina et~al.(2016)Luketina, Berglund, Greff, and
  Raiko]{luketina_scalable_2016}
Jelena Luketina, Mathias Berglund, Klaus Greff, and Tapani Raiko.
\newblock Scalable gradient-based tuning of continuous regularization
  hyperparameters.
\newblock In \emph{International {Conference} on {Machine} {Learning}}, 2016.

\bibitem[Shaban et~al.(2019)Shaban, Cheng, Hatch, and
  Boots]{shaban_truncated_2019}
Amirreza Shaban, Ching-An Cheng, Nathan Hatch, and Byron Boots.
\newblock Truncated back-propagation for bilevel optimization.
\newblock In \emph{International {Conference} on {Artificial} {Intelligence}
  and {Statistics}}, 2019.

\bibitem[Lake et~al.(2015)Lake, Salakhutdinov, and
  Tenenbaum]{lake_human-level_2015}
Brenden~M. Lake, Ruslan Salakhutdinov, and Joshua~B. Tenenbaum.
\newblock Human-level concept learning through probabilistic program induction.
\newblock \emph{Science}, 350\penalty0 (6266):\penalty0 1332--1338, 2015.

\bibitem[Ravi and Larochelle(2016)]{ravi_optimization_2016}
Sachin Ravi and Hugo Larochelle.
\newblock Optimization as a model for few-shot learning.
\newblock In \emph{International {Conference} on {Learning} {Representations}},
  2016.

\bibitem[Lake et~al.(2011)Lake, Salakhutdinov, Gross, and
  Tenenbaum]{lake_one_2011}
Brenden~M. Lake, Ruslan Salakhutdinov, Jason Gross, and Joshua~B. Tenenbaum.
\newblock One shot learning of simple visual concepts.
\newblock In \emph{Proceedings of the {Annual} {Meeting} of the {Cognitive}
  {Science} {Society}}, 2011.

\bibitem[Nichol et~al.(2018)Nichol, Achiam, and
  Schulman]{nichol_first-order_2018}
Alex Nichol, Joshua Achiam, and John Schulman.
\newblock On first-order meta-learning algorithms.
\newblock \emph{arXiv preprint arXiv:1803.02999}, 2018.

\bibitem[Bellec et~al.(2020)Bellec, Scherr, Subramoney, Hajek, Salaj,
  Legenstein, and Maass]{bellec_solution_2020}
Guillaume Bellec, Franz Scherr, Anand Subramoney, Elias Hajek, Darjan Salaj,
  Robert Legenstein, and Wolfgang Maass.
\newblock A solution to the learning dilemma for recurrent networks of spiking
  neurons.
\newblock \emph{Nature Communications}, 11\penalty0 (1):\penalty0 3625, 2020.

\bibitem[Neftci et~al.(2019)Neftci, Mostafa, and Zenke]{neftci_surrogate_2019}
Emre~O. Neftci, Hesham Mostafa, and Friedemann Zenke.
\newblock Surrogate gradient learning in spiking neural networks: bringing the
  power of gradient-based optimization to spiking neural networks.
\newblock \emph{IEEE Signal Processing Magazine}, 36\penalty0 (6):\penalty0
  51--63, 2019.

\bibitem[Bellec et~al.(2018)Bellec, Salaj, Subramoney, Legenstein, and
  Maass]{bellec_long_2018}
Guillaume Bellec, Darjan Salaj, Anand Subramoney, Robert Legenstein, and
  Wolfgang Maass.
\newblock Long short-term memory and learning-to-learn in networks of spiking
  neurons.
\newblock \emph{Advances in Neural Information Processing Systems}, 2018.

\bibitem[Banerjee et~al.(2020)Banerjee, Parente, Teutsch, Lewis, Voigt, and
  Helmchen]{banerjee_value-guided_2020}
Abhishek Banerjee, Giuseppe Parente, Jasper Teutsch, Christopher Lewis,
  Fabian~F. Voigt, and Fritjof Helmchen.
\newblock Value-guided remapping of sensory cortex by lateral orbitofrontal
  cortex.
\newblock \emph{Nature}, 585\penalty0 (7824):\penalty0 245--250, 2020.

\bibitem[Samborska et~al.(2021)Samborska, Butler, Walton, Behrens, and
  Akam]{samborska_complementary_2021}
Veronika Samborska, James Butler, Mark Walton, Timothy~E.J. Behrens, and Thomas
  Akam.
\newblock Complementary task representations in hippocampus and prefrontal
  cortex for generalising the structure of problems.
\newblock \emph{bioRxiv}, 2021.

\bibitem[Riquelme et~al.(2018)Riquelme, Tucker, and Snoek]{riquelme_deep_2018}
Carlos Riquelme, George Tucker, and Jasper Snoek.
\newblock Deep {Bayesian} bandits showdown: an empirical comparison of
  {Bayesian} deep networks for {Thompson} sampling.
\newblock In \emph{International {Conference} on {Learning} {Representations}},
  2018.

\bibitem[Garnelo et~al.(2018)Garnelo, Schwarz, Rosenbaum, Viola, Rezende,
  Eslami, and Teh]{garnelo_neural_2018}
Marta Garnelo, Jonathan Schwarz, Dan Rosenbaum, Fabio Viola, Danilo~J. Rezende,
  S.~M.~Ali Eslami, and Yee~Whye Teh.
\newblock Neural processes.
\newblock \emph{arXiv preprint arXiv:1807.01622}, 2018.

\bibitem[Ravi and Beatson(2019)]{ravi_amortized_2019}
Sachin Ravi and Alex Beatson.
\newblock Amortized {Bayesian} meta-learning.
\newblock In \emph{International {Conference} on {Learning} {Representations}},
  2019.

\bibitem[Fritz et~al.(2010)Fritz, David, Radtke-Schuller, Yin, and
  Shamma]{fritz_adaptive_2010}
Jonathan~B. Fritz, Stephen~V. David, Susanne Radtke-Schuller, Pingbo Yin, and
  Shihab~A. Shamma.
\newblock Adaptive, behaviorally gated, persistent encoding of task-relevant
  auditory information in ferret frontal cortex.
\newblock \emph{Nature Neuroscience}, 13\penalty0 (8):\penalty0 1011--1019,
  2010.

\bibitem[Masse et~al.(2018)Masse, Grant, and Freedman]{masse_alleviating_2018}
Nicolas~Y. Masse, Gregory~D. Grant, and David~J. Freedman.
\newblock Alleviating catastrophic forgetting using context-dependent gating
  and synaptic stabilization.
\newblock \emph{Proceedings of the National Academy of Sciences}, 115\penalty0
  (44):\penalty0 E10467--E10475, October 2018.

\bibitem[Wen et~al.(2020)Wen, Tran, and Ba]{wen_batchensemble_2020}
Yeming Wen, Dustin Tran, and Jimmy Ba.
\newblock {BatchEnsemble}: an alternative approach to efficient ensemble and
  lifelong learning.
\newblock In \emph{International {Conference} on {Learning} {Representations}},
  2020.

\bibitem[von Oswald et~al.(2020)von Oswald, Henning, Grewe, and
  Sacramento]{von_oswald_continual_2020}
Johannes von Oswald, Christian Henning, Benjamin~F. Grewe, and João
  Sacramento.
\newblock Continual learning with hypernetworks.
\newblock In \emph{International {Conference} on {Learning} {Representations}},
  2020.

\bibitem[Tsuda et~al.(2020)Tsuda, Tye, Siegelmann, and
  Sejnowski]{tsuda_modeling_2020}
Ben Tsuda, Kay~M. Tye, Hava~T. Siegelmann, and Terrence~J. Sejnowski.
\newblock A modeling framework for adaptive lifelong learning with transfer and
  savings through gating in the prefrontal cortex.
\newblock \emph{Proceedings of the National Academy of Sciences}, 117\penalty0
  (47):\penalty0 29872--29882, 2020.

\bibitem[Blanco-Pozo et~al.(2021)Blanco-Pozo, Akam, and
  Walton]{blanco-pozo_dopamine_2021}
Marta Blanco-Pozo, Thomas Akam, and Mark Walton.
\newblock Dopamine reports reward prediction errors, but does not update
  policy, during inference-guided choice.
\newblock preprint, Neuroscience, June 2021.
\newblock URL \url{http://biorxiv.org/lookup/doi/10.1101/2021.06.25.449995}.

\bibitem[Wang et~al.(2018)Wang, Kurth-Nelson, Kumaran, Tirumala, Soyer, Leibo,
  Hassabis, and Botvinick]{wang_prefrontal_2018}
Jane~X. Wang, Zeb Kurth-Nelson, Dharshan Kumaran, Dhruva Tirumala, Hubert
  Soyer, Joel~Z. Leibo, Demis Hassabis, and Matthew Botvinick.
\newblock Prefrontal cortex as a meta-reinforcement learning system.
\newblock \emph{Nature Neuroscience}, 21\penalty0 (6):\penalty0 860--868, 2018.

\bibitem[Doron et~al.(2020)Doron, Shin, Takahashi, Drüke, Bocklisch, Skenderi,
  de~Mont, Toumazou, Ledderose, Brecht, Naud, and
  Larkum]{doron_perirhinal_2020}
Guy Doron, Jiyun~N. Shin, Naoya Takahashi, Moritz Drüke, Christina Bocklisch,
  Salina Skenderi, Lisa de~Mont, Maria Toumazou, Julia Ledderose, Michael
  Brecht, Richard Naud, and Matthew~E. Larkum.
\newblock Perirhinal input to neocortical layer 1 controls learning.
\newblock \emph{Science}, 370\penalty0 (6523):\penalty0 eaaz3136, 2020.

\bibitem[McClelland et~al.(1995)McClelland, McNaughton, and
  O'Reilly]{mcclelland_why_1995}
James~L. McClelland, Bruce~L. McNaughton, and Randall~C. O'Reilly.
\newblock Why there are complementary learning systems in the hippocampus and
  neocortex: {Insights} from the successes and failures of connectionist models
  of learning and memory.
\newblock \emph{Psychological Review}, 102\penalty0 (3):\penalty0 419--457,
  1995.

\bibitem[Kumaran et~al.(2016)Kumaran, Hassabis, and
  McClelland]{kumaran_what_2016}
Dharshan Kumaran, Demis Hassabis, and James~L. McClelland.
\newblock What learning systems do intelligent agents need? {Complementary}
  learning systems theory updated.
\newblock \emph{Trends in Cognitive Sciences}, 20\penalty0 (7):\penalty0
  512--534, 2016.

\bibitem[Hoel(2021)]{hoel_overfitted_2021}
Erik Hoel.
\newblock The overfitted brain: {Dreams} evolved to assist generalization.
\newblock \emph{Patterns}, 2\penalty0 (5):\penalty0 100244, 2021.

\bibitem[Hadsell et~al.(2020)Hadsell, Rao, Rusu, and
  Pascanu]{hadsell_embracing_2020}
Raia Hadsell, Dushyant Rao, Andrei~A. Rusu, and Razvan Pascanu.
\newblock Embracing change: continual learning in deep neural networks.
\newblock \emph{Trends in Cognitive Sciences}, 24\penalty0 (12):\penalty0
  1028--1040, 2020.

\bibitem[Javed and White(2019)]{javed_meta-learning_2019}
Khurram Javed and Martha White.
\newblock Meta-learning representations for continual learning.
\newblock In \emph{Advances in {Neural} {Information} {Processing} {Systems}},
  2019.

\bibitem[Riemer et~al.(2019)Riemer, Cases, Ajemian, Liu, Rish, Tu, and
  Tesauro]{riemer_learning_2019}
Matthew Riemer, Ignacio Cases, Robert Ajemian, Miao Liu, Irina Rish, Yuhai Tu,
  and Gerald Tesauro.
\newblock Learning to learn without forgetting by maximizing transfer and
  minimizing interference.
\newblock In \emph{International {Conference} on {Learning} {Representations}},
  2019.

\bibitem[Gupta et~al.(2020)Gupta, Yadav, and Paull]{gupta_look-ahead_2020}
Gunshi Gupta, Karmesh Yadav, and Liam Paull.
\newblock Look-ahead meta learning for continual learning.
\newblock In \emph{Advances in {Neural} {Information} {Processing} {Systems}},
  2020.

\bibitem[Beaulieu et~al.(2020)Beaulieu, Frati, Miconi, Lehman, Stanley, Clune,
  and Cheney]{beaulieu_learning_2020}
Shawn Beaulieu, Lapo Frati, Thomas Miconi, Joel Lehman, Kenneth~O. Stanley,
  Jeff Clune, and Nick Cheney.
\newblock Learning to continually learn.
\newblock \emph{arXiv preprint arXiv:2002.09571}, 2020.

\bibitem[von Oswald et~al.(2021)von Oswald, Zhao, Kobayashi, Schug, Caccia,
  Zucchet, and Sacramento]{von_oswald_learning_2021}
Johannes von Oswald, Dominic Zhao, Seijin Kobayashi, Simon Schug, Massimo
  Caccia, Nicolas Zucchet, and João Sacramento.
\newblock Learning where to learn: {Gradient} sparsity in meta and continual
  learning.
\newblock In \emph{Advances in {Neural} {Information} {Processing} {Systems}},
  2021.

\end{thebibliography}


\begin{thebibliography}{42}
\providecommand{\natexlab}[1]{#1}
\providecommand{\url}[1]{\texttt{#1}}
\expandafter\ifx\csname urlstyle\endcsname\relax
  \providecommand{\doi}[1]{doi: #1}\else
  \providecommand{\doi}{doi: \begingroup \urlstyle{rm}\Url}\fi

\bibitem[Scellier and Bengio(2017)]{scellier_equilibrium_2017}
Benjamin Scellier and Yoshua Bengio.
\newblock Equilibrium propagation: bridging the gap between energy-based models
  and backpropagation.
\newblock \emph{Frontiers in Computational Neuroscience}, 11, 2017.

\bibitem[Scellier(2021)]{scellier_deep_2021}
Benjamin Scellier.
\newblock \emph{A deep learning theory for neural networks grounded in
  physics}.
\newblock {PhD} {Thesis}, Université de Montréal, 2021.

\bibitem[Dontchev and Rockafellar(2009)]{dontchev_implicit_2009}
Asen~L. Dontchev and R.~Tyrrell Rockafellar.
\newblock \emph{Implicit {Functions} and {Solution} {Mappings}}.
\newblock Springer, NY, 2009.

\bibitem[Laborieux et~al.(2021)Laborieux, Ernoult, Scellier, Bengio, Grollier,
  and Querlioz]{laborieux_scaling_2021}
Axel Laborieux, Maxence Ernoult, Benjamin Scellier, Yoshua Bengio, Julie
  Grollier, and Damien Querlioz.
\newblock Scaling equilibrium propagation to deep convnets by drastically
  reducing its gradient estimator bias.
\newblock \emph{Frontiers in Neuroscience}, 14, 2021.

\bibitem[Luketina et~al.(2016)Luketina, Berglund, Greff, and
  Raiko]{luketina_scalable_2016}
Jelena Luketina, Mathias Berglund, Klaus Greff, and Tapani Raiko.
\newblock Scalable gradient-based tuning of continuous regularization
  hyperparameters.
\newblock In \emph{International {Conference} on {Machine} {Learning}}, 2016.

\bibitem[Liao et~al.(2018)Liao, Xiong, Fetaya, Zhang, Yoon, Pitkow, Urtasun,
  and Zemel]{liao_reviving_2018}
Renjie Liao, Yuwen Xiong, Ethan Fetaya, Lisa Zhang, KiJung Yoon, Xaq Pitkow,
  Raquel Urtasun, and Richard Zemel.
\newblock Reviving and improving recurrent back-propagation.
\newblock In \emph{International {Conference} on {Machine} {Learning}}, 2018.

\bibitem[Lorraine et~al.(2020)Lorraine, Vicol, and
  Duvenaud]{lorraine_optimizing_2020}
Jonathan Lorraine, Paul Vicol, and David Duvenaud.
\newblock Optimizing millions of hyperparameters by implicit differentiation.
\newblock In \emph{International {Conference} on {Artificial} {Intelligence}
  and {Statistics}}, 2020.

\bibitem[Goutte and Larsen(1998)]{goutte_adaptive_1998}
C.~Goutte and J.~Larsen.
\newblock Adaptive regularization of neural networks using conjugate gradient.
\newblock In \emph{Proceedings of the {IEEE} {International} {Conference} on
  {Acoustics}, {Speech} and {Signal} {Processing}}, 1998.

\bibitem[Rajeswaran et~al.(2019)Rajeswaran, Finn, Kakade, and
  Levine]{rajeswaran_meta-learning_2019}
Aravind Rajeswaran, Chelsea Finn, Sham Kakade, and Sergey Levine.
\newblock Meta-learning with implicit gradients.
\newblock In \emph{Advances in {Neural} {Information} {Processing} {Systems}},
  2019.

\bibitem[Shaban et~al.(2019)Shaban, Cheng, Hatch, and
  Boots]{shaban_truncated_2019}
Amirreza Shaban, Ching-An Cheng, Nathan Hatch, and Byron Boots.
\newblock Truncated back-propagation for bilevel optimization.
\newblock In \emph{International {Conference} on {Artificial} {Intelligence}
  and {Statistics}}, 2019.

\bibitem[MacKay(1992)]{mackay_practical_1992}
David J.~C. MacKay.
\newblock A practical {Bayesian} framework for backpropagation networks.
\newblock \emph{Neural Computation}, 4\penalty0 (3), 1992.

\bibitem[Bengio(2000)]{bengio_gradient-based_2000}
Yoshua Bengio.
\newblock Gradient-based optimization of hyperparameters.
\newblock \emph{Neural Computation}, 12\penalty0 (8):\penalty0 1889--1900,
  2000.

\bibitem[Foo et~al.(2007)Foo, Do, and Ng]{foo_efficient_2007}
Chuan-sheng Foo, Chuong~B. Do, and Andrew~Y. Ng.
\newblock Efficient multiple hyperparameter learning for log-linear models.
\newblock In \emph{Advances in {Neural} {Information} {Processing} {Systems}},
  2007.

\bibitem[Pedregosa(2016)]{pedregosa_hyperparameter_2016}
Fabian Pedregosa.
\newblock Hyperparameter optimization with approximate gradient.
\newblock In \emph{International {Conference} on {Machine} {Learning}}, 2016.

\bibitem[Harrison~Jr. and Rubinfeld(1978)]{harrison_jr_hedonic_1978}
David Harrison~Jr. and Daniel~L. Rubinfeld.
\newblock Hedonic housing prices and the demand for clean air.
\newblock \emph{Journal of Environmental Economics and Management}, 5\penalty0
  (1):\penalty0 81--102, 1978.

\bibitem[Krizhevsky(2009)]{krizhevsky_learning_2009}
Alex Krizhevsky.
\newblock Learning multiple layers of features from tiny images.
\newblock Technical report, 2009.

\bibitem[LeCun(1998)]{lecun_mnist_1998}
Yann LeCun.
\newblock The {MNIST} database of handwritten digits.
\newblock \emph{Available at http://yann. lecun. com/exdb/mnist}, 1998.

\bibitem[LeCun et~al.(1998)LeCun, Bottou, Bengio, and
  Haffner]{lecun_gradient-based_1998}
Yann LeCun, Léon Bottou, Yoshua Bengio, and Patrick Haffner.
\newblock Gradient-based learning applied to document recognition.
\newblock \emph{Proceedings of the IEEE}, 86\penalty0 (11):\penalty0
  2278--2324, 1998.

\bibitem[Ioffe and Szegedy(2015)]{ioffe_batch_2015}
Sergey Ioffe and Christian Szegedy.
\newblock Batch normalization: accelerating deep network training by reducing
  internal covariate shift.
\newblock In \emph{International {Conference} on {Machine} {Learning}}, 2015.

\bibitem[Santoro et~al.(2016)Santoro, Bartunov, Botvinick, Wierstra, and
  Lillicrap]{santoro_meta-learning_2016}
Adam Santoro, Sergey Bartunov, Matthew Botvinick, Daan Wierstra, and Timothy
  Lillicrap.
\newblock Meta-learning with memory-augmented neural networks.
\newblock In \emph{International {Conference} on {Machine} {Learning}}, 2016.

\bibitem[Vinyals et~al.(2016)Vinyals, Blundell, Lillicrap, Kavukcuoglu, and
  Wierstra]{vinyals_matching_2016}
Oriol Vinyals, Charles Blundell, Timothy~P. Lillicrap, Koray Kavukcuoglu, and
  Daan Wierstra.
\newblock Matching networks for one shot learning.
\newblock In \emph{Advances in {Neural} {Information} {Processing} {Systems}},
  2016.

\bibitem[Finn et~al.(2017)Finn, Abbeel, and Levine]{finn_model-agnostic_2017}
Chelsea Finn, Pieter Abbeel, and Sergey Levine.
\newblock Model-agnostic meta-learning for fast adaptation of deep networks.
\newblock In \emph{International {Conference} on {Machine} {Learning}}, 2017.

\bibitem[Lake et~al.(2011)Lake, Salakhutdinov, Gross, and
  Tenenbaum]{lake_one_2011}
Brenden~M. Lake, Ruslan Salakhutdinov, Jason Gross, and Joshua~B. Tenenbaum.
\newblock One shot learning of simple visual concepts.
\newblock In \emph{Proceedings of the {Annual} {Meeting} of the {Cognitive}
  {Science} {Society}}, 2011.

\bibitem[Ravi and Larochelle(2016)]{ravi_optimization_2016}
Sachin Ravi and Hugo Larochelle.
\newblock Optimization as a model for few-shot learning.
\newblock In \emph{International {Conference} on {Learning} {Representations}},
  2016.

\bibitem[Nichol et~al.(2018)Nichol, Achiam, and
  Schulman]{nichol_first-order_2018}
Alex Nichol, Joshua Achiam, and John Schulman.
\newblock On first-order meta-learning algorithms.
\newblock \emph{arXiv preprint arXiv:1803.02999}, 2018.

\bibitem[Antoniou et~al.(2019)Antoniou, Edwards, and
  Storkey]{antoniou_how_2019}
Antreas Antoniou, Harrison Edwards, and Amos Storkey.
\newblock How to train your {MAML}.
\newblock In \emph{International {Conference} on {Learning} {Representations}},
  2019.

\bibitem[Ha et~al.(2017)Ha, Dai, and Le]{ha_hypernetworks_2017}
David Ha, Andrew Dai, and Quoc~V. Le.
\newblock {HyperNetworks}.
\newblock In \emph{International {Conference} on {Learning} {Representations}},
  2017.

\bibitem[Rusu et~al.(2019)Rusu, Rao, Sygnowski, Vinyals, Pascanu, Osindero, and
  Hadsell]{rusu_meta-learning_2019}
Andrei~A. Rusu, Dushyant Rao, Jakub Sygnowski, Oriol Vinyals, Razvan Pascanu,
  Simon Osindero, and Raia Hadsell.
\newblock Meta-learning with latent embedding optimization.
\newblock In \emph{International {Conference} on {Learning} {Representations}},
  2019.

\bibitem[Zhao et~al.(2020)Zhao, Kobayashi, Sacramento, and von
  Oswald]{zhao_meta-learning_2020}
Dominic Zhao, Seijin Kobayashi, João Sacramento, and Johannes von Oswald.
\newblock Meta-learning via hypernetworks.
\newblock In \emph{Workshop on {Meta}-{Learning} at {NeurIPS}}, 2020.

\bibitem[He et~al.(2015)He, Zhang, Ren, and Sun]{he_delving_2015}
Kaiming He, Xiangyu Zhang, Shaoqing Ren, and Jian Sun.
\newblock Delving deep into rectifiers: surpassing human-level performance on
  {ImageNet} classification.
\newblock In \emph{Proceedings of the {IEEE} {International} {Conference} on
  {Computer} {Vision}}, 2015.

\bibitem[Polyak and Juditsky(1992)]{polyak_acceleration_1992}
Boris~T. Polyak and Anatoli~B. Juditsky.
\newblock Acceleration of stochastic approximation by averaging.
\newblock \emph{SIAM Journal on Control and Optimization}, 30\penalty0
  (4):\penalty0 838--855, 1992.

\bibitem[Bellec et~al.(2020)Bellec, Scherr, Subramoney, Hajek, Salaj,
  Legenstein, and Maass]{bellec_solution_2020}
Guillaume Bellec, Franz Scherr, Anand Subramoney, Elias Hajek, Darjan Salaj,
  Robert Legenstein, and Wolfgang Maass.
\newblock A solution to the learning dilemma for recurrent networks of spiking
  neurons.
\newblock \emph{Nature Communications}, 11\penalty0 (1):\penalty0 3625, 2020.

\bibitem[Neftci et~al.(2019)Neftci, Mostafa, and Zenke]{neftci_surrogate_2019}
Emre~O. Neftci, Hesham Mostafa, and Friedemann Zenke.
\newblock Surrogate gradient learning in spiking neural networks: bringing the
  power of gradient-based optimization to spiking neural networks.
\newblock \emph{IEEE Signal Processing Magazine}, 36\penalty0 (6):\penalty0
  51--63, 2019.

\bibitem[Garnelo et~al.(2018)Garnelo, Schwarz, Rosenbaum, Viola, Rezende,
  Eslami, and Teh]{garnelo_neural_2018}
Marta Garnelo, Jonathan Schwarz, Dan Rosenbaum, Fabio Viola, Danilo~J. Rezende,
  S.~M.~Ali Eslami, and Yee~Whye Teh.
\newblock Neural processes.
\newblock \emph{arXiv preprint arXiv:1807.01622}, 2018.

\bibitem[Ravi and Beatson(2019)]{ravi_amortized_2019}
Sachin Ravi and Alex Beatson.
\newblock Amortized {Bayesian} meta-learning.
\newblock In \emph{International {Conference} on {Learning} {Representations}},
  2019.

\bibitem[Riquelme et~al.(2018)Riquelme, Tucker, and Snoek]{riquelme_deep_2018}
Carlos Riquelme, George Tucker, and Jasper Snoek.
\newblock Deep {Bayesian} bandits showdown: an empirical comparison of
  {Bayesian} deep networks for {Thompson} sampling.
\newblock In \emph{International {Conference} on {Learning} {Representations}},
  2018.

\bibitem[Liaw et~al.(2018)Liaw, Liang, Nishihara, Moritz, Gonzalez, and
  Stoica]{liaw_tune_2018}
Richard Liaw, Eric Liang, Robert Nishihara, Philipp Moritz, Joseph~E. Gonzalez,
  and Ion Stoica.
\newblock Tune: {A} research platform for distributed model selection and
  training.
\newblock \emph{arXiv preprint arXiv:1807.05118}, 2018.

\bibitem[Paszke et~al.(2019)Paszke, Gross, Massa, Lerer, Bradbury, Chanan,
  Killeen, Lin, Gimelshein, Antiga, Desmaison, Kopf, Yang, DeVito, Raison,
  Tejani, Chilamkurthy, Steiner, Fang, Bai, and Chintala]{paszke_pytorch_2019}
Adam Paszke, Sam Gross, Francisco Massa, Adam Lerer, James Bradbury, Gregory
  Chanan, Trevor Killeen, Zeming Lin, Natalia Gimelshein, Luca Antiga, Alban
  Desmaison, Andreas Kopf, Edward Yang, Zachary DeVito, Martin Raison, Alykhan
  Tejani, Sasank Chilamkurthy, Benoit Steiner, Lu~Fang, Junjie Bai, and Soumith
  Chintala.
\newblock {PyTorch}: an imperative style, high-performance deep learning
  library.
\newblock In \emph{Advances in {Neural} {Information} {Processing} {Systems}},
  2019.

\bibitem[Bradbury et~al.(2018)Bradbury, Frostig, Hawkins, Johnson, Leary,
  Maclaurin, Necula, Paszke, VanderPlas, Wanderman-Milne, and
  Zhang]{bradbury_jax_2018}
James Bradbury, Roy Frostig, Peter Hawkins, Matthew~James Johnson, Chris Leary,
  Dougal Maclaurin, George Necula, Adam Paszke, Jake VanderPlas, Skye
  Wanderman-Milne, and Qiao Zhang.
\newblock {JAX}: composable transformations of {Python}+{NumPy} programs, 2018.
\newblock URL \url{http://github.com/google/jax}.

\bibitem[Harris et~al.(2020)Harris, Millman, Walt, Gommers, Virtanen,
  Cournapeau, Wieser, Taylor, Berg, Smith, Kern, Picus, Hoyer, Kerkwijk, Brett,
  Haldane, Río, Wiebe, Peterson, Gérard-Marchant, Sheppard, Reddy, Weckesser,
  Abbasi, Gohlke, and Oliphant]{harris_array_2020}
Charles~R. Harris, K.~Jarrod Millman, Stéfan J. van~der Walt, Ralf Gommers,
  Pauli Virtanen, David Cournapeau, Eric Wieser, Julian Taylor, Sebastian Berg,
  Nathaniel~J. Smith, Robert Kern, Matti Picus, Stephan Hoyer, Marten H.~van
  Kerkwijk, Matthew Brett, Allan Haldane, Jaime Fernández~del Río, Mark
  Wiebe, Pearu Peterson, Pierre Gérard-Marchant, Kevin Sheppard, Tyler Reddy,
  Warren Weckesser, Hameer Abbasi, Christoph Gohlke, and Travis~E. Oliphant.
\newblock Array programming with {NumPy}.
\newblock \emph{Nature}, 585\penalty0 (7825):\penalty0 357--362, 2020.

\bibitem[Deleu et~al.(2019)Deleu, Würfl, Samiei, Cohen, and
  Bengio]{deleu_torchmeta_2019}
Tristan Deleu, Tobias Würfl, Mandana Samiei, Joseph~Paul Cohen, and Yoshua
  Bengio.
\newblock Torchmeta: {A} meta-learning library for {PyTorch}.
\newblock \emph{arXiv preprint arXiv:1909.06576}, 2019.

\bibitem[Hunter(2007)]{hunter_matplotlib_2007}
J.~D. Hunter.
\newblock Matplotlib: {A} {2D} graphics environment.
\newblock \emph{Computing in Science \& Engineering}, 9\penalty0 (3):\penalty0
  90--95, 2007.

\end{thebibliography}

\newpage
\appendix

\setcounter{page}{1}
\setcounter{figure}{0} \renewcommand{\thefigure}{S\arabic{figure}}
\setcounter{theorem}{0} \renewcommand{\thetheorem}{S\arabic{theorem}}
\setcounter{definition}{0} \renewcommand{\thedefinition}{S\arabic{definition}}
\setcounter{section}{0}
\renewcommand{\thesection}{S\arabic{section}}
\setcounter{table}{0} \renewcommand{\thetable}{S\arabic{table}}

\renewcommand \thepart{}
\renewcommand \partname{}
\renewcommand \thepart{Supplementary Materials}
\addcontentsline{toc}{section}{} 
\part{} 
\textbf{Nicolas Zucchet$^*$, Simon Schug$^*$, Johannes von Oswald$^*$, Dominic Zhao, João Sacramento}
\parttoc 

\section{Derivation of the contrastive meta-learning rule}
\label{sec_app:equilibrium_propagation}

Our contrastive meta-learning rule relies on the equilibrium propagation theorem \citesupp{scellier_equilibrium_2017, scellier_deep_2021}. We review this result and how we use it to derive the different instances of our rule.

\subsection{Equilibrium propagation theorem}

First, we  restate the equilibrium propagation theorem as presented in \citet{scellier_deep_2021}. Recall the definition of the augmented loss
\begin{equation}
    \Lt(\phi, \theta, \beta) = \Li(\phi, \theta) + \beta \Lo(\phi,\theta).
\end{equation}
Note that compared to the main text, we omit the subscript $\tau$ for conciseness.
Given the augmented loss, the equilibrium propagation theorem states the following:

\begin{theorem}[Equilibrium propagation]
    \label{thm:eq_prop}
    Let $\Li$ and $\Lo$ be two twice continuously differentiable functions. Let $\phi^*$ be a fixed point of $\Lt(\,\cdot\,, \bar{\theta}, \bar{\beta})$, i.e.
    \begin{equation*}
        \pder{\phi}{\Lt}(\phi^*, \bar{\theta}, \bar{\beta}) = 0,
    \end{equation*}
    such that $\partial_\phi^2 \Lt(\phi^*, \bar{\theta}, \bar{\beta})$ is invertible. Then, there exists a neighborhood of $(\bar{\theta}, \bar{\beta})$ and a continuously differentiable function $(\theta, \beta) \mapsto \phisbt$ such that $\phi^*_{\bar{\theta}, \bar{\beta}} = \phi^*$ and for every $(\theta, \beta)$ in this neighborhood
    \begin{equation*}
        \pder{\phi}{\Lt}(\phisbt, \theta, \beta) = 0.
    \end{equation*}
    Furthermore, 
    \begin{equation*}
        \frac{\mathrm{d}}{\mathrm{d}\theta}\pder{\beta}{\Lt}\left(\phisbt, \theta, \beta \right) =  \frac{\mathrm{d}}{\mathrm{d}\beta}\frac{\partial \Lt}{\partial \theta}\left(\phisbt, \theta,\beta \right)^\top\!.
    \end{equation*}
\end{theorem}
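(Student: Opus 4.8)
The plan is to prove the two assertions in turn: first the existence and smoothness of the branch $(\theta,\beta)\mapsto\phisbt$ of fixed points, then the cross-derivative identity. For the existence part I would apply the implicit function theorem \citesupp{dontchev_implicit_2009} to the map $F(\phi,\theta,\beta) := \pder{\phi}{\Lt}(\phi,\theta,\beta)$. Since $\Li$ and $\Lo$ are twice continuously differentiable, $F$ is continuously differentiable; by hypothesis $F(\phi^*,\bar\theta,\bar\beta)=0$, and the partial derivative $\spder{\phi}{F}(\phi^*,\bar\theta,\bar\beta)=\partial_\phi^2\Lt(\phi^*,\bar\theta,\bar\beta)$ is invertible. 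The implicit function theorem then produces a neighborhood of $(\bar\theta,\bar\beta)$ and a continuously differentiable map $(\theta,\beta)\mapsto\phisbt$ with $\phi^*_{\bar\theta,\bar\beta}=\phi^*$ such that $\pder{\phi}{\Lt}(\phisbt,\theta,\beta)=0$ throughout, which is exactly the first claim. As a by-product, differentiating this identity with respect to $\beta$ and to $\theta$ and using invertibility of the Hessian gives the closed forms
\[
  \der{\beta}{\phisbt} = -\bigl(\partial_\phi^2\Lt\bigr)^{-1}\,\frac{\partial^2\Lt}{\partial\phi\,\partial\beta}, \qquad \der{\theta}{\phisbt} = -\bigl(\partial_\phi^2\Lt\bigr)^{-1}\,\frac{\partial^2\Lt}{\partial\phi\,\partial\theta},
\]
all quantities evaluated at $(\phisbt,\theta,\beta)$; these will be used in the second part.

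For the identity, I would expand both total derivatives with the chain rule. Evaluating all second partials of $\Lt$ at $(\phisbt,\theta,\beta)$,
\[
  \der{\beta}{}\,\pder{\theta}{\Lt} = \frac{\partial^2\Lt}{\partial\theta\,\partial\phi}\,\der{\beta}{\phisbt} + \frac{\partial^2\Lt}{\partial\theta\,\partial\beta}, \qquad \der{\theta}{}\,\pder{\beta}{\Lt} = \frac{\partial^2\Lt}{\partial\beta\,\partial\phi}\,\der{\theta}{\phisbt} + \frac{\partial^2\Lt}{\partial\beta\,\partial\theta}.
\]
Substituting the two formulas for $\der{\beta}{\phisbt}$ and $\der{\theta}{\phisbt}$ turns the first right-hand side into $-\frac{\partial^2\Lt}{\partial\theta\,\partial\phi}(\partial_\phi^2\Lt)^{-1}\frac{\partial^2\Lt}{\partial\phi\,\partial\beta}+\frac{\partial^2\Lt}{\partial\theta\,\partial\beta}$ and the second into $-\frac{\partial^2\Lt}{\partial\beta\,\partial\phi}(\partial_\phi^2\Lt)^{-1}\frac{\partial^2\Lt}{\partial\phi\,\partial\theta}+\frac{\partial^2\Lt}{\partial\beta\,\partial\theta}$. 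Because $\Lt$ is twice continuously differentiable, all its mixed second-order partials are symmetric under exchange of the order of differentiation; in particular $\partial_\phi^2\Lt$ is a symmetric matrix (so its inverse is symmetric too), $\bigl(\frac{\partial^2\Lt}{\partial\theta\,\partial\phi}\bigr)^\top=\frac{\partial^2\Lt}{\partial\phi\,\partial\theta}$, and $\frac{\partial^2\Lt}{\partial\theta\,\partial\beta}$ agrees with $\frac{\partial^2\Lt}{\partial\beta\,\partial\theta}$ up to transpose. Transposing the first expression term by term therefore yields the second, which gives $\der{\theta}{}\pder{\beta}{\Lt}(\phisbt,\theta,\beta)=\bigl(\der{\beta}{}\pder{\theta}{\Lt}(\phisbt,\theta,\beta)\bigr)^\top$, the desired identity.

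There is no deep difficulty here; the only thing to watch is the bookkeeping of matrix shapes and transposes, since $\theta$ is a vector while $\beta$ is a scalar and the paper uses a numerator-layout convention for $\partial\Lt/\partial\theta$ — making the transpose in the statement come out correctly is just a matter of fixing conventions consistently, and this is the step I expect to be fiddliest. An alternative route that avoids the Hessian inverse altogether is to introduce the reduced objective $f(\theta,\beta):=\Lt(\phisbt,\theta,\beta)$ and note that the chain-rule terms proportional to $\pder{\phi}{\Lt}(\phisbt,\theta,\beta)$ vanish by the fixed-point equation (the envelope property), so that $\pder{\theta}{f}=\pder{\theta}{\Lt}(\phisbt,\theta,\beta)$ and $\pder{\beta}{f}=\pder{\beta}{\Lt}(\phisbt,\theta,\beta)$; then the claim is just equality of the mixed second partials of $f$ by Schwarz's theorem. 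The subtlety in that version is justifying that $f$ is genuinely twice continuously differentiable although $\phisbt$ is only known to be $C^1$ — which holds because the first partials of $f$ just exhibited are compositions of $C^1$ maps, hence $C^1$, so $f\in C^2$. I would present the direct computation as the main argument and relegate the envelope-theorem viewpoint to a remark.
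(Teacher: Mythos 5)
Your proposal is correct, but your primary argument follows a different route from the paper's. The paper proves the identity by applying Schwarz's theorem not to $\Lt$ itself but to the reduced function $(\theta,\beta)\mapsto\Lt(\phisbt,\theta,\beta)$: it writes $\der{\theta}{}\der{\beta}{}\Lt(\phisbt,\theta,\beta)=\der{\beta}{}\der{\theta}{}\Lt(\phisbt,\theta,\beta)^\top$ and then uses the chain rule together with the stationarity condition $\pder{\phi}{\Lt}(\phisbt,\theta,\beta)=0$ to collapse each total first derivative to the corresponding partial derivative, i.e.\ $\der{\beta}{}\Lt(\phisbt,\theta,\beta)=\pder{\beta}{\Lt}(\phisbt,\theta,\beta)$ and likewise for $\theta$ --- exactly the envelope argument you relegate to a closing remark. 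Your main computation instead expands $\der{\beta}{}\pder{\theta}{\Lt}$ and $\der{\theta}{}\pder{\beta}{\Lt}$ directly, substitutes the implicit-function-theorem formulas $\der{\beta}{\phisbt}=-(\partial_\phi^2\Lt)^{-1}\partial_\phi\partial_\beta\Lt$ and $\der{\theta}{\phisbt}=-(\partial_\phi^2\Lt)^{-1}\partial_\phi\partial_\theta\Lt$, and matches the two sides term by term using symmetry of the mixed partials and of $\partial_\phi^2\Lt$. Both arguments are sound; the trade-off is that the paper's route is shorter and never touches the Hessian inverse or the transpose bookkeeping you rightly flag as the fiddly part, while your direct computation yields as a by-product the explicit implicit-gradient expression (which the paper derives separately when reviewing implicit differentiation methods). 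A small credit to your remark: you explicitly verify that the reduced function is $C^2$ (its first partials being compositions of $C^1$ maps) so that Schwarz applies even though $\phisbt$ is only known to be $C^1$; the paper's proof invokes the symmetry of second derivatives without spelling out this point.
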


\begin{proof}
    The first point follows from the implicit function theorem \citesupp{dontchev_implicit_2009}. Let $(\theta, \beta)$ be in a neighborhood of $(\bar{\theta}, \bar{\beta})$ in which $\phisbt$ is differentiable.
    
    The symmetry of second order derivatives of a scalar function implies that
    \begin{equation}
        \der{\theta}{}\der{\beta}{}\Lt\left(\phisbt, \theta, \beta\right) =  \der{\beta}{}\der{\theta}{}\Lt\left(\phisbt, \theta, \beta\right)^\top \!.
    \end{equation}
    We then simplify the two sides of the equation. First, we look at the left-hand side and simplify $\mathrm{d}_\beta\Lt(\phisbt, \theta, \beta)$ using the chain rule and the fixed point condition
    \begin{equation}
        \begin{split}
            \der{\beta}{}\Lt(\phisbt, \theta, \beta) &= \pder{\beta}{\Lt}(\phisbt, \theta, \beta)+\pder{\phi}{\Lt}(\phisbt, \theta, \beta)\der{\beta}{\phisbt}\\
            & = \pder{\beta}{\Lt}(\phisbt, \theta, \beta).
        \end{split}
    \end{equation}
    Similarly, the $\mathrm{d}_\theta\Lt(\phisbt,\theta,\beta)$ term on the right-hand side is equal to $\partial_\theta\Lt(\phisbt,\theta,\beta)$ and we obtain the required result\footnote{Note that we use $\partial$ to denote partial derivatives and $\mathrm{d}$ to denote total derivatives.}.
\end{proof}

\subsection{The contrastive meta-learning rule}

Equilibrium propagation can be used to compute the gradient associated with the bilevel optimization problem studied in this paper
\begin{equation}
    \label{eqn_app:bo_metalearning}
    \min_\theta \; \Lo(\phist) \quad \mathrm{s.t.} \enspace \phist \in \argmin_\phi \, \Li(\phi, \theta).
\end{equation}
To do so, we first characterize $\phist$ through the stationarity condition
\begin{equation}
    \pder{\phi}{\Li}(\phist, \theta) = 0.
\end{equation}
As $\Li(\phi, \theta) = \Lt(\phi, \theta, 0)$ we can define an implicit function $\phisbt$ if $\spder{\phi}{^2 \Li}(\phist, \theta)$ is invertible for which $\phiszt = \phist$, and that satisfies, for $\beta$ close to 0,
\begin{equation}
    \pder{\phi}{\Lt}(\phisbt, \theta, \beta) = 0.
\end{equation}
The gradient associated with the bilevel optimization problem \eqref{eqn_app:bo_metalearning} is then equal to
\begin{equation}
    \nabla_\theta := \left ( \der{\theta}{}\Lo(\phist, \theta) \right )^\top = \left . \der{\theta}{}\pder{\beta}{\Lt}(\phisbt, \theta, \beta) \right |_{\beta=0}^\top = \left . \der{\beta}{}\pder{\theta}{\Lt}(\phisbt, \theta, \beta) \right |_{\beta=0}.
\end{equation}

Since $\beta$ is a scalar, we can use finite difference methods to efficiently estimate
\begin{equation}
    \label{eqn_app:contrastive_update}
    \Delta \theta = -\widehat{\nabla}_\theta = -\frac{1}{\beta}\left ( \pder{\theta}{\Lt}(\phihb, \theta, \beta) - \pder{\theta}{\Lt}(\phihz, \theta, 0)\right )^\top,
\end{equation}
in which $\phihz$ and $\phihb$ denote the estimates of $\phiszt$ and $\phisbt$. If those estimates are exact, we are guaranteed that the update converges to the true gradient. In some of our experiments, we use a more sophisticated center difference approximation similar to \citesupp{laborieux_scaling_2021}, that is
\begin{equation}
    \label{eq:symmetric-update}
    \Delta \theta^\mathrm{sym} = -\widehat{\nabla}^\mathrm{sym}_\theta = -\frac{1}{2\beta}\left ( \pder{\theta}{\Lt}(\phihb, \theta, \beta) - \pder{\theta}{\Lt}(\hat{\phi}_{-\beta}, \theta, -\beta)\right )^\top.
\end{equation}
We refer to \eqref{eq:symmetric-update} as the symmetric variant of our contrastive rule.
When the estimates for the fixed points are exact, it reduces the meta-gradient estimation bias from $O(\beta)$ for the forward difference above to $O(\beta^2)$ at the expense of having to run a third phase.

\subsection{Application to the complex synapse model}

We can now derive the meta-learning rules for the complex synapse model of Section~\ref{sect:synaptic-model}. Recall that
\begin{equation}
    \Li(\phi, \theta) = l^\mathrm{learn}(\phi) + \frac{1}{2}\sum_{i=1}^{|\phi|}\lambda_i (\omega_i - \phi_i)^2
\end{equation}
and
\begin{equation}
    \Lo(\phi) = l^\mathrm{eval}(\phi),
\end{equation}
where $l^\mathrm{eval}(\phi)$ and $l^\mathrm{learn}(\phi)$ are two data-dependent loss functions.

For the complex synapse model, only the learning loss depends on the meta-parameters, hence $\spder{\theta}{\Lt} = \spder{\theta}{\Li}$ and
\begin{equation}
    \begin{split}
        \pder{\omega}{\Lt}(\phi, \theta, \beta) &= \lambda (\omega - \phi)\\
        \pder{\lambda}{\Lt}(\phi, \theta, \beta) &= \frac{1}{2}(\omega - \phi)^2,
    \end{split}
\end{equation}
where all the operations are carried out elementwise. Plugging the last equation in the contrastive update \eqref{eqn_app:contrastive_update} yields
\begin{equation}
    \begin{split}
        \Delta \omega &= -\frac{\lambda}{\beta}\left ((\omega - \phihb) - (\omega - \phihz) \right ) = \frac{\lambda}{\beta}\left( \phihb - \phihz \right)\\
        \Delta \lambda &= -\frac{1}{2\beta}\left ((\omega - \phihb)^2 - (\omega - \phihz)^2 \right ) = \frac{1}{2\beta}\left ((\omega - \phihz)^2 - (\omega - \phihb)^2 \right ).
    \end{split}
\end{equation}

\subsection{Application to the top-down modulation model}

The structure of the learning and evaluation losses for the top-down modulation model is the following:
\begin{equation}
    \begin{split}
        \Li(\phi, \theta) &= l^\mathrm{learn}(h(\phi, \theta))\\
        \Lo(\phi, \theta) &= l^\mathrm{eval}(h(\phi, \theta)),
    \end{split}
\end{equation}
where $l^\mathrm{learn}(\psi)$ and $l^\mathrm{eval}(\psi)$ are two data-driven losses that use learning and evaluation datasets to evaluate the performance of a network parametrized by $\psi$, and $h(\phi, \theta)$ produces the parameters $\psi$ by modulating a base network parametrized by $\theta$. Specifically, we modulate the rectified linear unit (ReLU) activation function for each neuron $i$ with a gain $g_i$ and shift $b_i$, $\sigma_\phi(x_i) = g_i((\theta \cdot x)_i - b_i)_+ $, with the gain and shift parameters of all neurons defining the fast parameters $ \phi= \{ g, b \} $ .

Applying our contrastive update \eqref{eqn_app:contrastive_update} to this model we obtain the following equations:
\begin{equation}
    \label{eqn_app:modulatory_udpate}
    \begin{split}
        \Delta \theta &= -\frac{1}{\beta}\left ( \pder{\theta}{\Lt}(\phihb, \theta, \beta) - \pder{\theta}{\Lt}(\phihz, \theta, 0) \right )^\top\\
        & = -\frac{1}{\beta}\left ( \pder{\psi}{[l^\mathrm{learn}+\beta l^\mathrm{eval}]}(h(\phihb, \theta))\pder{\theta}{h}(\phihb, \theta) - \pder{\psi}{l^\mathrm{learn}}(h(\phihz, \theta))\pder{\theta}{h}(\phihz, \theta) \right )^\top.
    \end{split}
\end{equation}
Let us now decompose what this update means. The losses $l^\mathrm{learn}(\psi)$ and $[l^\mathrm{learn} + \beta l^\mathrm{eval}](\psi)$ measure the performance of a network parametrized by $\psi$ on the learning data, and on a weighted mix of learning and evaluation data. The derivatives $\spder{\psi}{l^\mathrm{learn}}$ and $\spder{\psi}{[l^\mathrm{learn}+\beta l^\mathrm{eval}]}$ can therefore be computed using the backpropagation-of-error algorithm, or any biologically plausible alternative to it. Those derivatives are then multiplied by $\spder{\theta}{h}$, which is a diagonal matrix as the modulation does not combine weights together, but only individually changes them. As a result, the update (\ref{eqn_app:modulatory_udpate}) contrasts two elementwise modulated gradients with respect to the weights.

\newpage
\section{Review of implicit gradient methods for meta-learning}
\label{sec_app:review_implicit}

The gradient associated with the bilevel optimization problem of Eq.~\ref{eqn_app:bo_metalearning} can be calculated analytically using the implicit function theorem \citesupp{dontchev_implicit_2009}. This insight forms the basis for implicit gradient methods for meta-learning which we shortly review in the following. We additionally provide a comparison of the computational and memory complexity of different meta-learning methods in Table~\ref{tab:comparison}.

As for the derivation of the contrastive meta-learning rule, we start by characterizing the implicit function $\phist$ of $\theta$ through its corresponding first-order stationarity condition
\begin{equation}
    \pder{\phi}{\Li}(\phist, \theta) = 0.
\end{equation}
Then, when the Hessian $\spder{\phi}{^2\Li}(\phist, \theta)$ is invertible, we have
\begin{equation}
    \label{eqn_app:implicit_gradient}
    \begin{split}
        \der{\theta}{}\Lo(\phist, \theta) &= \pder{\theta}{\Lo}(\phist, \theta) + \pder{\phi}{\Lo}\der{\theta}{\phist}\\
        &= \pder{\theta}{\Lo}(\phist, \theta) - \pder{\phi}{\Lo}\left ( \pder{\phi^2}{^2\Li}(\phist, \theta)\right)^{-1}\pder{\phi \partial\theta}{^2\Li}(\phist, \theta),
    \end{split}
\end{equation}
where in the first line we used the chain rule and in the second line the differentiation formula provided by the implicit function theorem \citesupp{dontchev_implicit_2009}.

In most practical applications, $\phi$ is high dimensional rendering the computation and inversion of the Hessian $\spder{\phi}{^2\Li}(\phist, \theta)$ intractable. In order to obtain a practical algorithm, implicit gradient methods numerically approximate the row vector
\begin{equation}
    \mu := -\pder{\phi}{\Lo}(\phist, \theta)\left ( \pder{\phi^2}{^2\Li}(\phist, \theta)\right)^{-1}.
\end{equation}

The simplest algorithm, T1-T2 \citesupp{luketina_scalable_2016}, replaces the inverse Hessian by the identity, i.e. \linebreak $\mu \approx \spder{\phi}{\Lo}(\phist, \theta)$, which yields an estimate relying only on first derivatives. 

The recurrent backpropagation algorithm \cite[RBP, ][]{almeida_backpropagation_1989, pineda_recurrent_1989, liao_reviving_2018}, also known as Neumann series approximation \citesupp{liao_reviving_2018, lorraine_optimizing_2020}, builds on the insight that $\mu$ is the solution of the linear system
\begin{equation}
    x \pder{\phi^2}{^2\Li}(\phist, \theta) = -\pder{\phi}{\Lo}(\phist, \theta).
\end{equation}
which can be solved via fixed point iteration.

Finally, $\mu$ can be seen as the solution of the optimization problem
\begin{equation}
    \min_x\, x \pder{\phi^2}{^2\Li}(\phist, \theta) x^\top + x \pder{\phi}{\Lo}(\phist, \theta)^\top
\end{equation}
when the Hessian of $\Li$ is positive definite. This optimization problem can be efficiently solved via the conjugate gradient method \citesupp{goutte_adaptive_1998, rajeswaran_meta-learning_2019}.  

The three algorithms described above provide different estimates for $\mu$ but all follow the same basic procedure: 
\begin{enumerate*}[label=(\arabic*)]
\item minimize the learning loss to approximate $\phist$;
\item estimate $\mu$; and
\item update the meta-parameters using \eqref{eqn_app:implicit_gradient} with the estimated $\mu$.

\end{enumerate*}

Compared to our contrastive meta-learning rule, these algorithms require a second phase that is completely different from the first one and which involves second derivatives (apart from the biased T1-T2). Additionally, as mentioned in Section~\ref{sect:hyperparam_opt}, the conjugate gradient method is faster in theory, but was reported to be unstable by several studies \citesupp{liao_reviving_2018, shaban_truncated_2019}. We confirm those findings in our experiments (cf. Section~\ref{sec_app:boston_comparison} and \ref{sec_app:supervised_metaopt}).

\begin{table}[!htb]
\caption{Comparison of computational and memory complexity of meta-learning methods.\\
$T$ denotes the number of steps in the base learning process and $K$ refers to steps taken in an algorithm-specific second phase. “HVP” abbreviates “Hessian-vector product” and “cross der. VP” denotes “cross derivative vector product”. An algorithm is “exact in the limit” if it computes the meta-gradient or can approximate it with arbitrary precision given enough compute. The algorithms compared in this table are contrastive meta-learning (CML), conjugate gradients (CG; used in iMAML), recurrent backpropagation (RBP), T1-T2, backpropagation-through-learning (BPTL; used in MAML), its truncated version (TBPTL) and its first-order version where all Hessians are replaced by the identity (FOBPTL; also known as FOMAML) and Reptile. The first four algorithms assume that the base learning process reaches an equilibrium, whereas the last four require no such assumption. * Reptile is not a general-purpose meta-learning method as it is restricted to meta-learn the initialization of the learning process.}
\label{tab:comparison}
\centering
\begin{tabular}{@{}llllllc@{}}
\toprule
Method & \multicolumn{2}{l}{\# gradients w.r.t.} & \multicolumn{2}{l}{\# 2nd-order terms} & Memory  & Exact in the limit \\
                        & $\phi$            & $\theta$            & HVP           & cross der. VP          &                          &                                     \\ \midrule
CML (ours)              & $T + K$             & 2                   & 0             & 0                      & $\mathcal{O}(|\phi| + |\theta|)$   & \cmark                                   \\
CG \cite{foo_efficient_2007, pedregosa_hyperparameter_2016, rajeswaran_meta-learning_2019} & $T + 1$             & 1                   & $K$             & 1                      & $\mathcal{O}(|\phi| + |\theta|)$   & \cmark                                   \\
RBP \cite{almeida_backpropagation_1989, pineda_recurrent_1989, liao_reviving_2018, lorraine_optimizing_2020}                    & $T + 1$             & 1                   & $K$             & 1                      & $\mathcal{O}(|\phi| + |\theta|)$   & \cmark                                   \\
T1-T2 \cite{luketina_scalable_2016}                  & $T + 1$             & 1                   & 0             & 1                      & $\mathcal{O}(|\phi| + |\theta|)$   & \xmark                                  \\ \midrule
BPTL \cite{finn_model-agnostic_2017}                   & $T + 1$             & $T + 1$               & $T$             & 0                      & $\mathcal{O}(T |\phi| + |\theta|)$ & \cmark                                   \\
TBPTL \cite{shaban_truncated_2019}                  & $T + 1$             & $K + 1$               & $K$             & 0                      & $\mathcal{O}(K |\phi| + |\theta|)$ & \xmark                                  \\
FOBPTL \cite{finn_model-agnostic_2017}                 & $T + 1$             & $T + 1$               & 0             & 0                      & $\mathcal{O}(T |\phi| + |\theta|)$ & \xmark                                  \\
Reptile* \cite{nichol_first-order_2018}               & $T$                 & 0                   & 0             & 0                      & $\mathcal{O}(|\phi|)$              & \xmark                                 \\ \bottomrule
\end{tabular}
\vspace{0.2cm}
\end{table}
\clearpage
\newpage
\section{Theoretical results}
\label{sec_app:theory}

The contrastive meta-learning rule \eqref{eqn_app:contrastive_update} only provides an approximation $\widehat{\nabla}_\theta$ to the meta-gradient $\nabla_\theta$ due to the limited precision of the fixed points and the finite difference estimator. We can in principle arbitrarily improve the approximation by spending more compute to refine the quality of the solutions $\phihz$ and $\phihb$ and decreasing the nudging strength $\beta$. The purpose of this section is to theoretically analyze the impact of such a refinement on the quality of the meta-gradient estimate. We state Theorem~\ref{thm:bound_ep_estimate} formally, present a corollary of this result, and verify that it holds experimentally.

\subsection{Meta-gradient estimation error bound}

We start by upper bounding the meta-gradient estimation error $\snorm{\widehat{\nabla}_\theta - \nabla_\theta}$, given the value of $\beta$ and the error made in the approximation of the solutions of the lower-level learning process. Two conflicting phenomena impact the estimation error. First, our meta-learning rule uses potentially inexact solutions. Second, the finite difference approximation of the $\beta$-derivative yields the so-called finite difference error. To study those two errors in more detail, we introduce \begin{equation*}
    \widehat{\nabla}_\theta^* :=  \frac{1}{\beta}\left ( \pder{\theta}{\Lt}(\phisbt, \theta, \beta)-\pder{\theta}{\Lt}(\phiszt, \theta, 0)\right )\!,
\end{equation*}
the contrastive estimate of the meta-gradient $\nabla_\theta$, but evaluated at the exact solutions $\phiszt$ and $\phisbt$ (recall that $\widehat{\nabla}_\theta$ has the same structure, but it is evaluated on the approximate solutions $\phihz$ and $\phihb$). Equipped with $\widehat{\nabla}_\theta^*$, we now have a way to quantify the two errors described above: $\snorm{\nabla_\theta - \widehat{\nabla}_\theta^*}$ measures the finite difference error and $\snorm{\widehat{\nabla}_\theta^* - \widehat{\nabla}_\theta}$ measures the solution approximation induced error, that is the consequence of the imperfect solutions. 

Informally, higher $\beta$ values will reduce the sensitivity to crude approximations to the lower-level solutions while increasing the finite difference error. Theorem \ref{thm:bound_ep_estimate} theoretically justifies this intuition under the idealized regime of strong convexity and smoothness defined in Assumption \ref{ass_app:regularity_inner}. This result holds for every rule induced by equilibrium propagation.

\begin{assumption}
    \label{ass_app:regularity_inner}
    Assume that $\Li$ and $\Lo$ are three-times continuously differentiable and that they, as functions of $\phi$, verify the following properties.
    \begin{itemize}
        \item[i.] $\partial_\theta\Li$ is $B^\mathrm{learn}$-Lipschitz and $\partial_\theta\Lo$ is $B^\mathrm{eval}$-Lipschitz.
        \item[ii.] $\Li$ and $\Lo$ are $L$-smooth and $\mu$-strongly convex.
        \item[iii.] their Hessians are $\rho$-Lipschitz.
        \item[iv.] $\partial_\phi\partial_\theta\Li$ and $\partial_\phi \partial_\theta\Lo$ are $\sigma$-Lipschitz.
    \end{itemize}
\end{assumption}

\begin{reptheorem}{thm:bound_ep_estimate}[Formal]
    Let $\beta > 0$ and $(\delta, \delta')$ be such that
    \begin{equation*}
        \snorm{\phiszt-\phihz} \leq \delta, \quad \mathrm{and} \quad \snorm{\phisbt-\phihb} \leq \delta'.
    \end{equation*}
    Under Assumption~\ref{ass_app:regularity_inner}, there exists a $\theta$-dependent constant $C$ such that
    \begin{equation*}
        \snorm{\nabla_\theta - \widehat{\nabla}_\theta} \leq \frac{B^\mathrm{learn}(\delta+\delta')}{\beta} + B^\mathrm{eval}\delta' + C \frac{\beta}{1+\beta} =: \mathcal{B}(\delta, \delta', \beta).
    \end{equation*}
    If we additionally assume that $\theta$ lies in a compact set, we can choose $C$ to be independent of $\theta$.
\end{reptheorem}

\begin{figure}[htbp]
    \centering
    \includegraphics[width=5.5in]{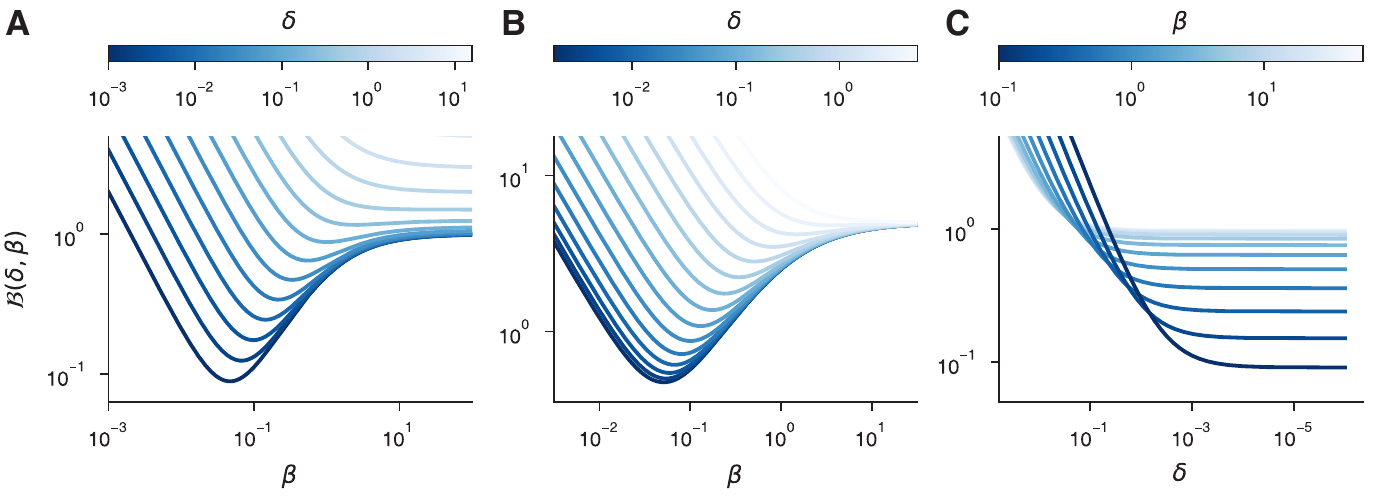}
    \caption{Theorem~\ref{thm:bound_ep_estimate} ($C=1$), as a function of $\beta$ (A, B) and as a function of $\delta=\delta'$ (C). (A) We take $B^\mathrm{learn} = B^\mathrm{eval} = 1$ and $\delta = \delta'$. (B) Bound for the setting in which $\delta'$ is fixed to $0.01$ and $\Lo$ is independent of $\theta$ (as for the complex synapse model). (C) We use the same setting as for (A).}
    \label{fig_app:bound_viz}
\end{figure}

We visualize our bound in Fig.~\ref{fig_app:bound_viz}, as a function of $\beta$ and of the solution approximation errors $\delta$ and $\delta'$. When $\delta$ and $\delta'$ are fixed, the estimation error quickly increases when $\beta$ deviates from its optimal value and it saturates for large $\beta$ values (cf.~Fig.~\ref{fig_app:bound_viz}A and B). A better solution approximation naturally improves the quality of the meta-gradient estimate for $\beta$ held constant (cf.~Fig.~\ref{fig_app:bound_viz}C). However, the benefits saturate above some $\beta$-dependent value: investing extra compute in the approximation of the fixed point does not pay off if $\beta$ is not decreased accordingly.

\subsection{Proof of Theorem~\ref{thm:bound_ep_estimate}}

As mentioned above, Theorem~\ref{thm:bound_ep_estimate} can be proved by individually bounding the two kind of errors that compose the meta-gradient estimation error, that are the finite difference error and the solution approximation induced error.

The $B^\mathrm{learn}(\delta+\delta')/\beta + B^\mathrm{eval}\delta'$ part of the bound stems from the solution approximation error, and can be obtained by using the assumption that the partial derivatives of $\Li$ and $\Lo$ are Lipschitz continuous.

Bounding the finite difference error requires more work. We use Taylor's theorem to show that $\widehat{\nabla}_\theta^*-\nabla_\theta$ is equal to some integral remainder. It then remains to bound what is inside the integral remainder, which is the second order derivative $\mathrm{d}_\beta^2\partial_\theta\Lt(\phisb,\beta)$. This is done in the Lemmas presented in this section: Lemma~\ref{lem:bound_phisb} allows us to get uniform bounds, Lemmas~\ref{lem:norm_der_phisb} and \ref{lem:norm_der2_phisb} bound the first and second order derivatives of $\beta\mapsto\phisb$ and Lemma~\ref{lem:pderbtheta_to_phisb} bounds $\mathrm{d}_\beta^2\partial_\theta\Lt(\phisb,\beta)$ with the norm of the two derivatives we have just bounded. We present the proofs for those four lemmas in Section~\ref{subsec:proof_technical_lemmas}.

\begin{lemma}
    \label{lem:bound_phisb}
    Under Assumption~\ref{ass_app:regularity_inner}.$ii$, if $\theta$ lies in a compact set $\mathcal{D}$ the function $(\theta, \beta) \mapsto \phi^*_{\theta, \beta}$ is uniformly bounded.
\end{lemma}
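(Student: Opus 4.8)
The plan is to pin $\phisbt$ between the two reference points it interpolates, namely the learning minimizer $\phiszt$ and the evaluation minimizer $\argmin_\phi\Lo(\phi,\theta)$, and then bound those two uniformly over the compact set. Concretely: under Assumption~\ref{ass_app:regularity_inner}.$ii$ each of $\Li(\cdot,\theta)$ and $\Lo(\cdot,\theta)$ is $\mu$-strongly convex, hence has a unique minimizer; write $a_\theta := \phiszt$ and $c_\theta := \argmin_\phi\Lo(\phi,\theta)$. Since $\Li$ is $C^3$ and $\partial_\phi^2\Li(\cdot,\theta)\succeq\mu\,\Id$ is everywhere invertible, the implicit function theorem applied to the stationarity equation $\partial_\phi\Li(\phi,\theta)=0$ (uniqueness of the solution forcing the local branch to coincide with $a_\theta$) shows $\theta\mapsto a_\theta$ is continuous, and the same argument gives continuity of $\theta\mapsto c_\theta$. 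As $\mathcal{D}$ is compact, $R_0 := \sup_{\theta\in\mathcal{D}}\snorm{a_\theta}$ and $R_1 := \sup_{\theta\in\mathcal{D}}\snorm{c_\theta}$ are finite.

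Next I would use strong convexity of the augmented loss to keep $\phisbt$ close to $a_\theta$. Fix $\theta\in\mathcal{D}$, $\beta\ge 0$, and set $x:=\phisbt$. The map $\Lt(\cdot,\theta,\beta)=\Li(\cdot,\theta)+\beta\Lo(\cdot,\theta)$ is $(1+\beta)\mu$-strongly convex with minimizer $x$, so $\Lt(a_\theta,\theta,\beta)-\Lt(x,\theta,\beta)\ge\frac{(1+\beta)\mu}{2}\snorm{a_\theta-x}^2$. In the other direction, bounding $\Lt(x,\theta,\beta)$ below by $\Li(a_\theta,\theta)+\beta\Lo(c_\theta,\theta)$ (each term at its own minimizer) and then applying $L$-smoothness of $\Lo(\cdot,\theta)$ at $c_\theta$ gives
\begin{equation*}
    \Lt(a_\theta,\theta,\beta)-\Lt(x,\theta,\beta)\le\beta\bigl(\Lo(a_\theta,\theta)-\Lo(c_\theta,\theta)\bigr)\le\frac{\beta L}{2}\snorm{a_\theta-c_\theta}^2.
\end{equation*}
Combining the two inequalities and using $\beta/(1+\beta)\le 1$ yields $\snorm{a_\theta-x}\le\sqrt{L/\mu}\,\snorm{a_\theta-c_\theta}$, hence $\snorm{\phisbt}\le R_0+\sqrt{L/\mu}\,(R_0+R_1)=:R$, a bound depending only on $\mu,L$ and $\mathcal{D}$.

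The analytic content is light, so the main obstacle is not a hard estimate but making two steps rigorous: (i) upgrading the pointwise existence of the reference minimizers $a_\theta,c_\theta$ to a bound that is uniform over $\mathcal{D}$, which is precisely where compactness together with continuity of the $\argmin$ in $\theta$ (via the implicit function theorem, or equivalently Berge's maximum theorem, both of which rely on the invertibility of the Hessian supplied by strong convexity) enters; and (ii) observing that the interpolation weight $\beta/(1+\beta)$ never exceeds $1$, so the bound is in fact uniform over \emph{all} $\beta\ge 0$ and not merely over a bounded range of nudging strengths --- this uniformity in $\beta$ is what makes the lemma directly applicable inside the Taylor-remainder estimate used to prove Theorem~\ref{thm:bound_ep_estimate}.
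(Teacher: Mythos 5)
Your proposal is correct, but it takes a genuinely different route from the paper. The paper's proof is a compactification trick in the nudging parameter: it reparametrizes the augmented loss as $\Lt(\phi,\theta,\beta)=(1+\beta)\Lt'\left(\phi,\theta,\tfrac{\beta}{1+\beta}\right)$ with $\Lt'(\phi,\theta,\alpha)=(1-\alpha)\Li+\alpha\Lo$, notes that $\phisbt=\phi^{*\prime}_{\theta,\beta/(1+\beta)}$, and then invokes the implicit function theorem to get continuity of $(\theta,\alpha)\mapsto\phi^{*\prime}_{\theta,\alpha}$ on the compact set $\mathcal{D}\times[0,1]$, so uniform boundedness follows purely from continuity on a compact domain, with no explicit constant. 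You instead only need continuity (hence uniform boundedness over $\mathcal{D}$) of the two endpoint minimizers $a_\theta=\phiszt$ and $c_\theta=\argmin_\phi\Lo(\phi,\theta)$, and you control $\phisbt$ quantitatively by playing the $(1+\beta)\mu$-strong convexity of $\Lt(\cdot,\theta,\beta)$ at its minimizer against the upper bound $\Lt(a_\theta)-\Lt(\phisbt)\le\beta\left(\Lo(a_\theta)-\Lo(c_\theta)\right)\le\tfrac{\beta L}{2}\snorm{a_\theta-c_\theta}^2$, giving $\snorm{\phisbt-a_\theta}\le\sqrt{\tfrac{\beta}{1+\beta}}\sqrt{L/\mu}\,\snorm{a_\theta-c_\theta}$; your inequalities all check out (the lower bound on $\Lt(\phisbt)$ by evaluating each summand at its own minimizer, the smoothness bound at the stationary point $c_\theta$, and $\beta/(1+\beta)\le1$). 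What your approach buys is an explicit radius $R$ in terms of $\mu$, $L$ and the endpoint minimizers together with a decay rate in $\beta$ reminiscent of Remark~\ref{rmk:dist_phisb}, at the cost of a slightly longer argument and a separate continuity argument for $c_\theta$; the paper's proof is shorter and purely qualitative, handling all $\beta$ at once via the joint compactness of $\mathcal{D}\times[0,1]$, which is all that the downstream use in Lemma~\ref{lem:norm_der_phisb} requires.
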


\begin{lemma}
	\label{lem:norm_der_phisb}
    Under Assumption~\ref{ass_app:regularity_inner}.$ii$, there exists a $\theta$-dependent constant $R$ s.t., for every positive $\beta$,
    \begin{equation*}
        \norm{\der{\beta}{\phisb}} \leq \frac{LR}{(1+\beta)^2\mu}.
    \end{equation*}
    If we additionally assume that $\theta$ lies in a compact set, we can choose $R$ to be independent of $\theta$.
\end{lemma}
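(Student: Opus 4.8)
The plan is to differentiate the fixed-point condition with respect to $\beta$ and then control the resulting expression using the strong convexity and smoothness of Assumption~\ref{ass_app:regularity_inner}.$ii$. Since $\Li(\cdot,\theta)$ and $\Lo(\cdot,\theta)$ are each $\mu$-strongly convex, the augmented-loss Hessian satisfies $\spder{\phi}{^2\Lt}(\phisb,\theta,\beta)=\spder{\phi}{^2\Li}(\phisb,\theta)+\beta\,\spder{\phi}{^2\Lo}(\phisb,\theta)\succeq(1+\beta)\mu\,\Id$ for every $\beta>0$; in particular it is invertible, so Theorem~\ref{thm:eq_prop} guarantees that $\beta\mapsto\phisb$ is continuously differentiable. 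Differentiating $\spder{\phi}{\Lt}(\phisb,\theta,\beta)=0$ in $\beta$ and using $\spder{\beta}{}\spder{\phi}{\Lt}=\spder{\phi}{\Lo}$ (because $\spder{\beta}{\Lt}=\Lo$) yields
\begin{equation*}
    \der{\beta}{\phisb} = -\left(\spder{\phi}{^2\Lt}(\phisb,\theta,\beta)\right)^{-1}\spder{\phi}{\Lo}(\phisb,\theta),
\end{equation*}
and hence $\norm{\der{\beta}{\phisb}}\leq\frac{1}{(1+\beta)\mu}\,\norm{\spder{\phi}{\Lo}(\phisb,\theta)}$ by the spectral bound on the inverse Hessian.

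It then remains to show $\norm{\spder{\phi}{\Lo}(\phisb,\theta)}\leq\frac{LR}{1+\beta}$ up to a constant, which I plan to obtain from two complementary estimates. First, $L$-smoothness of $\Lo$ together with $\spder{\phi}{\Lo}(\phisi,\theta)=0$ (where $\phisi$ denotes the minimizer of $\Lo(\cdot,\theta)$) gives $\norm{\spder{\phi}{\Lo}(\phisb,\theta)}\leq L\,\norm{\phisb-\phisi}$. Second, rewriting the fixed-point condition as $\beta\,\spder{\phi}{\Lo}(\phisb,\theta)=-\spder{\phi}{\Li}(\phisb,\theta)$ and using $L$-smoothness of $\Li$ with $\spder{\phi}{\Li}(\phisz,\theta)=0$ gives $\norm{\spder{\phi}{\Lo}(\phisb,\theta)}=\frac{1}{\beta}\,\norm{\spder{\phi}{\Li}(\phisb,\theta)}\leq\frac{L}{\beta}\,\norm{\phisb-\phisz}$. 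Setting $M:=\sup_{\beta>0}\max\{\norm{\phisb-\phisi},\norm{\phisb-\phisz}\}$, the two estimates combine via $\min\{1,\beta^{-1}\}\leq\frac{2}{1+\beta}$ into $\norm{\spder{\phi}{\Lo}(\phisb,\theta)}\leq\frac{2LM}{1+\beta}$, so that $\norm{\der{\beta}{\phisb}}\leq\frac{2LM}{(1+\beta)^2\mu}$ and the lemma holds with $R=2M$. For the uniform-in-$\theta$ claim I would note that Lemma~\ref{lem:bound_phisb} bounds $\norm{\phisb}$ uniformly over $\mathcal{D}\times(0,\infty)$ while $\phisz$ and $\phisi$ depend continuously on $\theta$, so $M$, and hence $R$, can be chosen independent of $\theta\in\mathcal{D}$.

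The main obstacle is guaranteeing $M<\infty$, i.e.\ that the solution path $\beta\mapsto\phisb$ does not escape to infinity over all of $(0,\infty)$. For $\theta$ restricted to a compact set this is exactly Lemma~\ref{lem:bound_phisb}; for a single fixed $\theta$ it follows from continuity of $\beta\mapsto\phisb$ on $(0,\infty)$ together with the endpoint limits $\phisb\to\phisz$ as $\beta\to0^+$ and $\phisb\to\phisi$ as $\beta\to\infty$. The latter is quantified by strong convexity of $\Lo$ through $\mu\,\norm{\phisb-\phisi}\leq\norm{\spder{\phi}{\Lo}(\phisb,\theta)}\leq\frac{L}{\beta}\,\norm{\phisb-\phisz}$: for $\beta$ above a threshold this confines $\phisb$ within distance $\norm{\phisz-\phisi}$ of $\phisi$, while on the remaining compact range of $\beta$ boundedness follows from continuity and the limit at $\beta=0^+$.
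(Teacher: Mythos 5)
Your proposal is correct and follows essentially the same route as the paper's proof: implicit differentiation of the stationarity condition, the $1/((1+\beta)\mu)$ spectral bound on the inverse Hessian of the augmented loss, and the two complementary bounds $L\snorm{\phisb-\phisi}$ and $\frac{L}{\beta}\snorm{\phisb-\phisz}$ on $\snorm{\partial_\phi\Lo(\phisb)}$ combined through $\min\{1,\beta^{-1}\}\lesssim(1+\beta)^{-1}$, with Lemma~\ref{lem:bound_phisb} supplying uniformity over compact $\theta$. Your explicit justification that the path $\beta\mapsto\phisb$ stays bounded for a fixed $\theta$ only spells out a step the paper asserts by continuity and finite endpoint limits, so there is no substantive difference.
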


\begin{remark}
    \label{rmk:dist_phisb}
    A side product of the proof of Lemma~\ref{lem:norm_der_phisb} is a bound on the distance between the minimizer of $\Lt$ and the minimizers of $\Li$ and $\Lo$. We have  
    \begin{equation*}
        \snorm{\phisb-\phisi} \leq \frac{1}{1+\beta}
    \end{equation*}
    and
    \begin{equation*}
        \snorm{\phisb-\phisz} \leq \frac{\beta}{1+\beta}
    \end{equation*}
    up to some constant factors.
\end{remark}

\begin{lemma}
    \label{lem:norm_der2_phisb}
    Under Assumptions~\ref{ass_app:regularity_inner}.$ii$ and \ref{ass_app:regularity_inner}.$iii$,
    \begin{equation*}
        \norm{\der{\beta^2}{^2\phisb}} \leq \frac{\rho}{\mu}\norm{\der{\beta}{\phisb}}^2 + \frac{2L}{(1+\beta)\mu}\norm{\der{\beta}{\phisb}}\!.
    \end{equation*}
\end{lemma}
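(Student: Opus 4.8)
The plan is to derive a closed-form expression for $\der{\beta^2}{^2\phisb}$ by differentiating the stationarity condition $\partial_\phi\Lt(\phisb,\theta,\beta)=0$ twice in $\beta$ (keeping $\theta$ fixed), and then to bound each factor appearing in that expression using strong convexity and $L$-smoothness of $\Li,\Lo$ (Assumption~\ref{ass_app:regularity_inner}.$ii$) together with the $\rho$-Lipschitzness of their Hessians (Assumption~\ref{ass_app:regularity_inner}.$iii$). Twice-differentiability of $\beta\mapsto\phisb$ is guaranteed by the $C^3$ hypothesis on $\Li,\Lo$ and invertibility of the Hessian, via the implicit function theorem as in Theorem~\ref{thm:eq_prop}.

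First I would differentiate the fixed-point identity once in $\beta$; using $\partial_\beta\partial_\phi\Lt=\partial_\phi\Lo$ this gives, with $A(\beta):=\partial_\phi^2\Lt(\phisb,\theta,\beta)$ and $\dot\phi:=\der{\beta}{\phisb}$,
\begin{equation*}
    A(\beta)\,\dot\phi + \partial_\phi\Lo(\phisb,\theta) = 0 .
\end{equation*}
Differentiating once more, the chain rule yields $\frac{\mathrm d}{\mathrm d\beta}A(\beta)=\partial_\phi^3\Lt(\phisb,\theta,\beta)[\dot\phi]+\partial_\phi^2\Lo(\phisb,\theta)$, where $[\,\cdot\,]$ denotes contraction of a third-order tensor with a vector and the last term comes from $\partial_\beta\partial_\phi^2\Lt=\partial_\phi^2\Lo$; and $\frac{\mathrm d}{\mathrm d\beta}\partial_\phi\Lo(\phisb,\theta)=\partial_\phi^2\Lo(\phisb,\theta)\,\dot\phi$. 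Solving for the second derivative,
\begin{equation*}
    \der{\beta^2}{^2\phisb} = -A(\beta)^{-1}\Bigl(\bigl(\partial_\phi^3\Lt(\phisb,\theta,\beta)[\dot\phi]+\partial_\phi^2\Lo(\phisb,\theta)\bigr)\dot\phi + \partial_\phi^2\Lo(\phisb,\theta)\,\dot\phi\Bigr) .
\end{equation*}

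Next I would bound the three factors. Strong convexity of $\Li$ and $\Lo$ gives $A(\beta)=\partial_\phi^2\Li+\beta\,\partial_\phi^2\Lo\succeq(1+\beta)\mu\,\Id$, hence $\snorm{A(\beta)^{-1}}\le 1/((1+\beta)\mu)$; $L$-smoothness of $\Lo$ gives $\snorm{\partial_\phi^2\Lo}\le L$; and $\rho$-Lipschitzness of the Hessians of $\Li,\Lo$ means the third derivatives $\partial_\phi^3\Li,\partial_\phi^3\Lo$ have operator norm at most $\rho$ as trilinear forms, so $\snorm{\partial_\phi^3\Lt}\le(1+\beta)\rho$ and $\snorm{\partial_\phi^3\Lt(\phisb,\theta,\beta)[\dot\phi]}\le(1+\beta)\rho\snorm{\dot\phi}$. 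Substituting these into the displayed identity and using the triangle inequality,
\begin{equation*}
    \norm{\der{\beta^2}{^2\phisb}} \le \frac{1}{(1+\beta)\mu}\Bigl((1+\beta)\rho\,\snorm{\dot\phi}^2 + 2L\,\snorm{\dot\phi}\Bigr) = \frac{\rho}{\mu}\,\norm{\der{\beta}{\phisb}}^2 + \frac{2L}{(1+\beta)\mu}\,\norm{\der{\beta}{\phisb}} ,
\end{equation*}
which is the claimed bound.

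The computation is essentially mechanical once the second differentiation is organized correctly, so I do not expect a real obstacle; the points requiring a little care are (i) correctly accounting for all terms in $\frac{\mathrm d}{\mathrm d\beta}A(\beta)$ — in particular the explicit $\partial_\beta$-dependence of $\partial_\phi^2\Lt$, which contributes the second $\partial_\phi^2\Lo\,\dot\phi$ term and hence the factor $2L$ rather than $L$ — and (ii) the translation of ``$\rho$-Lipschitz Hessian'' into the uniform operator-norm bound $\rho$ on $\partial_\phi^3$ viewed as a trilinear form, so that contracting it twice against $\dot\phi$ costs exactly $(1+\beta)\rho\snorm{\dot\phi}^2$ (unambiguous since the form is symmetric). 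Unlike Lemmas~\ref{lem:bound_phisb} and~\ref{lem:norm_der_phisb}, no compactness of $\theta$ is needed here, since the bound is stated purely in terms of $\snorm{\der{\beta}{\phisb}}$.
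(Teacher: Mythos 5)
Your proposal is correct and is essentially the paper's proof: you differentiate the stationarity identity $A(\beta)\,\dot\phi+\partial_\phi\Lo=0$ directly, while the paper differentiates the equivalent product formula $\dot\phi=-A(\beta)^{-1}\partial_\phi\Lo$ via the derivative-of-inverse rule, but after rearrangement both yield the same expression $\ddot\phi=-A^{-1}\bigl((\partial_\phi^3\Lt[\dot\phi]+\partial_\phi^2\Lo)\dot\phi+\partial_\phi^2\Lo\,\dot\phi\bigr)$ and the same three bounds ($\snorm{A^{-1}}\le 1/((1+\beta)\mu)$, $\snorm{\partial_\phi^2\Lo}\le L$, and the $(1+\beta)\rho$ bound from the Lipschitz Hessians). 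No gap.
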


When Lemma~\ref{lem:norm_der2_phisb} is combined with Lemma \ref{lem:norm_der_phisb},
\begin{equation}
    \norm{\der{\beta^2}{^2\phisb}} \leq \frac{1}{(1+\beta)^{3}}.
\end{equation}
up to some constant factor.

\begin{lemma}
    \label{lem:pderbtheta_to_phisb}
    Under Assumptions~\ref{ass_app:regularity_inner}.$ii$, \ref{ass_app:regularity_inner}.$iii$ and \ref{ass_app:regularity_inner}.$iv$, there exists a constant $M$ such that
    \begin{equation*}
        \norm{\der{\beta^2}{^2}\pder{\theta}{\Lt}(\phisb,\beta)} \leq M \left ( \norm{\der{\beta}{\phisb}} + \left (1+\beta \right)\left( \norm{\der{\beta}{\phisb}}^2 + \norm{\der{\beta^2}{^2\phisb}}\right) \right ) \!.
    \end{equation*}
\end{lemma}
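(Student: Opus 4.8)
The plan is to differentiate $\beta \mapsto \pder{\theta}{\Lt}(\phisb,\beta)$ twice with the chain rule and then bound each resulting term using the uniform operator-norm estimates contained in Assumption~\ref{ass_app:regularity_inner}. Recall $\pder{\theta}{\Lt}(\phi,\theta,\beta) = \pder{\theta}{\Li}(\phi,\theta) + \beta\,\pder{\theta}{\Lo}(\phi,\theta)$, so the integrand depends on $\beta$ both explicitly and through the evaluation point $\phisb$, and the chain rule will produce an extra cross term that must be kept. Before differentiating, note that $\mu$-strong convexity (Assumption~\ref{ass_app:regularity_inner}.$ii$) makes the minimizer unique and gives $\spder{\phi}{^2\Lt}(\phi,\theta,\beta) \succeq (1+\beta)\mu\,\Id$ for every $\beta \geq 0$; hence $\beta \mapsto \phisb$ is globally defined and, together with the $C^3$-regularity of $\Li$ and $\Lo$ and the $\rho$-Lipschitzness of the Hessians (Assumption~\ref{ass_app:regularity_inner}.$iii$), it is of class $C^2$, with $\der{\beta}{\phisb} = -\left(\spder{\phi}{^2\Lt}\right)^{-1}\spder{\phi}{\Lo}$ differentiable once more. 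Thus all the manipulations below are legitimate, and the bounds on $\norm{\der{\beta}{\phisb}}$ and $\norm{\der{\beta^2}{^2\phisb}}$ from Lemmas~\ref{lem:norm_der_phisb} and \ref{lem:norm_der2_phisb} apply.

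Differentiating once gives $\der{\beta}{}\pder{\theta}{\Lt}(\phisb,\beta) = \left(\spder{\phi}{}\spder{\theta}{\Li}(\phisb) + \beta\,\spder{\phi}{}\spder{\theta}{\Lo}(\phisb)\right)\der{\beta}{\phisb} + \spder{\theta}{\Lo}(\phisb)$. Differentiating a second time and collecting terms, $\der{\beta^2}{^2}\pder{\theta}{\Lt}(\phisb,\beta)$ splits into three groups, all evaluated at $\phisb$: (i) the third-order mixed derivatives contracted twice against $\der{\beta}{\phisb}$, that is $\spder{\phi}{^2}\spder{\theta}{\Li}\left[\der{\beta}{\phisb},\der{\beta}{\phisb}\right] + \beta\,\spder{\phi}{^2}\spder{\theta}{\Lo}\left[\der{\beta}{\phisb},\der{\beta}{\phisb}\right]$; (ii) a first-order group $2\,\spder{\phi}{}\spder{\theta}{\Lo}\,\der{\beta}{\phisb}$, in which one copy arises from differentiating the explicit $\beta$ factor in $\spder{\phi}{}\spder{\theta}{\Lt}$ and one from differentiating $\spder{\theta}{\Lo}(\phisb)$ along the curve; and (iii) the acceleration group $\left(\spder{\phi}{}\spder{\theta}{\Li} + \beta\,\spder{\phi}{}\spder{\theta}{\Lo}\right)\der{\beta^2}{^2\phisb}$.

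It then remains to bound the three groups. By Assumption~\ref{ass_app:regularity_inner}.$iv$ the mixed third derivatives $\spder{\phi}{^2}\spder{\theta}{\Li}$ and $\spder{\phi}{^2}\spder{\theta}{\Lo}$ have operator norm at most $\sigma$, so group (i) is bounded by $(1+\beta)\sigma\norm{\der{\beta}{\phisb}}^2$; by Assumption~\ref{ass_app:regularity_inner}.$i$, $\snorm{\spder{\phi}{}\spder{\theta}{\Li}} \leq B^\mathrm{learn}$ and $\snorm{\spder{\phi}{}\spder{\theta}{\Lo}} \leq B^\mathrm{eval}$, so group (ii) is bounded by $2B^\mathrm{eval}\norm{\der{\beta}{\phisb}}$ and group (iii) by $(B^\mathrm{learn} + \beta B^\mathrm{eval})\norm{\der{\beta^2}{^2\phisb}} \leq (1+\beta)\max(B^\mathrm{learn},B^\mathrm{eval})\norm{\der{\beta^2}{^2\phisb}}$. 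Adding the three and setting $M := \max\!\left(\sigma,\, 2B^\mathrm{eval},\, B^\mathrm{learn}\right)$ yields exactly the claimed inequality. There is no real obstacle here: the computation is mechanical, and the only points requiring care are the bookkeeping of the chain rule through the $\beta$-dependent point and $\beta$-dependent integrand — not overlooking the cross term that makes the coefficient in group (ii) equal to $2$ rather than $1$ — and matching each mixed partial to the assumption that controls it ($iv$ for the triple derivatives, $i$ for the mixed second derivatives).
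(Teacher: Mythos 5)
Your proposal is correct and follows essentially the same route as the paper's proof: two applications of the chain rule yielding the same three-term decomposition (with the same factor of $2$ from the cross term), the same matching of each term to Assumptions $i$ and $iv$, and the same constant $M = \max(\sigma, 2B^{\mathrm{eval}}, B^{\mathrm{learn}})$. The only addition is your explicit justification of the $C^2$-regularity of $\beta \mapsto \phisb$, which the paper leaves implicit.
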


We can now prove Theorem~\ref{thm:bound_ep_estimate} using the four lemmas that we have just presented. Note that we omit the $\theta$-dependency whenever $\theta$ is fixed, for the sake of conciseness.
\begin{proof}[Proof of Theorem~\ref{thm:bound_ep_estimate}]
    We separate the sources of error within the meta-gradient estimation error using the triangle inequality:
    \begin{equation}
        \snorm{\widehat{\nabla}_\theta - \nabla_\theta} \leq \underbrace{\snorm{ \widehat{\nabla}_\theta - \widehat{\nabla}_\theta^*}}_{a)} + \underbrace{\snorm{ \widehat{\nabla}^*_\theta - \nabla_\theta}}_{b)},
    \end{equation}
    and bound the two terms separately:
    \begin{itemize}
        \item[a)] Recall that 
        \begin{equation}
            \widehat{\nabla}_\theta = \frac{1}{\beta}\left ( \pder{\theta}{\Lt}(\phihb, \beta)-\pder{\theta}{\Lt}(\phihz, 0)\right )
        \end{equation}
        and that a similar formula holds for $\widehat{\nabla}_\theta^*$ (evaluated at the true solutions instead of the approximations). It follows 
        \begin{equation}
            \snorm{ \widehat{\nabla}_\theta - \widehat{\nabla}_\theta^*} \leq \frac{1}{\beta} \left (\norm{\pder{\theta}{\Lt}(\phihb, \beta ) - \pder{\theta}{\Lt}\left(\phisb, \beta\right)}\right. + \left .\norm{\pder{\theta}{\Lt}(\phihz, 0) - \pder{\theta}{\Lt}(\phisz, 0)} \right)\!.
        \end{equation}
        Since $\phi\mapsto\partial_\theta\Lt(\phi,\beta)$ is a $(B^{\mathrm{learn}}+\beta B^{\mathrm{eval}})$-Lipschitz function as a sum of $\partial_\theta\Li$ and $\partial_\theta\Lo$, two Lipschitz continuous functions with constants $B^{\mathrm{learn}}$ and $B^{\mathrm{eval}}$,
        \begin{align}
            \snorm{ \widehat{\nabla}_\theta - \widehat{\nabla}_\theta^*} & \leq \frac{B^{\mathrm{learn}}+\beta B^{\mathrm{eval}}}{\beta}\snorm{\phihb-\phisb} + \frac{B^{\mathrm{learn}}}{\beta}\snorm{\phihz-\phisz}\\
            & \leq \frac{B^{\mathrm{learn}}+\beta B^{\mathrm{eval}}}{\beta}\delta' + \frac{B^{\mathrm{learn}}}{\beta}\delta.
        \end{align}
        \item[b)] Taylor's theorem applied to $\beta \mapsto \partial_\theta{\Lt}(\phisb, \beta)$ up to the first order of differentiation yields
        \begin{equation}
            \pder{\theta}{\Lt}\left(\phisb, \beta \right) = \pder{\theta}{\Lt}\left(\phisz, 0\right) + \beta  \der{\beta}{}\pder{\theta}{\Lt}\left(\phisz, 0\right) + \int_{0}^{\beta}(\beta-t) \der{\beta^2}{^2}\pder{\theta}{\Lt}\left(\phi_t^*, t\right)\mathrm{d}t.
        \end{equation}
        The equilibrium propagation theorem (Theorem \ref{thm:eq_prop}), which is applicable thanks to Assumption~\ref{ass_app:regularity_inner}.$ii$, gives
        \begin{equation}
            \nabla_\theta = \der{\beta}{}\pder{\theta}{\Lt}(\phisz, 0),
        \end{equation}
        hence 
        \begin{equation}
            \snorm{\widehat{\nabla}_\theta^* - \nabla_\theta} = \norm{\int_{0}^{\beta}(\beta-t) \der{\beta^2}{^2}\pder{\theta}{\Lt}\left(\phi_t^*, t\right)\mathrm{d}t}\!.
        \end{equation}
        Using the integral version of the Cauchy-Schwartz inequality, we have
        \begin{equation}
            \snorm{\widehat{\nabla}_\theta^* - \nabla_\theta} \leq \int_{0}^{\beta}(\beta-t)\norm{\der{\beta^2}{^2} \pder{\theta}{\Lt}(\phi_t^*, t)} \mathrm{d}t.
        \end{equation}
        We now use Lemma~\ref{lem:pderbtheta_to_phisb} combined with Lemmas~\ref{lem:norm_der_phisb} and \ref{lem:norm_der2_phisb} to bound $\mathrm{d}_\beta^2\partial_\theta\Lt(\phi^*_{t},t)$. We focus on the $\beta$ dependencies and omitting constant factors:
        \begin{align*}
                \norm{\der{\beta^2}{^2}\pder{\theta}{\Lt}(\phi^*_t,t)} &\leq \norm{\der{\beta}{\phi^*_t}} + (1+t)\left(\norm{\der{\beta}{\phi^*_t}}^2 + \norm{\der{\beta^2}{^2\phi^*_t}}\right) \\
                & \leq \frac{1}{(1+t)^2} + (1+t)\left(\frac{1}{(1+t)^3} + \frac{1}{(1+t)^4}\right)\\
                & \leq (1+t)^{-2}.
        \end{align*}
        It follows that
        \begin{equation}
            \begin{split}
                \snorm{\widehat{\nabla}_\theta^* - \nabla_\theta} & \leq \int_0^\beta \frac{(\beta-t)}{(1+t)^2}\mathrm{d}t\\
                &= (1+\beta)\int_0^\beta \frac{1}{(1+t)^2}\mathrm{d}t - \int_0^\beta \frac{1}{(1+t)}\mathrm{d}t\\
                & =(1+\beta)\frac{\beta}{1+\beta} - \ln(1+\beta)\\
                & \leq \beta - \frac{\beta}{1+\beta}\\
                & = \frac{\beta^2}{1+\beta}.
            \end{split}
        \end{equation}
        where the inequality comes from the well-known $\ln(x)\geq 1 - \frac{1}{x}$ inequality for positive $x$ (applied to $x=1+\beta$). There hence exists a constant $C$ such that
        \begin{equation}
            \snorm{\widehat{\nabla}_\theta^* - \nabla_\theta} \leq C\frac{\beta}{1+\beta}.
        \end{equation}
    \end{itemize}
    If $\theta$ lies in a compact set, the bound in Lemma~\ref{lem:norm_der_phisb} is uniform over $\theta$. This is the only constant factor that depends on $\theta$, so the bound is uniform.
\end{proof}

\subsection{Proof of technical lemmas}
\label{subsec:proof_technical_lemmas}

In this section, we prove the four technical lemmas that we need for Theorem~\ref{thm:bound_ep_estimate}.

\paragraph{Proof of Lemma~\ref{lem:bound_phisb}}
\begin{replemma}{lem:bound_phisb}
    Under Assumption~\ref{ass_app:regularity_inner}.$ii$, if $\theta$ lies in a compact set $\mathcal{D}$ the function $(\theta, \beta) \mapsto \phi^*_{\theta, \beta}$ is uniformly bounded.
\end{replemma}
\begin{proof}
    Let $\alpha \in [0, 1]$. Define
    \begin{equation}
        \Lt'(\phi, \theta, \alpha) := (1-\alpha) \Li(\phi, \theta) + \alpha\Lo(\phi, \theta).
    \end{equation}
    As $\Li$ and $\Lo$ are strongly-convex, there exists a unique minimizer $\phi^{*\prime}_{\theta,\alpha}$ of $\phi\mapsto\Lt'(\phi,\theta, \alpha)$. The implicit function theorem ensures that the function $(\theta, \alpha) \mapsto \phi^{*\prime}_{\theta,\alpha}$, defined on $\mathcal{D} \times [0, 1]$, is continuous. As $\mathcal{D} \times [0, 1]$ is a compact set, $\phi^{*\prime}_{\theta,\alpha}$ is then uniformly bounded. Now, remark that
    \begin{equation}
        \Lt(\phi, \theta, \beta) = (1+\beta)\Lt'\left(\phi, \theta, \frac{\beta}{1+\beta}\right)
    \end{equation}
    and thus $\phisbt = \phi^{*\prime}_{\theta, \beta/(1+\beta)}$. It follows that $\phisbt$ is uniformly bounded.
\end{proof}

\paragraph{Proof of Lemma~\ref{lem:norm_der_phisb}}
\begin{replemma}{lem:norm_der_phisb}
    Under Assumption~\ref{ass_app:regularity_inner}.$ii$, there exists a $\theta$-dependent constant $R$ s.t., for every positive $\beta$,
    \begin{equation*}
        \norm{\der{\beta}{\phisb}} \leq \frac{LR}{(1+\beta)^2\mu}.
    \end{equation*}
    If we additionally assume that $\theta$ lies in a compact set, we can choose $R$ to be independent of $\theta$.
\end{replemma}
\begin{proof}
    The function $\phi \mapsto \Lt(\phi,\beta)$ is $(1+\beta)\mu$-strongly convex so its Hessian $\partial_\phi^2\Lt$ is invertible and its inverse has a spectral norm upper bounded by $1/((1+\beta)\mu)$. The use of the implicit function theorem follows and gives
    \begin{equation}
        \begin{split}
             \norm{\sder{\beta}{\phisb}} & = \snorm{-\left( \partial_\phi^2\Lt(\phisb,\beta)\right)^{-1} \partial_\beta\partial_\phi\Lt(\phisb)}\\
             & = \snorm{-\left( \partial_\phi^2\Lt(\phisb,\beta)\right)^{-1} \partial_\phi\Lo(\phisb)}\\
             & \leq \frac{1}{(1+\beta)\mu}\snorm{\partial_\phi\Lo(\phisb)}.
         \end{split}
    \end{equation}
    It remains to bound the gradient of $\Lo$. Since $\beta \mapsto \phisb$ is continuous and has finite limits in 0 and $\infty$ (namely the minimizers of $\Li$ and $\Lo$), it evolves in a bounded set. There hence exists a positive constant $R$ such that, for all positive $\beta$,
    \begin{equation}
        \max \left ( \norm{\phisb - \phisz}, \norm{\phisb - \phisi} \right ) \leq \frac{R}{2}.
    \end{equation}
        If $\theta$ lies in a compact set, Lemma~\ref{lem:bound_phisb} guarantees that there exists such a constant that doesn't depend on the choice of $\theta$.
    We then bound the gradient of $\Lo$ using the smoothness properties of $\Li$ and $\Lo$, either directly
    \begin{equation}
        \snorm{\partial_\phi \Lo(\phisb)} \leq L\snorm{\phisb-\phisi} \leq \frac{LR}{2}
    \end{equation}
    or indirectly, using the fixed point condition $\partial_\phi\Lt(\phisb,\beta)=0$,
    \begin{equation}
        \snorm{\partial_\phi \Lo(\phisb)} =  \frac{1}{\beta}\snorm{-\partial_\phi \Li(\phisb)} \leq \frac{L\snorm{\phisb - \phisz}}{\beta}  \leq \frac{LR}{2\beta}.
    \end{equation}
    The required result is finally obtained by remarking
    \begin{equation}
        \snorm{\partial_\phi \Lo(\phisb)} \leq \min\left(1,\frac{1}{\beta}\right)\frac{LR}{2}\leq \frac{LR}{1+\beta}.
    \end{equation}
\end{proof}

\paragraph{Proof of Remark~\ref{rmk:dist_phisb}}
\label{subsubsec:proof_remark_dist_phisb}
We now prove Remark~\ref{rmk:dist_phisb}, which directly follows from the previous proof. Recall that we have just proved
\begin{equation}
    \snorm{\partial_\phi \Lo(\phisb)} \leq \frac{LR}{1+\beta}.
\end{equation}
With the strong convexity of $\Lo$, the gradient is also lower bounded
\begin{equation}
    \snorm{\partial_\phi \Lo(\phisb)} \geq \mu \snorm{\phisb-\phisi},
\end{equation}
meaning that
\begin{equation}
    \snorm{\phisb - \phisi} \leq \frac{LR}{\mu(1+\beta)}.
\end{equation}
Similarly, one can show that
\begin{equation}
    \snorm{\phisz-\phisb} \leq \frac{\beta}{1+\beta}
\end{equation}
up to some constant factor.
This can be proved with
\begin{equation}
    \snorm{\phisz-\phisb} \leq \frac{\snorm{\partial_\phi \Li(\phisb)}}{\mu} = \frac{\beta\snorm{\partial_\phi \Lo(\phisb)}}{\mu} \leq \frac{\beta LR}{(1+\beta)\mu}.
\end{equation}

\paragraph{Proof of Lemma~\ref{lem:norm_der2_phisb}}
\begin{replemma}{lem:norm_der2_phisb}
    Under Assumptions~\ref{ass_app:regularity_inner}.$ii$ and \ref{ass_app:regularity_inner}.$iii$,
    \begin{equation*}
        \norm{\der{\beta^2}{^2\phisb}} \leq \frac{\rho}{\mu}\norm{\der{\beta}{\phisb}}^2 + \frac{2L}{(1+\beta)\mu}\norm{\der{\beta}{\phisb}}\!.
    \end{equation*}
\end{replemma}
\begin{proof}
    The starting point of the proof is the implicit function theorem, that we differentiate with respect to $\beta$ as a product of functions
    \begin{equation}
        \begin{split}
            \der{\beta^2}{^2\phisb} &= \der{\beta}{} \left ( -\left( \pder{\phi^2}{^2\Lt}(\phisb,\beta)\right)^{-1} \pder{\phi}{\Lo}(\phisb) \right )\\
            &= -\underbrace{\left(\der{\beta}{} \pder{\phi^2}{^2\Lt}(\phisb,\beta) ^{-1}\right)\pder{\phi}{\Lo}(\phisb)}_{a)} - \underbrace{\pder{\phi^2}{^2\Lt}(\phisb,\beta)^{-1} \left(\der{\beta}{} \pder{\phi}{\Lo}(\phisb)\right)}_{b)}\!.
        \end{split}
    \end{equation}
    We now individually calculate and bound each term.
    \begin{itemize}
        \item[a)] The differentiation of the inverse of a matrix gives
        \begin{equation}
            a) = -\partial_\phi^2\Lt(\phisb,\beta)^{-1}\left(\sder{\beta}{}\partial_\phi^2\Lt(\phisb,\beta)\right) \partial_\phi^2\Lt(\phisb,\beta)^{-1}\partial_\phi\Lo(\phisb),
        \end{equation}
        which we can rewrite as
        \begin{equation}
            a) = \partial_\phi^2\Lt(\phisb,\beta)^{-1}\left(\sder{\beta}{}\partial_\phi^2\Lt(\phisb,\beta)\right)\sder{\beta}{\phisb}.
        \end{equation}
        The derivative term in the middle of the right hand side is equal to 
        \begin{equation}
            \begin{split}
                \sder{\beta}{}\partial_\phi^2\Lt(\phisb,\beta) & = \sder{\beta}{} \left [ \partial_\phi^2 \Li(\phisb) + \beta \partial_\phi^2\Lo(\phisb) \right ] \\
                &=\sder{\beta}{}\partial_\phi^2\Li(\phisb) + \beta\sder{\beta}{}\partial_\phi^2\Lo(\phisb) + \partial_\phi^2\Lo(\phisb).
            \end{split}
        \end{equation}
        Using the Lipschitz continuity of the Hessians, 
        \begin{equation}
            \norm{\sder{\beta}{}\partial_\phi^2\Li(\phisb) + \beta\sder{\beta}{}\partial_\phi^2\Lo(\phisb)} \leq (1+\beta)\rho\norm{\sder{\beta}{\phisb}}.
        \end{equation}
        We can upper bound the norm of the Hessian of $\Lo$ by $L$ as $\Lo$ is L-smooth. The last two equations hence give
        \begin{equation}
            \norm{\sder{\beta}{}\partial_\phi^2\Lt(\phisb,\beta)} \leq (1+\beta)\rho\norm{\sder{\beta}{\phisb}}+L.
        \end{equation}
        We finally have
        \begin{equation}
            \begin{split}
                \norm{a)} & \leq \frac{1}{\mu(1+\beta)}\left((1+\beta)\rho\norm{\sder{\beta}{\phisb}}+L\right) \norm{\sder{\beta}{\phisb}}\\
                & \leq \frac{\rho}{\mu}\norm{\sder{\beta}{\phisb}}^2 + \frac{L}{(1+\beta)\mu}\norm{\sder{\beta}{\phisb}}.
            \end{split}
        \end{equation}
        \item[b)] With the chain rule,
        \begin{equation}
            \sder{\beta}{}\partial_\phi \Lo(\phisb) = \partial^2_\phi\Lo(\phisb)\sder{\beta}{\phisb} 
        \end{equation}
        so
        \begin{equation}
            \begin{split}
                \norm{b)} & \leq \norm{{\partial_\phi^2\Lt(\phisb,\beta)^{-1}}} \norm{\partial^2_\phi\Lo(\phisb)} \norm{\sder{\beta}{\phisb}}\\
                & \leq \frac{L}{(1+\beta)\mu} \norm{\sder{\beta}{\phisb}}.
            \end{split}
        \end{equation}
    \end{itemize}
\end{proof}

\paragraph{Proof of Lemma~\ref{lem:pderbtheta_to_phisb}}
\begin{replemma}{lem:pderbtheta_to_phisb}
    Under Assumptions~\ref{ass_app:regularity_inner}.$ii$, \ref{ass_app:regularity_inner}.$iii$ and \ref{ass_app:regularity_inner}.$iv$, there exists a constant $M$ such that
    \begin{equation*}
        \norm{\der{\beta^2}{^2}\pder{\theta}{\Lt}(\phisb,\beta)} \leq M \left ( \norm{\der{\beta}{\phisb}} + \left (1+\beta \right)\left( \norm{\der{\beta}{\phisb}}^2 + \norm{\der{\beta^2}{^2\phisb}}\right) \right ) \!.
    \end{equation*}
\end{replemma}
\begin{proof}
    We want to bound the norm of $\mathrm{d}^2_\beta \partial_\theta\Lt(\phisb,\beta)$. The first order derivative can be calculated with the chain rule of differentiation
    \begin{equation}
        \sder{\beta}{}\spder{\theta}{\Lt}(\phisb,\beta) = \partial_\beta\partial_\theta\Lt(\phisb,\beta)+\partial_\phi\partial_\theta\Lt(\phisb,\beta)\sder{\beta}{\phisb}.
    \end{equation}
    We then once again differentiate this equation with respect to $\beta$. The $\partial_\beta \partial_\theta\Lt(\phisb,\beta)$ term has in fact, due to the nature of $\Lt$, no direct dependence on $\beta$ and is equal to $\partial_\theta \Lo(\phisb)$. Hence
    \begin{equation}
        \sder{\beta}{}\partial_\beta\partial_\theta\Lt(\phisb,\beta) = \partial_\phi\partial_\theta\Lo(\phisb)\sder{\beta}{\phisb}.
    \end{equation}
    Differentiating the other term yields
    \begin{multline}
        \sder{\beta}{}\left [ \partial_\phi\partial_\theta\Lt(\phisb,\beta)\sder{\beta}{\phisb} \right ] =  \left [ \partial_\beta\partial_\phi\partial_\theta\Lt(\phisb,\beta)+\partial_\phi^2\partial_\theta\Lt(\phisb,\beta) \otimes \sder{\beta}{\phisb} \right ] \sder{\beta}{\phisb} +\\ \partial_\phi\partial_\theta\Lt(\phisb,\beta)\sder{\beta^2}{\phisb}.
    \end{multline}
    Therefore,
    \begin{multline}
        \sder{\beta^2}{}\partial_\theta\Lt(\phisb,\beta) = 2\partial_\phi\partial_\theta\Lo(\phisb)\sder{\beta}{\phisb} + \partial^2_\phi\partial_\theta\Lt(\phisb,\beta) \otimes \sder{\beta}{\phisb} \otimes \sder{\beta}{\phisb} +\\ \partial_\phi\partial_\theta\Lt(\phisb,\beta)\sder{\beta^2}{\phisb}.
    \end{multline}
    We now individually bound each term:
    \begin{itemize}
        \item[--] due to Assumption~\ref{ass_app:regularity_inner}.$i$, $\phi\mapsto\partial_\theta\Lo(\phi)$ is $B^\mathrm{eval}$-Lipschitz continuous, so $\snorm{\partial_\phi\partial_\theta\Lo} \leq B^\mathrm{eval}$ and
        \begin{equation}
            \norm{2\partial_\phi\partial_\theta\Lo(\phisb)\sder{\beta}{\phisb}} \leq 2B^\mathrm{eval} \norm{\sder{\beta}{\phisb}}\!.
        \end{equation}
        \item[--] similarly to the previous point, 
        \begin{equation}
            \norm{\partial_\phi\partial_\theta\Lt(\phisb)\sder{\beta^2}{\phisb}} \leq (B^\mathrm{learn}+\beta B^\mathrm{eval}) \norm{\sder{\beta^2}{\phisb}}\!.
        \end{equation}
        \item[--] Assumption~\ref{ass_app:regularity_inner}.$iv$ ensures that $\phi\mapsto\partial_\phi\partial_\theta\Lt(\phi,\beta)$ is $(1+\beta)\sigma$-Lipschitz continous and 
        \begin{equation}
            \norm{\partial_\phi^2\partial_\theta \Lt(\phisb,\beta) \otimes \sder{\beta}{\phisb} \otimes \sder{\beta}{\phisb}} \leq (1+\beta)\sigma\norm{\sder{\beta}{\phisb}}^2\!.
        \end{equation}
    \end{itemize}
    Take $M := \max(2B^\mathrm{eval}, B^\mathrm{learn}, \sigma)$: we now have the desired result.
\end{proof}

\subsection{A corollary of Theorem~\ref{thm:bound_ep_estimate}}

Theorem~\ref{thm:bound_ep_estimate} highlights the importance of considering $\beta$ as a hyperparameter of the learning rule that needs to be adjusted to yield the best possible meta-gradient estimate. Corollary~\ref{cor_app:opt_bound} removes the dependence in $\beta$ and considers the best achievable bound under given fixed point approximation errors.
\begin{corollary}
    \label{cor_app:opt_bound}
    Under Assumption~\ref{ass_app:regularity_inner}, if we suppose that for every strictly positive $\beta$ we approximate the two fixed points with precision $\delta$ and $\delta'$ and if $(\delta+\delta')< C/B^\mathrm{learn}$, the best achievable bound in Theorem~\ref{thm:bound_ep_estimate} is smaller than
    \begin{equation*}
        B^\mathrm{eval}\delta'+2\sqrt{CB^\mathrm{learn}(\delta+\delta')}
    \end{equation*}
    and is attained for $\beta$ equal to
    \begin{equation*}
        \beta^*(\delta, \delta') = \frac{\sqrt{B^\mathrm{learn}(\delta+\delta')}}{\sqrt{C}-\sqrt{B^\mathrm{learn}(\delta+\delta')}}.
    \end{equation*}
\end{corollary}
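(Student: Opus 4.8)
The plan is to treat the bound $\mathcal{B}(\delta,\delta',\beta) = \frac{B^\mathrm{learn}(\delta+\delta')}{\beta} + B^\mathrm{eval}\delta' + C\frac{\beta}{1+\beta}$ furnished by Theorem~\ref{thm:bound_ep_estimate} as a function of $\beta \in (0,\infty)$ alone, with $\delta$ and $\delta'$ held fixed, and to minimize it. Writing $a := B^\mathrm{learn}(\delta+\delta')$ and discarding the $\beta$-independent term $B^\mathrm{eval}\delta'$, this amounts to minimizing $g(\beta) := a/\beta + C\beta/(1+\beta)$ over positive $\beta$. So this is a one-variable calculus problem and nothing more.

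First I would differentiate: $g'(\beta) = -a/\beta^2 + C/(1+\beta)^2$. The stationarity condition $g'(\beta)=0$ is equivalent to $C\beta^2 = a(1+\beta)^2$, i.e.\ to the quadratic $(C-a)\beta^2 - 2a\beta - a = 0$. Here the hypothesis $\delta+\delta' < C/B^\mathrm{learn}$ is precisely the statement $a < C$, which makes this an upward-opening parabola; its two roots have product $-a/(C-a) < 0$, hence exactly one of them is positive. Taking positive square roots in $C\beta^2 = a(1+\beta)^2$ then gives $\sqrt{C}\,\beta = \sqrt{a}\,(1+\beta)$, so the unique positive critical point is $\beta^*(\delta,\delta') = \sqrt{a}/(\sqrt{C}-\sqrt{a}) = \sqrt{B^\mathrm{learn}(\delta+\delta')}\big/\big(\sqrt{C}-\sqrt{B^\mathrm{learn}(\delta+\delta')}\big)$, which is positive exactly because $a<C$, matching the claimed formula.

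Next I would confirm that this unique critical point is the global minimizer of $g$ on $(0,\infty)$: $g$ is smooth there, $g(\beta)\to+\infty$ as $\beta\to 0^+$, and it has a single critical point, so that point is necessarily the minimum. Then I would evaluate $g(\beta^*)$: from $1+\beta^* = \sqrt{C}/(\sqrt{C}-\sqrt{a})$ one reads off $\beta^*/(1+\beta^*) = \sqrt{a}/\sqrt{C}$, hence $C\,\beta^*/(1+\beta^*) = \sqrt{aC}$, while $a/\beta^* = a(\sqrt{C}-\sqrt{a})/\sqrt{a} = \sqrt{aC}-a$. Adding these gives $g(\beta^*) = 2\sqrt{aC}-a$. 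Restoring the term $B^\mathrm{eval}\delta'$ yields $\mathcal{B}(\delta,\delta',\beta^*) = B^\mathrm{eval}\delta' + 2\sqrt{C B^\mathrm{learn}(\delta+\delta')} - B^\mathrm{learn}(\delta+\delta')$; since $B^\mathrm{learn}(\delta+\delta')\geq 0$, dropping that nonpositive term gives the stated upper bound $B^\mathrm{eval}\delta' + 2\sqrt{C B^\mathrm{learn}(\delta+\delta')}$.

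There is essentially no hard step here. The only mild care needed is the argument that the lone critical point is the global minimum — which is where the hypothesis $(\delta+\delta') < C/B^\mathrm{learn}$ genuinely enters, via the sign of the product of roots of $(C-a)\beta^2-2a\beta-a=0$ — together with the routine algebraic simplification of $g(\beta^*)$. Everything else is elementary.
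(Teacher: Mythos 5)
Your proposal is correct and follows essentially the same route as the paper's proof: differentiate $\mathcal{B}$ in $\beta$, identify the unique positive stationary point $\beta^* = \sqrt{B^\mathrm{learn}(\delta+\delta')}/\bigl(\sqrt{C}-\sqrt{B^\mathrm{learn}(\delta+\delta')}\bigr)$ using $(\delta+\delta')<C/B^\mathrm{learn}$, evaluate the bound there to get $B^\mathrm{eval}\delta' + 2\sqrt{CB^\mathrm{learn}(\delta+\delta')} - B^\mathrm{learn}(\delta+\delta')$, and drop the nonpositive term. Your explicit verification (via the sign change of the quadratic numerator) that the critical point is the global minimizer is a small addition the paper leaves implicit, but it does not change the argument.
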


\begin{figure}[htbp]
    \centering
    \includegraphics[width=4.25in]{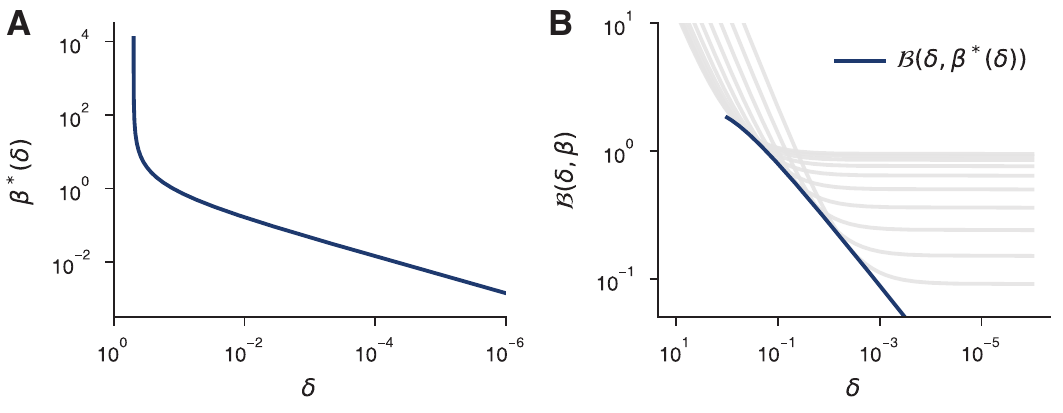}
    \caption{Visualization of Corollary \ref{cor_app:opt_bound}. (A) $\beta$ value that minimizes the bound, as a function of $\delta=\delta'$. (B) Best achievable bound as a function of $\delta=\delta'$ in blue (more precisely the one before the last upper bound in the proof). The grey lines are the bounds from Theorem~\ref{thm:bound_ep_estimate} we laid out on Fig.~\ref{fig_app:bound_viz}C.} 
    \label{fig_app:viz_cor_opt_bound}
\end{figure}

The most limiting part of the bound depends on the sum $\delta + \delta'$ and not on the individual quantities, suggesting that the two errors should be of the same magnitude to avoid unnecessary computations.

\begin{proof}
    The $\beta$ derivative of the bound $\mathcal{B}$ obtained in Theorem~\ref{thm:bound_ep_estimate} is
    \begin{equation}
        \pder{\beta}{\mathcal{B}}(\delta, \delta', \beta)=-\frac{B^\mathrm{learn}(\delta+\delta')}{\beta^{2}} + \frac{C}{(1+\beta)^2}
    \end{equation}
    and vanishes for $\beta$ verifying
    \begin{equation}
        \beta\left(\sqrt{C}-\sqrt{B^\mathrm{learn}(\delta+\delta')}\right) = \sqrt{B^\mathrm{learn}(\delta+\delta')}.
    \end{equation}
    As $(\delta+\delta')< C/B^\mathrm{learn}$, the previous criterion is met when $\beta$ is equal to the positive
    \begin{equation}
        \beta^* := \frac{\sqrt{B^\mathrm{learn}(\delta+\delta')}}{\sqrt{C}-\sqrt{B^\mathrm{learn}(\delta+\delta')}}.
    \end{equation}
    The optimal bound is then
    \begin{align*}
        \mathcal{B}(\delta, \delta', \beta^*) &= B^\mathrm{eval}\delta'+ \sqrt{B^\mathrm{learn}(\delta+\delta')}\left ( \sqrt{C} - \sqrt{B^\mathrm{learn}(\delta+\delta')} \right)+ \sqrt{CB^\mathrm{learn}(\delta+\delta')}\\
        & \leq B^\mathrm{eval} \delta' + 2\sqrt{CB^\mathrm{learn}(\delta + \delta')}.
    \end{align*}
\end{proof}

\subsection{Verification of the theoretical results on an analytical problem}

We investigate a quadratic approximation of the complex synapse model, in which everything can be calculated in closed form and where the assumptions needed for the theory hold. Define $\Li$ and $\Lo$ as follows\footnote{In our experiments, we take the dimension of the parameter space $N$ to be equal to 50. The Hessian is taken to be $\mathrm{diag}(1, ..., 1/N)$. $\omega$ is randomly generated according to $\omega \sim \mathcal{N}(0,\sigma_\omega)$ with $\sigma_\omega = 2$. $\phi^l$ and $\phi^e$ are drawn around $\phi^{\tau}\sim\mathcal{N}(0, \sigma_\tau)$ (with $\sigma_\tau=1$).}:
\begin{equation*}
    \begin{split}
        \Li(\phi,\omega) &= \frac{1}{2}(\phi-\phi^l)^\top H (\phi-\phi^{l}) + \frac{\lambda}{2}\snorm{\phi-\omega}^2\\
        \Lo(\phi) &= \frac{1}{2}(\phi-\phi^{e})^\top H (\phi-\phi^{e})
    \end{split}
\end{equation*}
where $\lambda$ is a scalar that controls the strength of the regularization that we consider fixed, $\phi^l$ and $\phi^e$ two vectors and $H$ a positive definite diagonal matrix. The rationale behind this approximation is the following: the data-driven learning and evaluation losses share the same curvature but have different minimizers, respectively $\phi^l$ and $\phi^e$. The matrix $H$ then models the Hessian and we consider it diagonal for simplicity. Thanks to the quadratic approximation, many quantities involved in our contrastive meta-learning rule can be calculated in closed form.

\paragraph{Calculation of the finite difference error.}

A formula for the minimizer of $\Lt=\Li+\beta\Lo$ can be derived analytically. The derivative of $\Lt$ vanishes if and only if
\begin{equation*}
    \left((1+\beta)H +  \lambda\Id \right)\phi - H\phi^l - \beta H\phi^e - \lambda\omega = 0,
\end{equation*}
hence
\begin{equation*}
    \phisbt =  \left((1+\beta)\Id +  \lambda H^{-1} \right)^{-1} \left( \phi^l + \beta \phi^e + \lambda H^{-1}\omega\right)\!.
\end{equation*}
$\lambda H^{-1}$ is an interesting quantity in this example. It acts as the effective per-coordinate regularization strength: regularization will be stronger on flat directions. 

The meta-gradient calculation follows. As
\begin{equation*}
    \partial_\omega \Lt(\phi, \omega, \beta) = -\lambda(\phi-\omega),
\end{equation*}
the use of the equilibrium propagation theorem (Theorem~\ref{thm:eq_prop}) gives
\begin{equation*}
    \begin{split}
        \nabla_\omega &= \left . \der{\beta}{}\pder{\omega}{\Lt}(\phisbt, \omega, \beta) \right |_{\beta=0}\\
        &= \pder{\phi\partial\omega}{^2\Li}(\phiszt, \omega)\evalat{\der{\beta}{\phisbt}}{\beta=0} + 0\\
        &= -\lambda\evalat{\der{\beta}{\phisbt}}{\beta=0}\!.
    \end{split}
\end{equation*}
It now remains to calculate the derivative of $\phisbt$ with respect to $\beta$ using the formula of $\phisbt$:
\begin{equation*}
    \begin{split}
        \der{\beta}{\phisbt} = &  \left((1+\beta)\Id+\lambda H^{-1}\right)^{-1}\phi^e \\
        & -\left((1+\beta)\Id+\lambda H^{-1}\right)^{-2}(\phi^l + \beta \phi^e + \lambda H^{-1}\omega)\\
        = & \left((1+\beta)\Id+\lambda H^{-1}\right)^{-2}\left( (1+\beta)\phi^e+\lambda H^{-1}\phi^e -\phi^l -\beta\phi^e-\lambda H^{-1}\omega \right)\\
        = & \left((1+\beta)\Id+\lambda H^{-1}\right)^{-2} \left ( (\phi^e-\phi^l) + \lambda H^{-1} ( \phi^e - \omega ) \right )
    \end{split}
\end{equation*}
Define $\psi := (\phi^e-\phi^l) + \lambda H^{-1} ( \phi^e - \omega )$; the meta-gradient finally is
\begin{equation*}
    \nabla_\omega = -\lambda(\Id+\lambda H^{-1})^{-2}\psi.
\end{equation*}

We can now calculate the finite difference error. Recall the equilibrium propagation estimate at fixed points
\begin{equation*}
    \widehat{\nabla}_\omega^* = \frac{1}{\beta}\left(\pder{\omega}{\Lt}(\phisbt,\omega, \beta) - \pder{\omega}{\Lt}(\phiszt,\omega, 0) \right)\!.
\end{equation*}
In this formulation, it is equal to
\begin{align*}
    \widehat{\nabla}_\omega^* &= -\frac{\lambda}{\beta}(\phisbt-\phiszt)\\
    &= -\lambda\left((\Id +\lambda H^{-1})((1+\beta)\Id+\lambda H^{-1})\right)^{-1}\psi\\
    &= (\Id + \lambda H^{-1})\left((1+\beta)\Id + \lambda H^{-1})\right)^{-1} \nabla_\omega.
\end{align*}

The finite difference can now be lower and upper bounded. First,
\begin{equation*}
    \nabla_\omega - \widehat{\nabla}_\omega^* = \beta\left((1+\beta)\Id+\lambda H^{-1}\right)^{-1}\nabla_\omega.
\end{equation*}
Introduce $\mu$ the smallest eigenvalue of $H$ and $L$ its largest one. We then have
\begin{equation}
    \frac{\mu\beta}{(1+\beta)\mu+\lambda} \norm{\nabla_\omega}\leq \snorm{\nabla_\omega - \widehat{\nabla}_\omega^*} \leq \frac{L\beta}{(1+\beta)L+\lambda}\snorm{\nabla_\omega}.
\end{equation}
This shows that the finite difference error part of Theorem \ref{thm:bound_ep_estimate} is tight and, in this case, accurately describes the behavior of the finite difference error as a function of $\beta$.

\paragraph{Empirical results.}

\begin{figure}[htbp]
    \centering
    \includegraphics[width=5.5in]{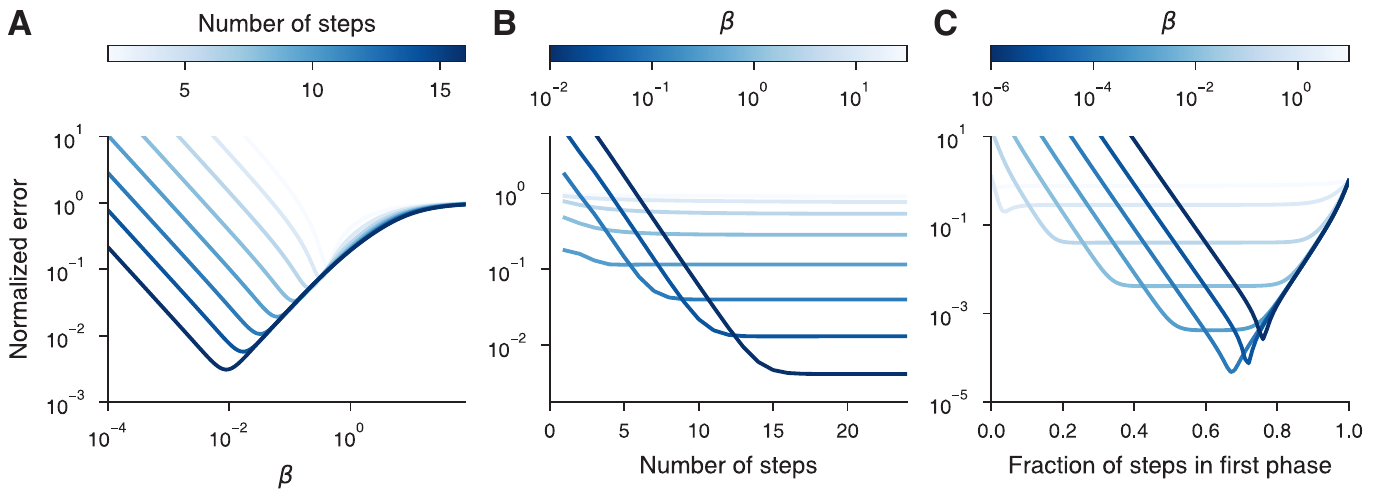}
    \caption{Empirical verification of the theoretical results on an analytical quadratic approximation of the synaptic model. We plot the normalized error between the meta-gradient estimate $\widehat{\nabla}_\theta$ and the true one $\nabla_\theta$, as a function of $\beta$ (A), of the number of steps in the two phases (which is a proxy for $-\log\delta$ and $-\log\delta'$ used in the theory) (B), and as a function of the allocation of the computational resources between the two phases, the total number of steps being fixed to 100 (C).}
    \label{fig_app:viz_quadratic_boundbeta}
\end{figure}

The solution approximation induced error part of the bound cannot be treated analytically as it depends on $\delta$ and $\delta'$, which are in essence empirical quantities. We cannot directly control them either. Instead, we use the number of gradient descent steps to minimize $\Lt$ as a proxy, that is closely related to $-\log \delta$ when gradient descent has a linear convergence rate. We choose the number of steps to be the same in the two phases, for the sake of simplicity, even though it may not be optimal. 
We plot the evolution of the normalized error
\begin{equation*}
    \frac{\snorm{\nabla_\omega- \widehat{\nabla}_\omega}}{\snorm{\nabla_\omega}}
\end{equation*}
between the meta-gradient and the contrastive estimate \eqref{eqn_app:contrastive_update} in Fig.~\ref{fig_app:viz_quadratic_boundbeta}. The qualitative behavior of this error, as a function of $\beta$ (Fig.~\ref{fig_app:viz_quadratic_boundbeta}A) and of number of steps (Fig.~\ref{fig_app:viz_quadratic_boundbeta}B), is accurately captured by Theorem~\ref{thm:bound_ep_estimate} (compare with Fig.~\ref{fig_app:bound_viz}A and C).

We finish the study of this quadratic model by probing the $(\delta, \delta')$ space in a different way, by fixing the total number of steps and then modifying the allocation across the two phases (Fig.~\ref{fig_app:viz_quadratic_boundbeta}C). The best achievable error, as a function of $\beta$, decreases before some $\beta^*$ value and then increases, following the predictions of Theorem~\ref{thm:bound_ep_estimate}: too small $\beta$ values turn out to hurt performance when the solutions cannot be approximated arbitrarily well. Interestingly, the error plateaus for large $\beta$ values and the size of the plateau decreases with $\beta$ until reaching a critical value where it disappears. A conservative choice in practice is therefore to overestimate $\beta$, as it reduces the meta-gradient estimation sensitivity to a sub-optimal allocation, with only a minor degradation in the best achievable quality.

\subsection{Verification the theoretical results on a simple hyperparameter optimization task}

We now move to a more complicated setting that is closer to problems of practical interest, and in which we are not guaranteed that the assumptions of the theory hold. Still, it is simple enough such that we can calculate the exact value of the meta-gradient $\nabla_\theta$ using the analytical formula \eqref{eqn_app:implicit_gradient}.  This problem is a single-task regularization-strength learning problem \citesupp{mackay_practical_1992,bengio_gradient-based_2000,foo_efficient_2007,pedregosa_hyperparameter_2016,lorraine_optimizing_2020} on the Boston housing dataset \citesupp{harrison_jr_hedonic_1978} (70\% learning and 30\% evaluation split). We study a nonlinear neural network model $f_\phi$ with a small hidden layer (20 neurons, hyperbolic tangent transfer function). The bilevel optimization problem we are solving here is the one we consider in Section~\ref{sect:hyperparam_opt}, that is:
\begin{equation}
    \begin{split}
        &\min_\lambda\, \frac{1}{|D^\mathrm{eval}|} \, \sum_{(x,y) \, \in \, D^\mathrm{eval}} l(f_{\phi^*_\lambda}(x), y)\\
        &\mathrm{s.t.} ~ \phi_\lambda^* \in \argmin_\phi \, \frac{1}{|D^\mathrm{learn}|} \, \sum_{(x,y) \,\in\, D^\mathrm{learn}} l(f_\phi(x),y) + \frac{1}{2}\sum_{i=1}^{|\phi|} \lambda_i \phi_i^2.
    \end{split}
\end{equation}

\paragraph{Meta-gradient estimation error.}

We plot the normalized error between the meta-gradient estimate and its true value on Fig~\ref{fig_app:theory_val_boston}. The qualitative behavior closely matches the one we obtained for the quadratic analytical model in the last section, as well as the ones predicted by our theory.

\begin{figure}[htbp]
    \centering
    \includegraphics[width=5.5in]{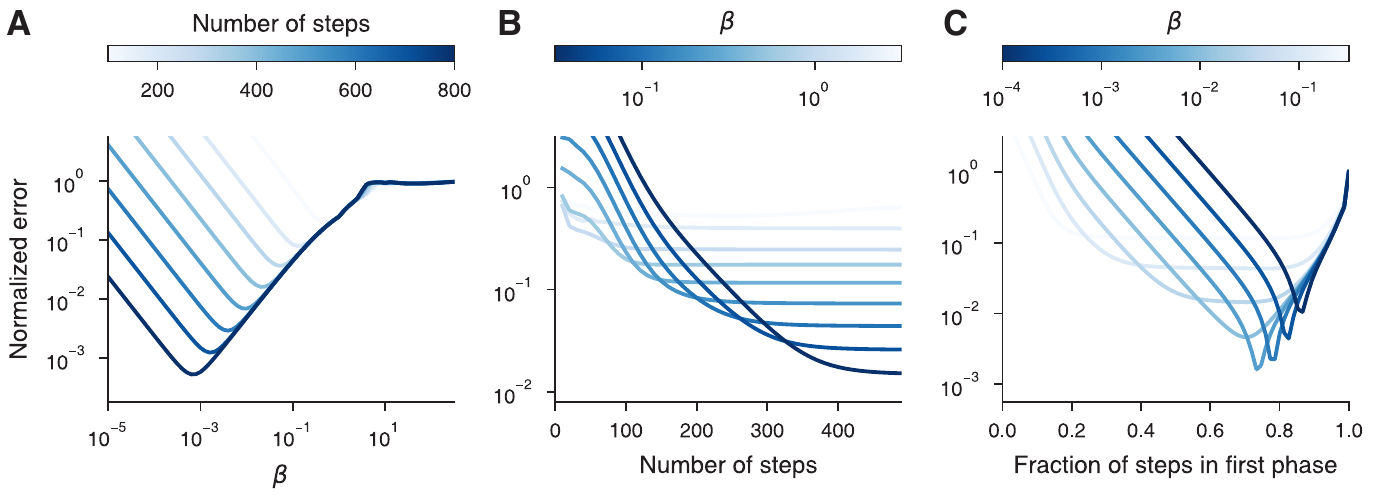}
    \caption{Empirical verification of the theoretical results on a regularization-strength learning problem on the Boston dataset. We plot the normalized error between the meta-gradient estimate $\widehat{\nabla}_\lambda$ and the true one $\nabla_\lambda$, as a function of $\beta$ (A), of the number of steps in the two phases (which is a proxy for $-\log\delta$ and $-\log\delta'$ used in the theory) (B), and as a function of the allocation of the computational resources between the two phases, the total number of steps being fixed to 750 (C).}
    \label{fig_app:theory_val_boston}
\end{figure}

\begin{figure}[htbp]
    \centering
    \includegraphics[width=5.5in]{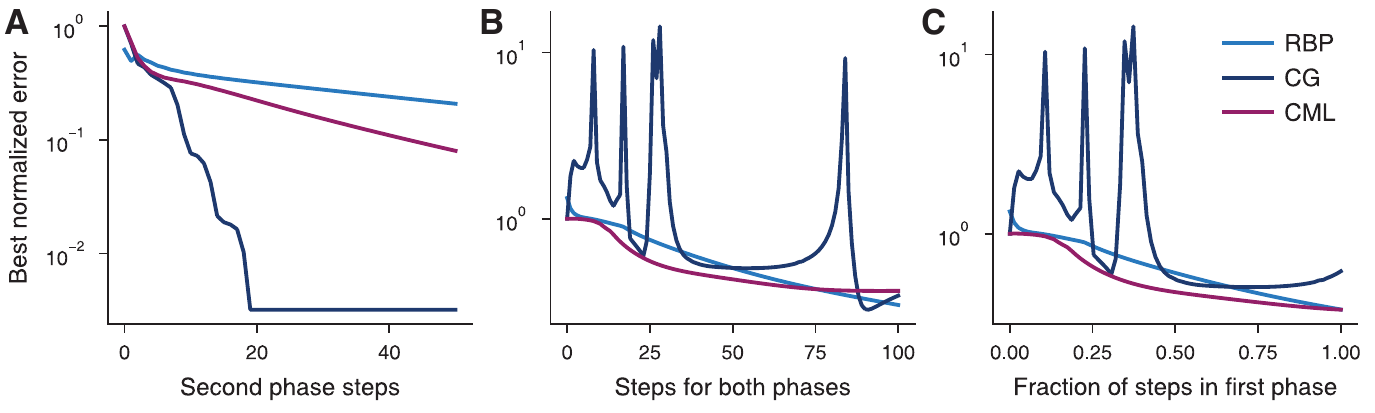}
    \caption{Comparison of the meta-gradient estimation errors provided our contrastive meta-learning rule (CML), recurrent backpropagation (RBP) and conjugate gradients (CG), on a regularization-strength learning problem on the Boston dataset. The hyperparameters for each value of the $x$ axis, such that the normalized error is minimized. (A) Error as a function of the number of steps in the second phase, the first phase being perfectly solved. (B) Error as a function of the number of steps performed in the two phase, which is fixed. (C) Error as a function of the fraction of steps in the first phase, the total number of steps for the two phases being fixed to 75.}
    \label{fig_app:comparison_boston}
\end{figure}

\begin{figure}[htbp]
    \centering
    \includegraphics[width=4.25in]{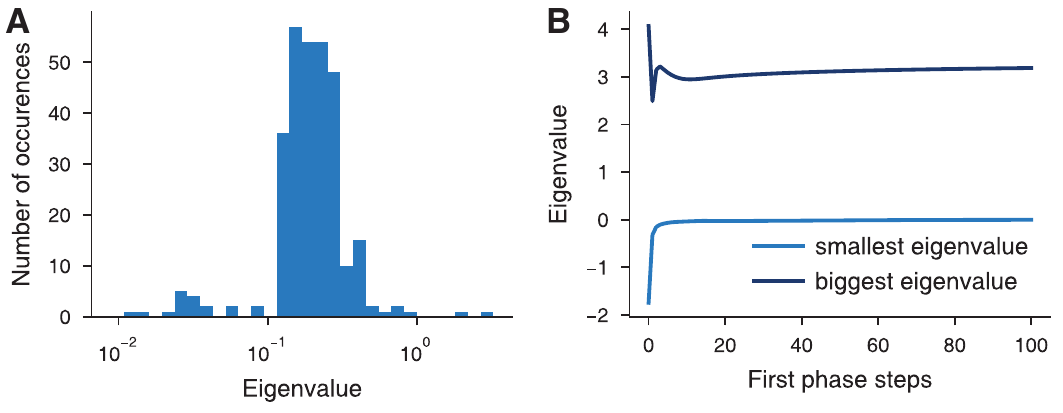}
    \caption{Eigenvalues of the Hessian of the learning loss $\partial_\phi^2 \Li(\hat{\phi}_\lambda, \lambda)$ on the regularization-strength learning problem on the Boston dataset. (A) Spectrum of the Hessian when the first phase is perfectly solved, i.e., $\hat{\phi}_\lambda = \phi^*_\lambda$. (B) Smallest and biggest eigenvalue of the Hessian, as a function of the number of steps in the first phase. The higher the number of steps, the close $\hat{\phi}_\lambda$ is to $\phi^*_\lambda$.}
    \label{fig_app:comparison_boston_eigvals}
\end{figure}

\paragraph{Comparison with other implicit gradient methods.}
\label{sec_app:boston_comparison}

We also use this problem to directly compare the meta-gradient approximation error made by our contrastive meta-learning rule (CML) to other implicit gradient methods, namely recurrent backpropagation (RBP) and conjugate gradient (CG). 
To make the comparison fair, we pick the hyperparameters that yield the smallest error for each method ($\beta$ and the parameters of the optimizer minimizing the second phase for CML, a scaling parameter for RBP, none for CG). Fig~\ref{fig_app:comparison_boston} characterizes the meta-gradient estimation errors by the different methods.

We first perfectly solve the first phase so that $\hat{\phi}_\lambda = \phi^*_\lambda$ and compare how efficient the second phase of those algorithms is. The Hessian of the learning loss $\partial_\phi^2 \Li$ is positive definite as shown in Fig.~\ref{fig_app:comparison_boston_eigvals}A. The conjugate gradient method is therefore much more efficient than the other methods as its assumptions are met, and quickly reaches the numerical accuracy limit. Our rule compares favorably to recurrent backpropagation, even though the theoretical bound is weaker ($\sqrt{\delta'}$ for our rule compared to $\delta'$ for implicit methods \citesupp{pedregosa_hyperparameter_2016}). A possible explanation comes from the fact that we are using Nesterov accelerated gradient descent for the second phase of our contrastive update, whereas the fixed point iteration of RBP is a form of gradient descent.

We repeat our analysis in the more realistic setting in which $\hat{\phi}_\lambda$ is not equal to $\phi^*_\lambda$. We use the same number of steps in the two phases (and the same estimate for $\hat{\phi}_\lambda$ and find that recurrent backpropagation and our contrastive meta-learning rule improve their estimate of the meta-gradient as the number of steps for both phases increases, cf. Fig.~\ref{fig_app:comparison_boston}.B. In contrast, conjugate gradient is unstable when the number of steps is low. In Fig.~\ref{fig_app:comparison_boston_eigvals}B, we check whether the needed assumptions are satisfied by plotting the smallest eigenvalue of the Hessian of the learning loss $\partial_\phi^2 \Li(\hat{\phi}_\lambda, \lambda)$, as a function of the number of steps. We find that this eigenvalue is negative on the range of the number of steps we consider, the Hessian is therefore not positive definite so the conjugate gradient method cannot approximate $\mu$ well. We obtain the same qualitative behavior when we fix the total number of steps, and vary the faction of steps in the first phase, cf. Fig.~\ref{fig_app:comparison_boston}C.

\newpage 
\section{Experimental details}
\label{sec_app:experimental_details}

\subsection{Supervised meta-optimization}
\label{sec_app:supervised_metaopt}

\begin{wraptable}[11]{r}{0.5\textwidth}
\vspace{-0.6cm}
\caption{Meta-learning a per-synapse regularization strength meta-parameter (cf.~Section~\ref{sect:synaptic-model}) on MNIST. Average accuracies (acc.) $\pm$ s.e.m.~over 10 seeds.}
\label{tab:mnist}
\centering
\vspace{0.15cm}
\begin{tabular}{lll}
    \toprule
    Method  & Validation acc. (\%)  & Test acc. (\%)
    \\
    \midrule
    T1-T2 & 98.70$^{\pm 00.08}$ & 97.63$^{\pm 00.03}$\\
    CG & 97.02$^{\pm 00.28}$ & 96.96$^{\pm 00.15}$\\
    RBP & 99.53$^{\pm 00.01}$ & 97.31$^{\pm 00.02}$\\
    CML & 99.45$^{\pm 00.16}$ & 97.92$^{\pm 00.11}$\\
    \bottomrule
\end{tabular}
\end{wraptable}

\paragraph{Task details.}
For the supervised meta-optimization experiments we meta-learn parameter-wise l2-regularization strengths ($\omega=0$) on the CIFAR-10 image classification task \citesupp{krizhevsky_learning_2009} starting each learning phase from a fixed neural network initialization. The dataset comprises 60000 32x32 RGB images divided into 10 classes, with 6000 images per class. We split the 50000 training images randomly in half to obtain a training set and a validation set for meta-learning and use the remaining 10000 test images for testing. In Tab.~\ref{tab:mnist} we report additional results on the simpler MNIST image classification task \citesupp{lecun_mnist_1998} for which we use the same data splitting strategy.

\paragraph{Additional results.}
We perform an additional experiment investigating how the number of lower-level parameter updates affects the meta-learning performance of our method and comparison methods.
We consider a simplified data regime for this experiment, using a random subset of 1000 examples of CIFAR-10 split into 50 samples for the learning loss and 950 samples for the evaluation loss which allows us to fit all samples into a single batch during learning and meta-learning.
Results shown in Fig.~\ref{fig:cifar-sm} demonstrate that our contrastive meta-learning rule is able to fit the meta-parameters to the validation set across different number of lower-level parameter updates while competing methods require more updates to obtain similar performance. We found the conjugate gradient method (CG) to be unstable in this setting. To obtain these results, we tuned the hyperparameters for each method for each number of lower-level parameter updates.

\begin{wrapfigure}[20]{r}{0.5\textwidth}
    \vspace{-0.6cm}
    \centering
    \includegraphics{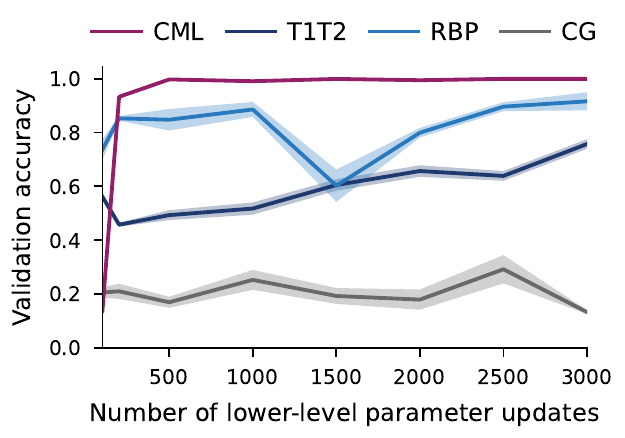}
    \caption{Dependence of the final validation accuracy on the number of lower-level parameter updates obtained after meta-optimizing per-synapse regularization strength meta-parameters on a subset of CIFAR-10. A random subset of 1000 examples from CIFAR-10 are split into 50 samples for the learning loss and 950 samples for the evaluation loss. Mean over 10 random seeds with error bars indicating $\pm1$ s.e.m.}
    \label{fig:cifar-sm}
\end{wrapfigure}

\paragraph{Architecture details.}
For CIFAR-10 experiments we use a modified version of the classic LeNet-5 model \citesupp{lecun_gradient-based_1998} where we insert batch normalization layers \citesupp{ioffe_batch_2015} before each nonlinearity and replace the hyperbolic tangent nonlinearities with rectified linear units.
For MNIST experiments we use a feedforward neural network with $5$ hidden layers of size $256$ and hyperbolic tangent nonlinearity.


\paragraph{Hyperparameters.}
We perform a comprehensive random hyperparameter search for each method with the search space for CIFAR-10 experiments specified in Tab.~\ref{tab:hps-cifar} and the search space for MNIST experiments specified in Tab.~\ref{tab:hps-mnist}.

In Fig.~\ref{fig:theory-vs-CIFAR}B, we furthermore investigate the interaction of $\beta$ and the number of first phase steps on the validation loss, keeping all other hyperparameters fixed. We compare it to the corresponding theoretical prediction visualized in Fig.~\ref{fig_app:bound_viz}B, for the case where $\Lo$ is independent of $\theta$ and $\delta'$ is fixed.

\newpage
\subsection{Few-shot image classification}
\label{sec_app:fewshot}
\paragraph{Task details.}
We follow the standard experimental setup  \citesupp{santoro_meta-learning_2016, vinyals_matching_2016, finn_model-agnostic_2017, rajeswaran_meta-learning_2019} for our Omniglot \citesupp{lake_one_2011} and miniImageNet \citesupp{ravi_optimization_2016} experiments.
\paragraph{Additional results.} The results of related work reported in the main text (Tabs.~\ref{tab:miniimagenet}~and~\ref{tab:omniglot}) are taken from the original papers, except for Omniglot first-order MAML which is reported in ref.~\citesupp{nichol_first-order_2018}. In Tab.~\ref{tab:omniglot-sm} we provide results for additional 5-way Omniglot variants that are easier than the 20-way ones studied in the main text.
\paragraph{Architecture details.}
For Omniglot, we use max-pooling instead of stride in the convolutional layers, as we found the latter led to optimization instabilities, as previously reported \citesupp{antoniou_how_2019}. We evaluate the statistics of batch normalization units \citesupp{ioffe_batch_2015} on the test set as in ref.~\citesupp{finn_model-agnostic_2017}, which yields a transductive classifier. More complex architectures whereby a second modulatory neural network which generates task-specific parameters is explicitly modeled \citesupp[a hypernetwork;][]{ha_hypernetworks_2017,rusu_meta-learning_2019,zhao_meta-learning_2020} can be easily accommodated into our framework, but here for simplicity we implement our top-down modulation model by taking advantage of existing batch normalization layers in our neural networks and consider the gain and shift parameters of these units as well as the synaptic weights and biases of the output layer as our task-specific parameters $\phi$. 
\paragraph{Optimization details.}
We used the symmetric version of our contrastive rule for meta-learning and the Kaiming scheme for parameter initialization \citesupp{he_delving_2015}.
The task-specific learning and evaluation losses are both taken to be the cross-entropy with dataset splits into learning and evaluation data following the setup considered by \citet{finn_model-agnostic_2017}. 
In order to stabilize results, we used Polyak averaging \citesupp{polyak_acceleration_1992} for the meta-parameters to compute final performance. Specifically, we started averaging meta-parameters after a certain number of meta-parameter updates (5 for Omniglot, 50 for miniImageNet). Note that the performance of the non-averaged meta-parameters performs only slightly differently averaged over iterations but is considerably more noisy.
\paragraph{Hyperparameters.}
We perform a comprehensive grid-search over hyperparameters with search ranges and optimal hyperparameters found reported in Tab.~\ref{tab:hps-omni-image}.

\begin{table}[htbp!]
\caption{Few-shot learning of Omniglot characters. We report results obtained with contrastive meta-learning for the synaptic and modulatory models. We present test set classification accuracy (\%)  averaged over 5 seeds $\pm$ std.}
\label{tab:omniglot-sm}
\centering
\begin{tabular}{lllll}
    \toprule
    Method    & 5-way 1-shot & 5-way 5-shot
              & 20-way 1-shot & 20-way 5-shot
    \\
    \midrule
    MAML \citesupp{finn_model-agnostic_2017} & 98.7$^{\pm 0.4}$ & 99.9$^{\pm0.1}$ & 95.8$^{\pm0.3}$ & 98.9$^{\pm0.2}$\\
    First-order MAML \citesupp{finn_model-agnostic_2017}  & 98.3$^{\pm 0.5}$ & 99.2$^{\pm0.2}$ & 89.4$^{\pm0.5}$ & 97.9$^{\pm0.1}$\\
    Reptile \citesupp{nichol_first-order_2018} & 97.68$^{\pm0.04}$     & 99.48$^{\pm0.06}$  & 89.43$^{\pm0.14}$ & 97.12$^{\pm0.32}$\\
    \midrule
    iMAML \citesupp{rajeswaran_meta-learning_2019} & 99.16$^{\pm0.35}$ &99.67$^{\pm0.12}$ &94.46$^{\pm0.42}$ &98.69$^{\pm0.1}$\\
    \midrule
    CML (synaptic) & 98.11$^{\pm0.34}$ & 99.49$^{\pm0.16}$ & 94.16$^{\pm0.12}$ & 98.06$^{\pm0.26}$\\
    CML (modulatory)  & 98.05$^{\pm0.06}$ & 99.45$^{\pm0.04}$  & 94.24$^{\pm0.39}$  & 98.60$^{\pm0.27}$  \\
    \bottomrule
  \end{tabular}
\end{table}

\newpage
\subsection{Few-shot regression in recurrent spiking network}
\label{sec_app:spiking}

\paragraph{Task details.}
We consider a standard sinusoidal 10-shot regression problem. For each task a sinusoid with random amplitude sampled uniformly from $[0.1, 5.0]$ and random phase sampled uniformly from $[0, \pi]$ is generated. 10 data points are drawn uniformly from the range $[-5, 5]$ both for the learning loss and for the evaluation loss.


\paragraph{Architecture details.}
We encode the input with a population of $100$ neurons similar to \citet{bellec_long_2018}. Each neuron $i$ has a Gaussian response field with the mean values $\mu_i$ evenly distributed in the range $[0, 1]$ across neurons and a fixed variance $\sigma^2=0.0002$. The firing probability of each neuron at a single time step is given by $p_i = \exp(\frac{- (\mu_i - z)^2}{ 2 \sigma^2}) $ where $z$ is the input value standardized to the range $[0, 1]$. We generate $20$ time steps for each data point by sampling spikes from a Bernoulli distribution given the firing probabilities $p_i$ for each neuron.

We use a singe-layer recurrent spiking neural network with leaky integrate and fire neurons that follow the time-discretized dynamics with step size $\Delta t = 1.0$ (notation taken from \citet{bellec_solution_2020}):
\begin{align}
    h_j^{t+1} &= \alpha h_j^{t} + \sum_{i \neq j} W_{ji}^{\text{rec}} z_i^{t} + \sum_{i} W_{ji}^{\text{in}} x_i^{t+1} - z_j^{t} v_{\text{th}} \\
    z_j^{t} &= \Theta(h_j^t - v_{\text{th}}) \\
    y_k^{t+1} &= \kappa y_k^{t} + \sum_{j} W_{kj}^{\text{out}} z_j^{t}
\end{align}
where $W^{\text{in}}, W^{\text{rec}}, W^{\text{out}}$ are the synaptic input, recurrent and output weights, $\alpha = \exp(-\frac{\Delta t}{\tau_{\text{hidden}}})$ and $\kappa = \exp(-\frac{\Delta t}{\tau_{\text{out}}})$ are decay factors with $\tau_{\text{hidden}} = \tau_{\text{out}} = 30.0$ , $v_{\text{th}}=0.1$ is the threshold potential, and $\Theta(\cdot)$ denotes the Heaviside step function. The weights are initialized using the Kaiming normal scheme \citesupp{he_delving_2015} and scaled down by a factor of $0.1, 0.01, 0.1$ for $W^{\text{in}}, W^{\text{rec}}, W^{\text{out}}$ respectively.

\paragraph{Optimization details.}
The weights are updated according to e-prop \citesupp{bellec_solution_2020}:
\begin{align}
  \Delta W_{kj}^{\text{out}} &\propto \sum_t (y_k^{*,t} - y_k^{t}) \sum_{t' \leq t} (\kappa^{t-t'} z_j^{t'}) \\
  \Delta W_{ji}^{\text{rec}} &\propto \sum_t ( \sum_k W_{kj}^{\text{out}} (y_k^{*,t} - y_k^{t})) \sum_{t' \leq t} (\kappa^{t-t'} h_j^{t'} \sum_{t'' \leq t'}(\alpha^{t'-t''} z_i^{t''}))\\
  \Delta W_{ji}^{\text{in}} &\propto \sum_t ( \sum_k W_{kj}^{\text{out}} (y_k^{*,t} - y_k^{t})) \sum_{t' \leq t} (\kappa^{t-t'} h_j^{t'} \sum_{t'' \leq t'}(\alpha^{t'-t''} x_i^{t''}))
\end{align}

The loss is computed as the mean-squared error between the target and the prediction given by the average output over time. Note that the output of the network is non-spiking. We add a regularization term to the loss that is computed as the mean squared difference between the average neuron firing rate and a target rate and decrease the learning rate for updating $W^{\text{out}}$ with e-prop by a factor of 0.1.
We use the symmetric version of our contrastive rule to obtain meta-updates.

\paragraph{Comparison methods.}
We compare our method to a standard baseline where both fast parameter updates and slow meta-parameter updates are computed by backpropagating through the synaptic plasticity process (BPTT+BPTT) using surrogate gradients to handle spiking nonlinearities \citesupp{neftci_surrogate_2019}. As this biologically-implausible process is computationally expensive, we restrict the number of update steps on the learning loss to 10 changes as done by prior work \citesupp{finn_model-agnostic_2017}. For a second comparison method, we compute the fast parameter updates using the e-prop update stated above and use backpropagation through 10 e-prop updates for meta-parameter updates (BPTT+e-prop).

\paragraph{Hyperparameters.}
For each method we employ an extensive random hyperparameter search over the search space defined in Tab.~\ref{tab:hps-spiking} using a meta-validation set to select the optimal set of hyperparameters.

\newpage
\subsection{Meta-reinforcement learning}
\label{sec_app:metaRL}

\begin{wrapfigure}[20]{r}{0.38\textwidth}
    \vspace{-0.5cm}
    \centering
    \includegraphics[width=0.33\textwidth]{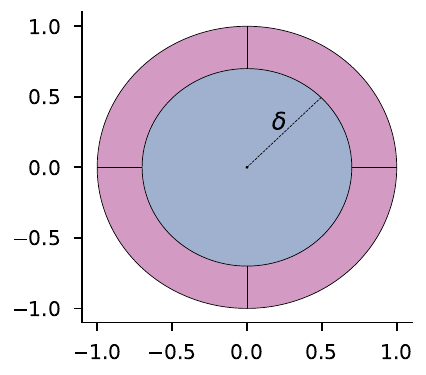}
    \caption{The wheel bandit task tiles the context space into an inner low-reward region (blue) and a high-reward outer rim (purple). Across tasks the radius $\delta$ of the inner low-reward region is varied. The high-reward region is divided into 4 quadrants, depending on which the optimal action changes.}
    \label{fig:wheel}
\end{wrapfigure}

\paragraph{Task details.}
The contextual wheel bandit was introduced by \citet{riquelme_deep_2018} to parametrize the task difficulty of a contextual bandit task in terms of its exploration-exploitation trade-off. Each task consists of a sequence of context coordinates $X$ randomly drawn from the unit circle and a scalar radius $\delta \in [0,1]$. The radius $\delta$ tiles the unit circle into a low-reward region and a high-reward region, see Fig.~\ref{fig:wheel}. If the current context lies within the low-reward region, $\lVert X \rVert \le \delta$, all actions $a \in \{1,2,3,4\}$ return reward $ r \sim \mathcal{N}(1.0, 0.01^2)$ except for the last action $a=5$ which returns $ r \sim \mathcal{N}(1.2, 0.01^2)$ and is thus optimal. If the current context lies within the high-reward region, one of the first four actions is optimal returning a high reward $r \sim \mathcal{N}(50.0, 0.01^2)$, the last arm still returns $ r \sim \mathcal{N}(1.2, 0.01^2)$ and the remaining arms return $ r \sim \mathcal{N}(1.0, 0.01^2)$. Which of the first four actions returns the high reward depends on the quadrant of the high-reward region in which the current context lies. Action 1 is optimal in the upper right quadrant, action 2 in the lower right quadrant, action 3 in the upper left quadrant and action 4 in the lower left quadrant.

\paragraph{Additional results.}
Following previous work \citesupp{garnelo_neural_2018, ravi_amortized_2019}, we treat the contextual wheel bandit as a meta-learning problem. During meta-learning, we sample $M=64$ different wheel tasks $\{\delta_i\}_{i=1}^{M}$, $\delta_i \sim \mathcal{U}(0,1)$ for each of which we sample a sequence of $N=562$ random contexts, random actions and corresponding rewards $\{(X_j, a_j, r_j)\}_{j=1}^N$. We use $512$ observations for the learning loss and $50$ observations for the evaluation loss.
Both the learning $l^\mathrm{learn}$ and the evaluation loss $l^\mathrm{eval}$ are measured as the mean-squared error between observed reward and the predicted value for the corresponding action.
For each meta-learning step, we randomly sample from the $M$ tasks such that specific tasks may be encountered multiple times. After meta-learning, we evaluate the agent online on a long episode with $80000$ contexts and track its cumulative reward relative to the cumulative reward obtained by an agent that chooses its actions at random. As done by \citet{riquelme_deep_2018}, each action is initially explored twice before choosing actions according to the agent's policy. The extended results over more settings for $\delta$ can be seen in Tab.~\ref{tab:bandit-sm}. Results for the NeuralLinear baseline reported here and in the main text are taken from the original paper \citesupp{riquelme_deep_2018}.

\begin{table}[b]
\caption{Cumulative regret on the wheel bandit problem for different values of $\delta$. Values are normalized by the cumulative regret of a uniformly random agent. Averages over 50 seeds $\pm$ s.e.m.}
\label{tab:bandit-sm}
\centering
\begin{tabular}{llllll}
\toprule
$\delta$     & 0.5               & 0.7               & 0.9               & 0.95              & 0.99               \\
\midrule
NeuralLinear \citesupp{riquelme_deep_2018} & 0.95$^{\pm 0.02}$ & 1.60$^{\pm 0.03}$ & 4.65$^{\pm 0.18}$ & 9.56$^{\pm 0.36}$ & 49.63$^{\pm 2.41}$ \\ \midrule
MAML                                    & 0.45$^{\pm 0.01}$ & 0.62$^{\pm 0.03}$ & 1.02$^{\pm 0.76}$ & 1.56$^{\pm 0.62}$ & 15.21$^{\pm 1.69}$ \\ 
CML (synaptic)                          & 0.40$^{\pm 0.02}$ & 0.45$^{\pm 0.01}$ & 0.82$^{\pm 0.02}$ & 1.42$^{\pm 0.07}$ & 12.27$^{\pm 1.02}$ \\ 
CML (modulatory)                        & 0.42$^{\pm 0.01}$ & 0.65$^{\pm 0.03}$ & 1.83$^{\pm 0.11}$ & 3.68$^{\pm 0.59}$ & 16.46$^{\pm 1.80}$ \\ 
\bottomrule
\end{tabular}
\end{table}

\paragraph{Architecture results.}
For all methods including the synaptic consolidation and modulatory network model, we consider as a base a multilayer perceptron with ReLU nonlinearites and two hidden layers with 100 units. For the modulatory network model, each hidden unit is multiplied by a gain and shifted by a bias prior to applying the nonlinearity. The non-meta-learned baseline NeuralLinear we report from \citet{riquelme_deep_2018} additionally uses a Bayesian regression head for each action and applies Thompson sampling to choose actions.

\paragraph{Optimization and evaluation details.}
During online evaluation, we take the greedy action with respect to the predicted expected rewards on each context and store each observation $(X_j, a_j, r_j)$ in a replay buffer. This data is used to train the fast parameters every $t_f$ contexts for $t_s$ steps, where $t_f, t_s$ are hyperparameters tuned for every method. 

\paragraph{Hyperparameters.}
We perform a comprehensive random hyperparameter search for each method with the search space specified in Tab.~\ref{tab:hps-bandit}. Optimal parameters are selected on 5 validation tasks with $\delta=0.95$.



\begin{landscape}
\begin{table}[bt]
\caption{Hyperparameter search space for the supervised meta-optimization experiment on CIFAR-10. For all methods 500 samples were randomly drawn from the search space and Asynchronous HyperBand from ray tune \protect\citesupp{liaw_tune_2018} was used for scheduling with a grace period of 10. Best found parameters are marked in bold.}
\label{tab:hps-cifar}
\centering
\begin{tabular}{@{}l L{8.5cm} L{7cm} @{}}
\toprule
Hyperparameter            & CML                                                          & CG                                                           \\ \midrule
\texttt{batch\_size}      & 500                                                          & 500                                                          \\
$\beta$                   & $\{0.01, 0.03, 0.1, 0.3, 1.0, 3.0, \mathbf{10.0}\}$          & -                                                            \\
$\lambda$                 & $\{10^{-5}, 10^{-4}, 10^{-3}, 10^{-2}, \mathbf{10^{-1}}\}$   & $\{\mathbf{10^{-5}}, 10^{-4}, 10^{-3}, 10^{-2}, 10^{-1}\}$   \\
\texttt{lr\_inner}        & $\{0.0001, \mathbf{0.0003}, 0.001,0.003, 0.01, 0.03, 0.1\}$  & $\{0.0001, 0.0003, \mathbf{0.001}, 0.003, 0.01, 0.03, 0.1\}$ \\
\texttt{lr\_nudged}       & $\{\mathbf{0.0001}, 0.0003, 0.001, 0.003, 0.01, 0.03, 0.1\}$ & -                                                            \\
\texttt{lr\_outer}        & $\{0.0001, 0.0003, 0.001, 0.003, 0.01, \mathbf{0.03}, 0.1\}$ & $\{0.0001, 0.0003, \mathbf{0.001}, 0.003, 0.01, 0.03, 0.1\}$ \\
\texttt{optimizer\_inner} & \{adam, \textbf{sgd\_nesterov\_0.9}\}                        & \{adam, \textbf{sgd\_nesterov\_0.9}\}                        \\
\texttt{optimizer\_outer} & adam                                                         & adam                                                         \\
\texttt{steps\_cg}        & -                                                            & $\{ \mathbf{100}, 500, 1000, 2000 \}$                        \\
\texttt{steps\_inner}     & $\{ 2000, 3000, \mathbf{5000} \}$                            & $\{ \mathbf{2000}, 3000, 5000 \}$                            \\
\texttt{steps\_nudged}    & $\{ 100, \mathbf{200}, 500 \}$                               & -                                                            \\
\texttt{steps\_outer}     & 100                                                          & 100                                                          \\ \bottomrule
\end{tabular}
\\ [1em]
\begin{tabular}{@{}l L{8.5cm} L{7cm} @{}}
\toprule
Hyperparameter            & NSA                                                             & T1T2                                                         \\ \midrule
\texttt{batch\_size}      & 500                                                             & 500                                                          \\
$\lambda$                 & $\{ \mathbf{10^{-5}}, 10^{-4}, 10^{-3}, 10^{-2}, 10^{-1}\}$     & $\{10^{-5}, \mathbf{10^{-4}}, 10^{-3}, 10^{-2}, 10^{-1}\}$   \\
\texttt{lr\_inner}        & $\{0.0001, 0.0003, \mathbf{0.001}, 0.003, 0.01, 0.03, 0.1\}$    & $\{0.0001, 0.0003, \mathbf{0.001}, 0.003, 0.01, 0.03, 0.1\}$ \\
\texttt{lr\_outer}        & $\{0.0001, 0.0003, 0.001, \mathbf{0.003}, 0.01, 0.03, 0.1\}$    & $\{0.0001, 0.0003, \mathbf{0.001}, 0.003, 0.01, 0.03, 0.1\}$ \\
\texttt{nsa\_alpha}       & $ \{ 0.000001, 0.000003, 0.00001, 0.00003, 0.0001, \mathbf{0.0003} \} $ & -                                                            \\
\texttt{optimizer\_inner} & \{adam, \textbf{sgd\_nesterov\_0.9}\}                           & \{adam, \textbf{sgd\_nesterov\_0.9}\}                        \\
\texttt{optimizer\_outer} & adam                                                            & adam                                                         \\
\texttt{steps\_inner}     & $\{ \mathbf{2000}, 3000, 5000 \}$                               & $\{ 2000, \mathbf{3000}, 5000 \}$                            \\
\texttt{steps\_nsa}       & 500                                                             & -                                                            \\
\texttt{steps\_outer}     & 100                                                             & 100                                                          \\ \bottomrule
\end{tabular}
\\ [1em]
\begin{tabular}{@{}l L{10.5cm} L{7cm} @{}}
\toprule
Hyperparameter            & TBPTL                                                          & no-meta                                                           \\ \midrule
\texttt{batch\_size}      & 500                                                          & 500                                                          \\
$\lambda$                 & $\{10^{-5}, \mathbf{10^{-4}}, 10^{-3}, 10^{-2}, 10^{-1}\}$            & $\{10^{-5}, 10^{-4}, 10^{-3}, \mathbf{10^{-2}}, 10^{-1}\}$              \\
\texttt{lr\_inner}        & $\{0.0001, 0.0003, \mathbf{0.001},0.003, 0.01, 0.03, 0.1\}$  & $\{0.0001, 0.0003, \mathbf{0.001}, 0.003, 0.01, 0.03, 0.1\}$ \\
\texttt{lr\_outer}        & $\{0.0001, 0.0003, \mathbf{0.001}, 0.003, 0.01, 0.03, 0.1\}$ & -  \\
\texttt{optimizer\_inner} & \{adam, \textbf{sgd\_nesterov\_0.9}\}                        & \{adam, \textbf{sgd\_nesterov\_0.9}\}                        \\
\texttt{optimizer\_outer} & adam                                                         & -                                                         \\
\texttt{steps\_inner}     & \{2000, \textbf{3000}, 5000\}                                         & 5000                                                         \\
\texttt{steps\_outer}     & 100                                                          & 0                                                          \\ \bottomrule
\end{tabular}

\end{table}
\end{landscape}

\begin{landscape}
\begin{table}[bt]
\caption{Hyperparameter search space for the supervised meta-optimization experiment on MNIST. For all methods 500 samples were randomly drawn from the search space and Asynchronous HyperBand from ray tune \protect\citesupp{liaw_tune_2018} was used for scheduling with a grace period of 10. Best found parameters are marked in bold.}
\label{tab:hps-mnist}
\centering
\begin{tabular}{@{}l L{10.5cm} L{7cm} @{}}
\toprule
Hyperparameter            & CML                                                          & CG                                                           \\ \midrule
\texttt{batch\_size}      & 500                                                          & 500                                                          \\
$\beta$                   & $\{0.01, 0.03, 0.1, 0.3, 1.0, \mathbf{3.0}, 10.0\}$          & -                                                            \\
$\lambda$                 & $\{\mathbf{0.00001}, 0.0001, 0.001, 0.01, 0.1\}$             & $\{0.00001, \mathbf{0.0001}, 0.001, 0.01, 0.1\}$             \\
\texttt{lr\_inner}        & $\{0.0001, \mathbf{0.0003}, 0.001,0.003, 0.01, 0.03, 0.1\}$  & $\{\mathbf{0.0001}, 0.0003, 0.001, 0.003, 0.01, 0.03, 0.1\}$ \\
\texttt{lr\_nudged}       & $\{0.0001, \mathbf{0.0003}, 0.001,0.003, 0.01, 0.03, 0.1\}$  & -                                                            \\
\texttt{lr\_outer}        & $\{\mathbf{0.0001}, 0.0003, 0.001, 0.003, 0.01, 0.03, 0.1\}$ & $\{0.0001, \mathbf{0.0003}, 0.001,0.003, 0.01, 0.03, 0.1\}$  \\
\texttt{optimizer\_inner} & \{\textbf{adam}, sgd\_nesterov\_0.9\}                        & \{\textbf{adam}, sgd\_nesterov\_0.9\}                        \\
\texttt{optimizer\_outer} & adam                                                         & adam                                                         \\
\texttt{steps\_cg}        & -                                                            & $\{ \mathbf{100}, 250, 500, 1000, 2000 \}$                   \\
\texttt{steps\_inner}     & 2000                                                         & 2000                                                         \\
\texttt{steps\_nudged}    & $\{ 100, \mathbf{200}, 500 \}$                               & -                                                            \\
\texttt{steps\_outer}     & 100                                                          & 100                                                          \\ \bottomrule
\end{tabular}
\\ [1em]
\begin{tabular}{@{}l L{10.5cm} L{7cm} @{}}
\toprule
Hyperparameter            & NSA                                                                                  & T1T2                                                         \\ \midrule
\texttt{batch\_size}      & 500                                                                                  & 500                                                          \\
$\lambda$                 & $\{0.00001, \mathbf{0.0001}, 0.001, 0.01, 0.1\}$                                     & $\{\mathbf{0.00001}, 0.0001, 0.001, 0.01, 0.1\}$             \\
\texttt{lr\_inner}        & $\{0.0001, 0.0003, \mathbf{0.001}, 0.003, 0.01, 0.03, 0.1\}$                         & $\{0.0001, 0.0003, 0.001, 0.003, \mathbf{0.01}, 0.03, 0.1\}$ \\
\texttt{lr\_outer}        & $\{0.0001, 0.0003, 0.001, 0.003, \mathbf{0.01}, 0.03, 0.1\}$                         & $\{\mathbf{0.0001}, 0.0003, 0.001, 0.003, 0.01, 0.03, 0.1\}$ \\
\texttt{nsa\_alpha}       & $ \{ 0.000001, 0.000003, 0.00001, 0.00003, 0.0001, 0.0003, \mathbf{0.001}, 0.003 \}$ & -                                                            \\
\texttt{optimizer\_inner} & \{adam, \textbf{sgd\_nesterov\_0.9}\}                                                & \{adam, \textbf{sgd\_nesterov\_0.9}\}                        \\
\texttt{optimizer\_outer} & adam                                                                                 & adam                                                         \\
\texttt{steps\_inner}     & 2000                                                                                 & 2000                                                         \\
\texttt{steps\_nsa}       & 200                                                                                  & -                                                            \\
\texttt{steps\_outer}     & 100                                                                                  & 100                                                          \\ \bottomrule
\end{tabular}
\end{table}
\end{landscape}

\begin{landscape}
\begin{table}[bt]
\begin{center}
\caption{Hyperparameter search space for the few-shot image classification experiments on Omniglot and  miniImageNet. Best found parameters are marked in bold.}
\label{tab:hps-omni-image}
\begin{tabular}{@{}llllll@{}}
\toprule
Hyperparameter            & Omni-5W-1s                           & Omni-5W-5s                           & Omni-20W-1s                           & Omni-20W-5s                           & miniImageNet                          \\ \midrule
\texttt{batch\_size}      & 32                                   & 32                                   & 16                                    & 16                                    & 4                                     \\
$\beta$                   & \{0.01, 0.03,\textbf{0.1}, 0.3, 1.\} & \{0.01, 0.03,\textbf{0.1}, 0.3, 1.\} & \{0.01, \textbf{0.03}, 0.1, 0.3, 1.\} & \{0.01, 0.03, \textbf{0.1}, 0.3, 1.\} & \{\textbf{0.01}, 0.03, 0.1, 0.3, 1.\} \\
$\lambda$                 & \{0.1, 0.25,\textbf{0.5}\}           & \{\textbf{0.1}, 0.25, 0.5\}          & \{0.1, 0.25, \textbf{0.5}\}           & \{0.1, \textbf{0.25}, 0.5\}           & \{0.1, 0.25, \textbf{0.5}\}           \\
\texttt{lr\_inner}        & 0.01                                 & 0.01                                 & 0.01                                  & 0.01                                  & 0.01                                  \\
\texttt{lr\_outer}        & \{\textbf{0.01}, 0.001 \}            & \{\textbf{0.01}, 0.001 \}            & \{\textbf{0.01}, 0.001 \}             & \{\textbf{0.01}, 0.001 \}             & 0.001                                 \\
\texttt{optimizer\_inner} & gd\_nesterov\_0.9                    & gd\_nesterov\_0.9                    & gd\_nesterov\_0.9                     & gd\_nesterov\_0.9                     & gd\_nesterov\_0.9                     \\
\texttt{optimizer\_outer} & adam                                 & adam                                 & adam                                  & adam                                  & adam                                  \\
\texttt{steps\_inner}     & \{50, 100, 150, \textbf{200}\}       & \{50, 100, 150, \textbf{200}\}       & \{50, 100, 150, \textbf{200}\}        & \{50, 100, 150, \textbf{200}\}        & \{50, \textbf{100}, 150, 200\}        \\
\texttt{steps\_nudged}    & \{50, \textbf{100}, 150, 200\}       & \{50, \textbf{100}, 150, 200\}       & \{50, \textbf{100}, 150, 200\}        & \{50, 100, 150, \textbf{200}\}        & \{25, 50, \textbf{75}, 100\}          \\
\texttt{steps\_outer}     & 3750                                 & 3750                                 & 3750                                  & 3750                                  & 25000                                 \\ \bottomrule
\end{tabular}
\end{center}
\end{table}
\end{landscape}

\begin{landscape}
\begin{table}[bt]
\caption{Hyperparameter search space for the sinusoidal fewshot regression experiment. For all methods 500 samples were randomly drawn from the search space and the Asynchronous HyperBand scheduler from ray tune was used with a grace period of 10 \protect\citesupp{liaw_tune_2018}. Best found parameters are marked in bold.}
\label{tab:hps-spiking}
\centering
\begin{tabular}{@{}ll@{}}
\toprule
Hyperparameter                   & CML + e-prop                                                          \\ \midrule
\texttt{activity\_reg\_strength} & $ \{ 10^{-1}, 10^{-2},10^{-3},10^{-4},\mathbf{10^{-5}},10^{-6}\} $    \\
\texttt{activity\_reg\_target}   & $\{0.05, 0.1, \mathbf{0.2}\}$                                         \\
\texttt{batch\_size}             & $\{1,5,\mathbf{10}\}$                                                 \\
$\beta$                          & $\{0.01, 0.03, 0.1, 0.3, 1.0, \mathbf{3.0}, 10.0\}$                   \\
$\lambda$                        & $\{10^{0}, 10^{-1}, \mathbf{10^{-2}},10^{-3},10^{-4},10^{-5},10^{-6} \}$ \\
\texttt{lr\_inner}               & $\{0.0001, 0.0003, \mathbf{0.001}, 0.003, 0.01, 0.03, 0.1\}$          \\
\texttt{lr\_nudged}              & $\{0.0001, 0.0003, 0.001, \mathbf{0.003}, 0.01, 0.03, 0.1\}$          \\
\texttt{lr\_outer}               & $\{0.0001, 0.0003, 0.001, \mathbf{0.003}, 0.01, 0.03, 0.1\}$          \\
\texttt{meta\_batch\_size}       & $\{1, 10, \mathbf{25}\}$                                              \\
\texttt{optimizer\_inner}        & \{\textbf{adam}, sgd\_nesterov\_0.9\}                                 \\
\texttt{optimizer\_outer}        & adam                                                                  \\
\texttt{steps\_inner}            & 500                                                                   \\
\texttt{steps\_nudged}           & $\{50, \mathbf{100}, 200\}$                                           \\
\texttt{steps\_outer}            & 1000                                                                  \\ \bottomrule
\end{tabular}
\\[1em]
\begin{tabular}{@{}llll@{}}
\toprule
Hyperparameter                   & BPTT + e-prop                                                      & BPTT + BPTT                                                        & TBPTL + e-prop                                                     \\ \midrule
\texttt{activity\_reg\_strength} & $ \{ 10^{-1}, \mathbf{10^{-2}},10^{-3},10^{-4},10^{-5},10^{-6}\} $ & $ \{ 10^{-1}, 10^{-2},10^{-3},10^{-4},10^{-5},\mathbf{10^{-6}}\} $ & $ \{ 10^{-1}, 10^{-2},10^{-3},10^{-4},10^{-5},\mathbf{10^{-6}}\} $ \\
\texttt{activity\_reg\_target}   & $\{\mathbf{0.05}, 0.1, 0.2\}$                                      & $\{0.05, \mathbf{0.1}, 0.2\}$                                      & $\{0.05, \mathbf{0.1}, 0.2\}$                                      \\
\texttt{batch\_size}             & $\{\mathbf{1},5,10\}$                                              & $\{1,5,\mathbf{10}\}$                                              & $\{\mathbf{1},5,10\}$                                              \\
\texttt{lr\_inner}               & $\{\mathbf{0.0001}, 0.0003, 0.001, 0.003, 0.01, 0.03, 0.1\}$       & $\{0.0001, \mathbf{0.0003}, 0.001, 0.003, 0.01, 0.03, 0.1\}$       & $\{\mathbf{0.0001}, 0.0003, 0.001, 0.003, 0.01, 0.03, 0.1\}$       \\
\texttt{lr\_outer}               & $\{0.0001, \textbf{0.0003}, 0.001, 0.003, 0.01, 0.03, 0.1\}$       & $\{0.0001, \mathbf{0.0003}, 0.001, 0.003, 0.01, 0.03, 0.1\}$       & $\{0.0001, 0.0003, \mathbf{0.001}, 0.003, 0.01, 0.03, 0.1\}$       \\
\texttt{meta\_batch\_size}       & $\{1, \mathbf{10}, 25\}$                                           & $\{1, \mathbf{10}, 25\}$                                           & $\{\mathbf{1},10, 25\}$                                            \\
\texttt{optimizer\_inner}        & \{adam, \textbf{sgd\_nesterov\_0.9}\}                              & \{adam, \textbf{sgd\_nesterov\_0.9}\}                              & sgd                                                                \\
\texttt{optimizer\_outer}        & adam                                                               & adam                                                               & adam                                                               \\
\texttt{steps\_inner}            & 10                                                                 & 10                                                                 & 500                                                                \\
\texttt{steps\_outer}            & 1000                                                               & 1000                                                               & 1000                                                               \\ \bottomrule
\end{tabular}
\end{table}
\end{landscape}

\begin{landscape}
\begin{table}[bt]
\begin{center}
\caption{Hyperparameter search space for the wheel bandit experiment. For all methods 1000 samples were randomly drawn from the search space. Best found parameters are marked in bold.}
\label{tab:hps-bandit}
\begin{tabular}{@{}llll@{}}
\toprule
Hyperparameter             & CML (synaptic)                                               & CML (modulatory)                                             & MAML                                                         \\ \midrule
\texttt{batch\_size}       & 512                                                          & 512                                                          & 512                                                          \\
$\beta$                    & $\{0.01, 0.03, 0.1, \mathbf{0.3}, 1.0, 3.0, 10.0\}$          & $\{0.01, 0.03, 0.1, 0.3, 1.0, 3.0, \mathbf{10.0}\}$          & -                                                            \\
$\lambda$                  & $\{10^{-6}, 10^{-5}, \dots, \mathbf{10^{3}}\}$               & -                                                            & -                                                            \\
\texttt{lr\_inner}         & $\{\mathbf{0.0001}, 0.0003, 0.001,0.003, 0.01, 0.03, 0.1\}$  & $\{0.0001, \mathbf{0.0003}, 0.001, 0.003, 0.01, 0.03, 0.1\}$ & $\{0.0001, 0.0003, 0.001, 0.003, \mathbf{0.01}, 0.03, 0.1\}$ \\
\texttt{lr\_nudged}        & $\{0.0001, 0.0003, 0.001, 0.003, 0.01, \mathbf{0.03}, 0.1\}$ & $\{\mathbf{0.0001}, 0.0003, 0.001, 0.003, 0.01, 0.03, 0.1\}$ & -                                                            \\
\texttt{lr\_online}        & $\{\mathbf{0.0001}, 0.0003, 0.001,0.003, 0.01, 0.03, 0.1\}$  & $\{\mathbf{0.0001}, 0.0003, 0.001, 0.003, 0.01, 0.03, 0.1\}$ & $\{0.0001, 0.0003, \mathbf{0.001}, 0.003, 0.01, 0.03, 0.1\}$ \\
\texttt{lr\_outer}         & $\{0.0001, 0.0003, 0.001, 0.003, 0.01, \mathbf{0.03}, 0.1\}$ & $\{0.0001, 0.0003, 0.001, 0.003, 0.01, \mathbf{0.03}, 0.1\}$ & $\{0.0001, 0.0003, 0.001, 0.003, 0.01, 0.03, \mathbf{0.1}\}$ \\
\texttt{meta\_batch\_size} & $\{\mathbf{8}, 16, 32\}$                                     & $\{8, \mathbf{16}, 32\}$                                     & $\{8, 16, \mathbf{32}\}$                                     \\
\texttt{optimizer\_inner}  & adam                                                         & \{adam, \textbf{sgd}, sgd\_nesterov\_0.9\}                   & sgd                                                          \\
\texttt{optimizer\_online} & adam                                                         & \{adam, sgd, \textbf{sgd\_nesterov\_0.9}\}                   & sgd                                                          \\
\texttt{optimizer\_outer}  & adam                                                         & \{\textbf{adam}, adamw\}                                     & adam                                                         \\
\texttt{steps\_inner}      & $\{ 100, \mathbf{250}, 500, 1000 \}$                         & $\{ 100, 250, 500, \mathbf{1000} \}$                         & $\{ 5, \mathbf{10}, 50, 100 \}$                              \\
\texttt{steps\_nudged}     & $\{ \mathbf{100}, 250, 500, 1000 \}$                         & $\{ \mathbf{100}, 250, 500, 1000 \}$                         & -                                                            \\
\texttt{steps\_outer}      & 6400                                                         & 6400                                                         & 6400                                                         \\
$t_f$                      & $\{ 20, \mathbf{50}, 100 \}$                                 & $\{ 20, 50, \mathbf{100} \}$                                 & $\{ 20, 50, \mathbf{100} \}$                                 \\
$t_s$                      & $\{ 50, 100, \mathbf{250}, 500, 1000 \}$                     & $\{ 50, 100, 250, \mathbf{500}, 1000 \}$                     & $\{ 5, 10, 50, \mathbf{100} \}$                              \\ \bottomrule
\end{tabular}
\end{center}
\end{table}
\end{landscape}

\section{Additional details}

\subsection{Compute resources}
We used Linux workstations with 2 Nvidia RTX 3090 and 4 Nvidia RTX 3070 GPUs during development and conducted hyperparameter searches and larger experiments on up to 3 Linux servers with 8 Nvidia RTX 3090 GPUs with 24 GB memory each. Most of the experiments and corresponding hyperparameter scans presented take less than a few hours to complete on a single server. The more challenging recurrent spiking network and miniImageNet experiments require approximately 2-5 days to complete. During development we conducted many more hyperparameter scans over the course of several months.

\subsection{Software and libraries}
For the results produced in this paper we relied on free and open-source software.
We implemented our experiments in Python using PyTorch \citesupp[][BSD-style license]{paszke_pytorch_2019}, JAX \citesupp[][Apache License 2.0]{bradbury_jax_2018}, Ray \citesupp[][Apache License 2.0]{liaw_tune_2018} and NumPy \citesupp[][BSD-style license]{harris_array_2020}. For the visual few-shot classification dataset splits we used the Torchmeta library \citesupp[][MIT license]{deleu_torchmeta_2019} and for the generation of plots we used matplotlib \citesupp[][BSD-style license]{hunter_matplotlib_2007}.

\subsection{Datasets}
We conducted our experiments with the public domain datasets Boston housing \citesupp[][MIT License]{harrison_jr_hedonic_1978}, MNIST \citesupp[][GNU GPL v3.0]{lecun_mnist_1998}, Omniglot \citesupp[][]{lake_one_2011} (MIT license), miniImageNet \citesupp[][]{ravi_optimization_2016} (custom MIT/ImageNet license) and CIFAR-10 (MIT license) \citesupp{krizhevsky_learning_2009}.


\bibliographystylesupp{unsrtnat}
\bibliographysupp{references}

\end{document}